\newcommand{\bx}{\mathbf{x}}
\newcommand{\by}{\mathbf{y}}
\newcommand{\bz}{\mathbf{z}}
\newcommand{\arms}{K}
\newcommand\pol{\ensuremath{\boldsymbol{\pi}}}
\newcommand \nbd [1] {\mathrm{nbd}\left(#1\right)}
\newcommand \defn {\mathrel{\triangleq}}
\DeclareMathOperator*{\argmax}{arg\,max}
\DeclareMathOperator*{\argmin}{arg\,min}
\newcommand \norm[1]{\left\|#1\right\|}
\newcommand \KL[2] {D_{\mathrm{KL}}\left( #1 ~\middle\|~ #2\right)}
\newcommand{\numarm}{{K}}
\newcommand{\indicator}{\mathds{1}}
\newcommand{\sumk}{\sum_{k=1}^K}
\newcommand{\suml}{\sum_{l=1}^L}
\newcommand{\pcone}{\cCbar^{\circ}}
\newcommand\intr[1]{\mathrm{int}\left( #1 \right)}
\newcommand{\infx}[1]{#1_{\mathrm{inf}}}
\newcommand{\altM}{\Tilde{M}}
\newcommand{\altij}{\Lambda_{ij}(M)}
\newcommand{\opt}{\bpi^{\star}}
\newcommand{\trueM}{M}
\newcommand{\reals}{\mathbb{R}}
\newcommand{\policy}{\bomega}
\newcommand{\conf}{\Tilde{M}}
\newcommand{\basiscone}{W}
\newcommand{\simplex}{\ensuremath{\Delta}_K}
\newcommand\supp[1]{\mathrm{Supp}\left(#1\right)}
\newcommand{\subdiff}{\ensuremath{H_{M}(\policy, r)}}
\newcommand{\subdiffhat}{\ensuremath{H_{\Mhat}(\policy, r)}}
\newcommand{\Mhat}{\ensuremath{\Hat{M}}}
\newcommand \set[1]{\ensuremath{\mathbb{#1}}}
\newcommand \ch[1]{\mathrm{ch}\left\{ #1 \right\}}
\newcommand{\tol}{\mathrm{tol}}
\newcommand{\stopping}{\tau_{\delta}}
\newcommand{\vij}{\bv(i,j)}
\newcommand{\yij}{\by(i,j)}
\newcommand{\tildeM}{\Tilde{M}}
\newtheorem{theorem}{Theorem}
\newtheorem{corollary}{Corollary}
\newtheorem{lemma}{Lemma}
\newtheorem{proposition}{Proposition}
\newtheorem{definition}{Definition}
\newtheorem{fact}{Fact}
\newtheorem{remark}{Remark}
\newtheorem{example}{Example}
\newtheorem{assumption}{Assumption}
\newtheorem*{rep@theorem}{\rep@title}
\newcommand{\newreptheorem}[2]{%
	\newenvironment{rep#1}[1]{%
		\def\rep@title{\textbf{#2} \ref{##1}}%
		\begin{rep@theorem}}%
		{\end{rep@theorem}}}
\DeclareRobustCommand{\bigO}{\text{\usefont{OMS}{cmsy}{m}{n}O}}
\newif\ifdoublecol
\DeclareRobustCommand{\bigO}{%
  \text{\usefont{OMS}{cmsy}{m}{n}O}%
}
\tikzset{
   container/.style = {rectangle, rounded corners, draw=yellow, dashed,
fit=#1, inner sep=6mm, node contents={}},
circle-label/.style = {circle, draw}
        }
\tikzset{box/.style={draw, diamond, thick, text centered, minimum height=0.5cm, minimum width=1cm, text width=0.9cm}}
\tikzset{line/.style={draw, thick, -latex'}}
\newcommand{\framework}{{\color{blue}\ensuremath{\mathsf{FraPPE}}}}
\def\bd{{\sf bd}}
\def\bh{\mathbf{h}}
\def\bv{\mathbf{v}}
\def\bx{\mathbf{x}}
\def\by{\mathbf{y}}
\def\bz{\mathbf{z}}
\def\btheta{{\boldsymbol\theta}}
\def\bmu{{\boldsymbol\mu}}
\def\bpi{{\boldsymbol\pi}}
\def\brho{{\boldsymbol\rho}}
\def\bomega{{\boldsymbol\omega}}
\def\cCbar{\bar{\mathcal{C}}}
\def\dualcone{\bar{\mathcal{C}}^+}
\def\cC{\mathcal{C}}
\def\cG{\mathcal{G}}
\def\cL{\mathcal{L}}
\def\cM{\mathcal{M}}
\def\cO{\mathcal{O}}
\def\cP{\mathcal{P}}
\def\cS{\mathcal{S}}
\def\cT{\mathcal{T}}
\def\cU{\mathcal{U}}
\def\cV{\mathcal{V}}
\def\intr{{\sf int}}
\newcommand{\bluecheck}{}%
\DeclareRobustCommand{\bluecheck}{%
  \tikz\fill[scale=0.4, color=blue]
  (0,.35) -- (.25,0) -- (1,.7) -- (.25,.15) -- cycle;%
}
\newcommand{\xmark}{\ding{55}}
\def\cC{\mathcal{C}}
\def\cG{\mathcal{G}}
\def\cL{\mathcal{L}}
\def\cM{\mathcal{M}}
\def\cO{\mathcal{O}}
\def\cP{\mathcal{P}}
\def\cS{\mathcal{S}}
\def\cT{\mathcal{T}}
\def\cU{\mathcal{U}}
\def\cV{\mathcal{V}}
\def\mE{\mathbb{E}}
\def\mP{\mathbb{P}}
\def\mR{\mathbb{R}}
\def\cP{\mathcal{P}}
\def\vectle{\preceq_{\cCbar}} 
\def\vectl{\prec_{\cCbar}} 
\def\vectdom{\prec_{\cCbar}}
\def\vectneq{\npreceq_{\cCbar}}
\def\pftrue{\mathcal{P}^{\ast}}
\def\pfestm{\hat{\cP}_t}
\def\armset{[K]}
\def\eqdef{\triangleq}
\def\mestm{\hat{M}_{t}}
\def\instset{\mathcal{M}}
\def\ppolset{\Pi^{\text{P}}}
\def\ppolbasis{\Pi^{\text{P,basis}}}
\def\pfsize{\vert \pftrue \vert}
\newcommand{\muestm}[3]{\hat{\mu}^{(#1)}_{#2,#3}}
\newcommand{\mutrue}[2]{\mu^{(#1)}_{#2}}
\newcommand{\cc}[1]{\text{ch}\left({#1}\right)}
\newcommand{\kl}[2]{D_{\mathrm{KL}}\left( #1 ~\middle\|~ #2\right)}
\newcommand{\altset}[1]{\Lambda\left(#1\right)}
\newcommand{\as}[1]{\textcolor{violet}{#1}}
\definecolor{Bleu}{HTML}{20bd89}
\definecolor{Red}{HTML}{f14e0c}
\title{FraPPE: Fast and Efficient\\ Preference-based Pure Exploration}
\author{%
  Udvas Das \\
  Univ. Lille, Inria\\
  CNRS, Centrale Lille\\
  UMR 9189 - CRIStAL, Lille, France\\
  \texttt{udvas.das@inria.fr} \\
  \And
  Apurv Shukla \\
  Department of EECS, University of Michigan\\
  Ann Arbor, MI, USA\\
  \texttt{apurv.shukla@umich.edu} \\
  \And
  Debabrota Basu \\
  Univ. Lille, Inria\\
  CNRS, Centrale Lille\\
  UMR 9189 - CRIStAL, Lille, France\\
  \texttt{debabrota.basu@inria.fr} \\
}
\pgfplotsset{compat=1.18}
\begin{document}

\maketitle

\setcounter{parttocdepth}{4}
\doparttoc 
\faketableofcontents 
\begin{abstract}
    Preference-based Pure Exploration (PrePEx) aims to identify with a given confidence level the set of Pareto optimal arms in a vector-valued (aka multi-objective) bandit, where the reward vectors are ordered via a (given) preference cone $\cC$. Though PrePEx and its variants are well-studied, there does not exist a \textit{computationally efficient} algorithm that can \textit{optimally} track the existing lower bound~\citep{prefexp} for arbitrary preference cones. We successfully fill this gap by efficiently solving the minimisation and maximisation problems in the lower bound. First, we derive three structural properties of the lower bound that yield a computationally tractable reduction of the minimisation problem. Then, we deploy a Frank-Wolfe optimiser to accelerate the maximisation problem in the lower bound. Together, these techniques solve the maxmin optimisation problem in $\mathcal{O}(KL^{2})$ time for a bandit instance with $K$ arms and $L$ dimensional reward, which is a significant acceleration over the literature. We further prove that our proposed PrePEx algorithm, \framework, asymptotically achieves the optimal sample complexity. Finally, we perform numerical experiments across synthetic and real datasets demonstrating that \framework~ achieves the lowest sample complexities to identify the exact Pareto set among the existing algorithms.
\end{abstract}
\section{Introduction}\vspace{-.5em}
Randomised experiments are at the core of statistically sound evaluation and selection of public policies~\citep{banerjee2020theory}, clinical trials~\citep{altman1990randomisation}, material discovery~\citep{raccuglia2016machine}, and advertising strategies~\citep{kohavi2015online}. They allocate a set of participants to different available choices, observe corresponding outcomes, and choose the optimal one with statistical significance. As these static and randomised experimental designs demand high number of samples, it has invoked a rich line of research in adaptive experiment design~\citep{hu2006theory,foster2021deep}, also known as active (sequential) testing~\citep{yu2006active,naghshvar2013active,kossen2021active} or \textit{pure exploration} problem~\citep{even2006action,bubeck2009pure,carlsson2024pure}. Pure exploration problems sequentially allocate participants to different available choices while looking into past allocations and outcomes. The aim is to leverage the previous information gain and structure of the experiments and observations to identify the optimal choice with as few number of interactions as possible. Even in this age of data, pure exploration draws significant attention in diverse settings, such as material design~\citep{gopakumar2018multi}, clinical trials~\citep{villar2018response}, medical treatments~\citep{murphy2005experimental}, where each observation involves a human participant or is costly due to the involved experimental infrastructure.

In this paper, we focus on multi-armed bandit formulation of pure exploration~\citep{thompson1933likelihood,lattimore_szepesvari_2020}, which is a theoretically-sound and popular framework for sequential decision-making under uncertainty, and the archetypal setup of Reinforcement Learning (RL)~\citep{sutton}. In bandits, a learner encounters an instance of $K$ decisions (or arms). At each time step $t$, a learner chooses one of these $K$-arms $A_t$, and obtains a noisy feedback $R_t$ (or reward or outcome) from the reward distribution corresponding to that arm, i.e. $\nu_{A_t}$. Note that each reward distribution has a fixed mean $\bmu_a$, which is unknown to the learner. 
The goal of the learner is to adaptively select the arms in order to identify the arm with the highest mean reward with a statistical confidence level and the least
number of interactions. This is popularly known as the fixed-confidence Best Arm Identification (BAI)
problem~\citep{jamieson2014best,garivier2016optimal,soare2014best,degenne2020gamification,wang2021fast}, which is a special case of pure exploration.

\textbf{PrePEx: Motivation.} The classical BAI and pure exploration literature, like most of the traditional RL literature, focuses on scalar reward, i.e. a single or scalarised objective. 
But this does not reflect the reality as often decisions have multiple and often conflicting outcomes~\citep{gopakumar2018multi,wei2023fair}, and we might need to consider all of them before selecting the `optimal' one. 
For example, in clinical trial, the goal is not only to choose the most effective dosage of a drug but also to ensure it is below a certain toxicity level~\citep{reda2020machine}. 
Another example is a phase II vaccine clinical trial, known as COV-BOOST~\citep{munro2021safety}, that measures the immunogenicity indicators (e.g. cellular response, anti-spike IgG and NT50) of different Covid-19 vaccines while applied as a booster (third dose). 
It is hard for a computer scientist to design a scalarised reward out of these indicators (known as the reward design problem), and different experts might have different preferences over them. Even in high-data regimes, proper reward modelling has emerged as a hard problem in Reinforcement Learning under Human Feedback (RLHF) literature~\citep{scheid2024optimal}.

These evidences motivates us to study a multi-objective (aka vector-valued) bandit problem, where the reward feedback at every step is an $L$-dimensional vector corresponding to $L$-objectives of the learner. Additionally, the learner has access to a set of incomplete preferences over these objectives that together form a preference cone $\cC$~\citep{jahn2009vector,lohne2011vector}. For example, polyhedral cones, positive orthant cone ($\mathbb{R}^L_{+}$), and customised asymmetric cones are used in aircraft design optimisation~\citep{mavrotas2009effective}, portfolio optimization balancing return, risk, and liquidity~\citep{ehrgott2005multicriteria}, and climate policy optimization with multiple national goals~\citep{keeney1993decisions}, respectively. In these cases, there might not exist an optimal arm but a \textit{set of Pareto optimal arms} $\cP^{*}\subseteq \{1,\ldots,K\}$~\citep{drugan2013designing,auer2016pareto}. 

Given the preference cone $\cC$ and sampling access to the bandit instance, \textit{the learner aims to exactly identify the whole Pareto optimal set of arms with a confidence level at least $1-\delta\in [0,1)$} while hoping to use as less samples as possible. 
This problem is known as the \textit{Preference-based Pure Exploration} (\textbf{PrePEx})~\citep{prefexp}, or Pareto set identification~\citep{auer2016pareto} or vector optimisation with bandit feedback~\citep{ararat2023vector}. 

\textbf{PrePEx: Related Works.} There are mainly three types of algorithms proposed for PrePEx: \textit{successive arm elimination}, \textit{lower bound tracking}, and \textit{posterior sampling-based}. 
Successive arm elimination algorithms use allocations to collect samples from arms and eliminate them one by one when enough evidence regarding their suboptimality is gathered~\citep{auer2016pareto,ararat2023vector,korkmaz2023bayesian,karagozlu2024learning}. But all of them can identify only an approximation of the Pareto optimal set. The same limitation applies for confidence bound based algorithms~\citep{kone2023adaptive}. In contrast, the lower bound tracking algorithms first derive a lower bound on the number of samples required to solve PrePEx as a max-min optimisation problem. Then following the Track-and-Stop framework~\citep{kaufmann2016complexity}, at every step, these algorithms plug in the empirical estimates of means of the objectives (obtained via previous samples) in the lower bound optimisation problem and obtains a candidate allocation policy. The allocation leads to a choice of the arm at every step till the algorithm is confident enough to identify the correct set of arms. But this optimisation problem is challenging, and only recently, \cite{prefexp} proposed an explicit lower bound for any arbitrary preference cone. Since their lower bound contains optimisation over a non-convex set, they construct a convex hull of the non-convex variables and use it to propose a Track-and-Stop algorithm, PreTS. But it is computationally infeasible to run PreTS for the benchmarks in the literature. \cite{garivier2024sequential} focuses only on the right-orthant as the cone and propose a specific optimization method by adding and removing points from the Pareto set. Still, it is computationally inefficient ($\mathcal{O}(K^L)$ runtime) to run it on existing benchmarks~\cite{kone2024paretosetidentificationposterior}. 
The complexity of this optimisation problem has motivated~\cite{kone2024paretosetidentificationposterior} to avoid it and propose a posterior sampling-based algorithm. Further discussion on related works is in Appendix~\ref{app:related}.

\begin{table}[t!]
\begin{tabular}{c c c c}
\toprule
 Methods & Computational Complexity\footnote{We ommit the complexity of calculating Pareto set, i.e. $\mathcal{O}(K(\log K)^{\max\{1,L-2\}})$, as it is same for every method~\citep{kung}.} & Preference Cone & Beyond Gaussian\\ 
\midrule
\cite{garivier2024sequential} & $\bigO\left(K^{L+1} L^3\right)$  & Positive orthant &{\color{red}\xmark}\\
\cite{kone2024paretosetidentificationposterior}  & $\mathcal{O}\left({K^2 L \min\{K,L\}}\right)$  & Positive orthant & {\color{red}\xmark}\\ 
\cite{prefexp} & Intractable & Arbitrary cone & \bluecheck\\
\framework~(This paper) & $\mathcal{O}\left({K L \min\{K,L\}}\right)$ & Arbitrary cone & \bluecheck\\ 
\bottomrule
\end{tabular}\vspace*{.5em}
\caption{Comparison of asymptotically optimal algorithms for solving PrePEx.}\label{tab:comparison}\vspace*{-2em}
\end{table}

\textbf{Contributions.} We affirmatively address (Table \ref{tab:comparison}) an extension of the open problem~\citep{garivier2024sequential}:  
\textit{Can we design a {computationally efficient} (polynomial in both $K$ and $L$) and {statistically optimal} PrePEx algorithms beyond Gaussian rewards and independent objectives, and for arbitrary preference cones?} 

(1) \textbf{Tractable Optimisation:} We leverage the structure of the PrePEx problem to reduce the intractable $\sup-\inf-\inf-\inf$ optimisation problem in the existing lower bound to a tractable $\max-\min-\min-\min$ problem. This shows the redundancy of the convex hull approach of PreTS and solve the inner minimisation problems in $\cO(KL\min\{K,L\})$ time. In practice, $K\gg L$, and thus, this is a significant improvement over the existing optimisation-based~\citep{garivier2024sequential} algorithms exhibiting $\bigO(K^L)$ computational complexity. 
(2) \textbf{Asymptotically Optimal and Efficient Algorithm:} We further leverage the Frank-Wolfe algorithm~\citep{wang2021fast} to solve the outer maximisation problem for exponential family distributions and a relaxed stopping criterion to propose \framework. We prove a non-asymptotic sample complexity upper bound for \framework{} and shows it to be asymptotically optimal as $\delta\rightarrow 0$. 
(3) \textbf{Empirical Performance Gain:} We conduct numerical experiments across synthetic datasets with varying correlations between objectives and a real-life dataset (COV-BOOST). The results show that \framework{} enjoys the lowest empirical stopping time ($\sim$5-6X lower) and the lowest empirical error uniformly over time among all the baselines. 

In brief, \textit{we propose the first computationally efficient and asymptotically optimal algorithm, \framework, that works for arbitrary preference cones and exponential family distributions while achieving the lowest sample complexity and probability of error for identifying the exact set of Pareto optimal arms}.

\vspace*{-0.3em} \section{Problem Formulation: Preference-based Pure Exploration (PrePEx)}\label{sec:problem}
First, we formally state the preference based pure exploration problem under the fixed-confidence setting and elaborate the relevant notations.

\textbf{Notations.} For any $n \in \mathbb{N}$, $[n]$ denotes the set $\{1,2,\ldots,n\}$. $\bz_{\ell}$ refers to the $\ell^{th}$ component of a vector $\bz$. 
We use $\Vert \cdot \Vert_{p}$ to denote the $\ell_{p}$-norm of a vector. $\mathrm{vect}(A)$ is the vectorized version of matrix $A$. $\Delta_K$ represents the simplex on $[K]$ and $\kl{P}{Q}$ refers to the KL-divergence between two absolutely continuous distributions $P$ and $Q$. $\ch{X}$ means convex hull of a set $X$.

\textbf{Problem Formulation.} In PrePEx, we deal with a multi-objective bandit problem. A bandit environment consists of $K$ arms and each arm yields $L$-dimensional reward corresponding to the $L$ objectives. Specifically, each arm $a \in [K]$ has a reward distribution $\nu_a$ over $\mR^L$ with unknown mean $\bmu_a \in \mR^{L}$. Thus, a bandit environment is represented by the vectors of mean rewards $\{\bmu_i\}_{i=1}^{K}$, or alternatively, a matrix $\trueM \in \mR^{L \times K}$ such that its $a^{\mathrm{th}}$ column is $\bmu_a$. 

At each time $t \in \mathbb{N}$, the learner pulls an arm $A_t \in [K]$ and observes an $L$-dimensional reward vector $R_t$ sampled from $\nu_{A_t}$. In the simplest setting of pure exploration, i.e. Best Arm Identification (BAI), we have $L=1$ and the learner focuses on finding the best arm, i.e. the arm with highest mean~\citep{garivier2016optimal}. 
In more general settings with $L=1$, the learner aims to find a policy $\pi \in \Delta_K$ indicating the proportion to choose the arms to maximize the expected reward obtained from the environment~\citep{carlsson2024pure}. 

\textit{Pareto Optimality and Preference Cones.} Since we have mean vectors ($L>1$), we need a set of preferences over the objectives to compare the means rewards of arms or policies. Thus, following the vector optimization literature~\citep{jahn2009vector,lohne2011vector,ararat2023vector}, we assume that the learner has access to an ordering cone $\cC$. 
\begin{definition}[Ordering Cone]
A set $\cC \subseteq \mR^{L}$ is a \textbf{cone} if $\bv \in \cC$ implies that $\alpha \bv \in \cC$ for all $\alpha \ge 0$. A \textbf{solid cone} has a non-empty interior, i.e. $\mathrm{int}(\cC) \neq \emptyset$. A \textbf{pointed cone} contains the origin. A closed convex cone that is both pointed and solid is called an \textbf{ordering cone} (aka a proper cone). 
\end{definition}
Following PrePEx literature~\citep{ararat2023vector,karagozlu2024learning,prefexp}, we focus on the \textbf{polyhedral ordering cone} that induces a set of partial orders on vectors in $\mR^L$.
\begin{definition}[Polyhedral Ordering Cone]
A cone $\cC$ is a \textbf{polyhedral ordering cone} if $\cC \triangleq \{x \in \mR^{L} \ \vert \basiscone \bx \ge 0\}$, where $\basiscone \in \mR^{K \times L}$ with row-transposes $\basiscone_i^\top$ representing rays spanning the cone.  
\end{definition}
$\basiscone$ is called the half-space representation of $\cC$. An example of polyhedral cone is $\cC_{\pi/4} \triangleq \{(r \cos\theta, r \sin\theta) \in \mR^2 \mid r \geq 0 \wedge \theta \in [0,\pi/4]\}$, i.e., all the $2$-dimensional vectors that makes an angle less than $\pi/4$ with the $x$-axis. The commonly used cone in the Pareto set identification literature~\citep{kone2023adaptive,kone2023bandit,garivier2024sequential} is the positive orthant $\mR_+^L$, i.e. $\cC_{\pi/2}$.

To avoid any redundancy~\citep{ararat2023vector,prefexp}, we assume that $\basiscone$ is full row-rank and normalized, i.e. $\Vert \basiscone_i\Vert_2 = 1$. Hereafter, we call them \textit{preference cones}, and the vectors in the cone as the \textit{preferences}. 
For simplicity, in this paper, we consider that the preferences are normalised, i.e. $\bz \in \cC \cap \mathbb{B}(1) \triangleq \cCbar$. We now define the partial orders w.r.t. a preference cone $\cCbar$.


\begin{definition}[Partial Order]
For every $\bmu,\bmu' \in \mR^{L}, \bmu \vectle \bmu'$ if $\bmu \in \bmu'+\cCbar$ and $\bmu \vectl \bmu'$ if $\bmu \in \bmu'+\intr(\cCbar)$. Alternatively, $\bmu \vectle \bmu'$ is equivalent to $\bz^{\top}(\bmu - \bmu') \leq 0, \forall \bz \in \dualcone$. Here, $\dualcone$ is the dual cone of $\cCbar$.
\end{definition}
The partial order induced by $\cCbar$ induces further order over the set of arms $[K]$. Specifically, given any two arms $i,j \in [K]$: (i) arm $j$ \textit{weakly dominates} arm $i$ iff $\bmu_{i}\vectle \bmu_{j}$, (ii) arm $j$ \textit{dominates} arm $i$ iff $\bmu_{i}\prec_{\cCbar\setminus \{0\}} \bmu_{j}$, (iii) arm $j$ \textit{strongly dominates} arm $i$ iff $\bmu_{i} \vectl \bmu_{j}$. 

\begin{definition}[Pareto Optimal Set and Policies]
An arm $i \in \armset$ is \textbf{Pareto optimal} if it is not dominated by any other arm w.r.t. the cone $\cCbar$. The \textbf{Pareto optimal set} $\cP^\ast$ is the set of all Pareto optimal arms. The set of \textbf{Pareto optimal policies} (also known as \textbf{Pareto front}) $\ppolset \subset \simplex$ is the set of non-dominated distributions having support on subsets of Pareto optimal arms. 
\end{definition}

In Figure~\ref{fig:snw_preference}, we show the Pareto optimal arms for the SNW dataset with $K=256$ and $L=2$~\citep{zuluaga2012computer}. For the preference cone $\cC_{\pi/2}$, the {\color{blue}blue} points represent the mean rewards of the Pareto optimal arms, whereas for $\cC_{2\pi/3}$, the Pareto optimal arms correspond to {\color{green}green} points (the blue and red lines show respective Pareto fronts). This shows how preference cones affect Pareto optimality.

\begin{wrapfigure}{l}{6cm}
\centering\vspace*{-1em}
\includegraphics[width=0.4\textwidth]{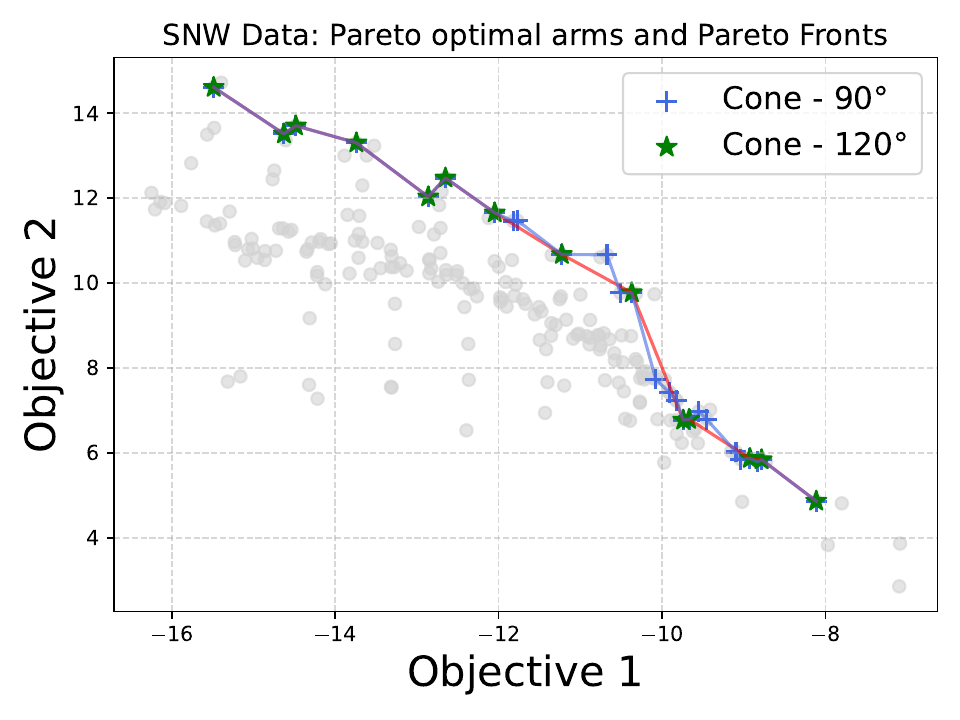}
\caption{Effect of preference cones on Pareto optimal arms in SNW dataset.}\label{fig:snw_preference}
\end{wrapfigure}

Generally, PrePEx aims to identify all the distributions over the arms, i.e. the policies $\bpi \in \Delta_{K}$, lying on the \textit{Pareto optimal policy set}. Given a preference cone $\cCbar$ and the bandit instance $\trueM$, a learner can solve a vector optimization problem to exactly identify a policy on the Pareto front, i.e. a policy whose mean reward is non-dominated by any other policy w.r.t. $\cCbar$. 
Mathematically, a policy $\opt$ belongs to the Pareto front $\ppolset$ if
\begin{equation}\label{eqn:vector-opt}
\opt \in \argmax\nolimits_{\bpi \in \simplex} \ \trueM \bpi \ \text{with respect to} \ \cCbar \,.
\end{equation}


In PrePEx, we address the problem in
 Equation~\eqref{eqn:vector-opt}, when the mean matrix $\trueM$ is unknown \textit{a priori} but bounded, i.e. $\| M\|_{\infty,\infty} \in [-M_{\max},M_{\max}]$. We denote all such instances by $\instset$. Our goal is to exactly identify with as few samples as possible, all the policies lying on the Pareto front using the $L$-dimensional reward feedback from the arms pulled. 

\begin{definition}[$(1-\delta)$-correct PrePEx] 
An algorithm for Preference-based Pure Exploration (PrePEx) is said to be $(1-\delta)$ correct if with probability $1-\delta$, it returns the Pareto  front $\ppolset$.
\end{definition}
\textbf{Lower Bound.} \cite{prefexp} quantify the minimum number of samples that an $(1-\delta)$-correct PrePEx algorithm requires to identify the Pareto front as an optimisation problem.

\begin{theorem}[Lower Bound~\citep{prefexp}]
\label{thm:lower-bound}
Given a bandit model $\trueM \in \instset$, a preference cone $\cCbar$, and a confidence level $\delta \in (0,1)$, the expected stopping time of any $(1-\delta)$-correct PrePEx algorithm, to identify the Pareto optimal policy set is
$\mE[\stopping] \ge \cT_{\trueM,\cCbar}\log\left(\frac{1}{2.4\delta}\right)\,,$ 
where the expectation is taken over the stochasticity of both the algorithm and the bandit instance. 
Here, $\cT_{M,\cCbar}$ is called the \textbf{characteristic time} and is expressed as
\begin{equation}\label{eqn:characteristic-time}
\hspace*{-2em}\left(\cT_{\trueM, \cCbar}\right)^{-1} \eqdef \sup_{\bomega \in \simplex} \inf_{\substack{\bpi \in \simplex\setminus \{\opt\},\opt \in \ppolset(\trueM,\cCbar)}} \inf_{\tilde{M} \in \partial\Lambda(M)} \inf_{\bz \in \cCbar} \sum\nolimits_{k=1}^{K} \bomega_k  \kl{\bz^{\top}M_{k}}{\bz^{\top}\tilde{M}_{k}}.
\end{equation}
Here, $\partial\altset{M} \eqdef \bigcup\nolimits_{\substack{\bpi \in \simplex\setminus \{\opt\}\\\opt \in \ppolset(\trueM,\cCbar)}} \left\{\tilde{M} \in \instset\setminus\{M\}: \exists \ \bz \in \dualcone, \ \langle \mathrm{vect}\left({\bz(\bpi-\opt)^{\top}}\right),\mathrm{vect}\big({\tilde{M}}\big)\rangle=0 \right\}$.
\end{theorem}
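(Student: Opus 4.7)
The plan is to extend the Garivier--Kaufmann change-of-measure lower bound from scalar best-arm identification to the vector-reward, preference-cone setting.

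First, I would apply the Kaufmann--Cappé--Garivier transportation lemma to any $(1-\delta)$-correct PrePEx algorithm. For any alternative instance $\tilde M \in \instset$ whose Pareto front $\ppolset(\tilde M,\cCbar)$ differs from $\ppolset(M,\cCbar)$, the algorithm must return different answers on $M$ and $\tilde M$ with non-negligible probability, yielding
$$\sum_{k=1}^K \mE_M[N_k(\stopping)]\,\kl{\nu_k^M}{\nu_k^{\tilde M}} \;\geq\; \kl{\delta}{1-\delta} \;\geq\; \log\frac{1}{2.4\delta}.$$
Setting $\bomega_k \eqdef \mE_M[N_k(\stopping)]/\mE_M[\stopping]$ produces an allocation in $\simplex$, so after dividing, the left-hand side becomes $\mE_M[\stopping]\sum_k \bomega_k\,\kl{\nu_k^M}{\nu_k^{\tilde M}}$.

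Second, I would geometrically characterize the ``just-confusing'' alternatives. Since $\opt \in \ppolset(M,\cCbar)$ is Pareto-optimal under $M$, a confusing instance $\tilde M$ must admit some policy $\bpi \neq \opt$ and some $\bz \in \dualcone$ for which $\bz^\top \tilde M(\bpi - \opt) \geq 0$, i.e.\ $\opt$ no longer strictly Pareto-dominates $\bpi$ under $\tilde M$. Taking the boundary gives the equality $\bz^\top \tilde M(\bpi - \opt) = 0$, which via the identity $\bz^\top \tilde M v = \langle \mathrm{vect}(\bz v^\top), \mathrm{vect}(\tilde M)\rangle$ is precisely the definition of $\partial\Lambda(M)$ in the theorem statement. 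Taking the infimum over $\tilde M \in \partial\Lambda(M)$ together with the $\inf$'s over $\opt \in \ppolset$ and $\bpi \neq \opt$ captures the smallest KL budget needed to deceive any correct algorithm.

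Third, I would reduce arm-level KL divergences to projected scalar KLs via the data-processing inequality: for any vector $\bz$ in the relevant cone, $\bz^\top R$ is a measurable statistic of $R$, so $\kl{\nu_k^M}{\nu_k^{\tilde M}} \geq \kl{\bz^\top M_k}{\bz^\top \tilde M_k}$ when the rewards lie in a mean-parameterized exponential family. Any preference direction from $\cCbar$ yields a valid lower bound, so the innermost $\inf_{\bz \in \cCbar}$ produces the tightest compatible relaxation. Finally, since the bound holds for every $(1-\delta)$-correct algorithm and the induced $\bomega$ ranges over $\simplex$, taking $\sup_\bomega$ yields $\mE_M[\stopping] \geq \cT_{M,\cCbar}\log(1/(2.4\delta))$. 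The principal obstacle is Step~2: showing that the linear-equality characterization of $\partial\Lambda(M)$ exactly pins down the boundary of the confusion set, neither over- nor under-counting, which relies on the polyhedral structure of $\cCbar$ through its dual $\dualcone$. A secondary subtlety is that the $\bz$ witnessing $\partial\Lambda(M)$ lives in $\dualcone$ while the projection $\bz$ in the KL lives in $\cCbar$; reconciling these requires either treating the inner $\inf_{\bz \in \cCbar}$ as a cone-agnostic relaxation or exploiting specific structure that aligns the two.
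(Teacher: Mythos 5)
First, note that the paper does not prove this statement at all: Theorem~\ref{thm:lower-bound} is imported verbatim from \citet{prefexp}, and the appendix only restates it (in its reduced $\max$--$\min$ form) without proof. So there is no in-paper argument to compare against; your proposal has to be judged on its own. Its skeleton --- the Kaufmann--Capp\'e--Garivier transportation lemma applied to every confusing instance, followed by a boundary characterisation of the confusion set via $\bz^{\top}\tilde{M}(\bpi-\opt)=0$ --- is indeed the standard route for such bounds, and Steps~1 and~2 are sound modulo the usual continuity argument that the infimum over $\Lambda(M)$ is attained on $\partial\Lambda(M)$.

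The genuine gap is in Step~3, where the data-processing inequality points the wrong way. The transportation lemma gives $\mE_M[\stopping]\sum_k \bomega_k\,\kl{\nu_k^M}{\nu_k^{\tilde M}} \ge \log\frac{1}{2.4\delta}$ with the \emph{full} vector KL on the left. Your inequality $\kl{\nu_k^M}{\nu_k^{\tilde M}} \ge \kl{\bz^{\top}M_k}{\bz^{\top}\tilde M_k}$ lets you replace the left-hand side by something \emph{smaller}, which destroys the inequality rather than preserving it: dividing through would yield a larger claimed lower bound on $\mE_M[\stopping]$ than the transportation lemma actually supports. What is needed instead is the reverse direction: for each fixed $(\opt,\bpi,\bz)$ and each projected mean profile $(\bz^{\top}\tilde M_k)_k$ satisfying the boundary constraint, one must exhibit a \emph{specific} alternative $\tilde M'\in\partial\Lambda(M)$ whose full KL to $M$ is no larger than (in fact equal to) $\sum_k\bomega_k\kl{\bz^{\top}M_k}{\bz^{\top}\tilde M_k}$ --- an information-projection (Pythagorean) identity for exponential families, of the kind the paper carries out explicitly for Gaussians in Lemma~\ref{lemm:Gaussian_conf_instance}, where minimising the full KL over the linear slice $\{\tilde M:\bz^{\top}\tilde M(\opt_i-\bpi_j)=0\}$ collapses exactly to the projected scalar KL. Applying the transportation lemma to that particular $\tilde M'$ then gives the projected-KL bound legitimately. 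Without this construction, the innermost $\inf_{\bz}$ and the projected KLs in Equation~\eqref{eqn:characteristic-time} do not follow. Your closing remark about the mismatch between $\bz\in\dualcone$ in the definition of $\partial\Lambda(M)$ and $\bz\in\cCbar$ in the KL is a fair observation --- the paper itself is inconsistent on this point between the main text and the appendix restatement --- but it is a notational issue, not the substantive obstacle.
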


The lower bound tests how hard it is to distinguish a given instance $\trueM$ and another instance $\tilde{M} \in \cM\setminus\{\trueM\}$ in the direction of a preference $\bz \in \dualcone$ that makes them the most indistinguishable. $\tilde{M}$ is called an alternative instance and the set of alternative instances $\altset{M}$ is called the Alt-set. Specifically, Alt-set consists of all such instances in $\cM$ which has at least one Pareto optimal policy different than that of $\trueM$. We aim to find out the allocation $\bomega$ that allows us to maximise their KL-divergences and render them as distinguishable as possible.

\textbf{From Lower Bound to Optimal Algorithms.} A common approach to design asymptotically optimal pure exploration algorithms (e.g. Track-and-Stop~\citep{kaufmann2016complexity}) is to solve the $\sup-\inf$ optimisation problem (Eq.~\eqref{eqn:characteristic-time}) at every step with empirical estimates of the means $\trueM$ obtained through bandit interaction, and stop only when one is confident about correctness. 
In this context, \textit{we derive three structural observations regarding the optimisation problem and Alt-set that allows us to design the first efficient and asymptotically optimal PrePEx algorithm for generic preference cones.}


\section{Structural Reduction of the Optimisation Problem}\label{sec:reduction}
We observe that the optimisation problem in the lower bound (Equation~\eqref{eqn:characteristic-time}) is a combination of three $\inf$ and one $\sup$ problems. It is easy to observe that since $\simplex$ and $\dualcone$ are convex and compact sets. Thus, the optimisation problem reduces to 
\begin{equation}\label{eqn:reduction1}
{\color{blue}\max_{\bomega \in \simplex}} \inf\nolimits_{\substack{\bpi \in \simplex \setminus \{\opt\}\\\opt \in \ppolset(\trueM,\cCbar)}} \underset{\triangleq f(\policy,\opt,\bpi|M)}{\underbrace{\inf_{\tilde{M} \in \partial\Lambda(M)} {\color{blue}\min_{\bz \in \dualcone}} \sum_{k=1}^{K} \bomega_k  \kl{\bz^{\top}M_{k}}{\bz^{\top}\tilde{M}_{k}}}}\,.
\end{equation}
The outer maximisation problem with respect to $\bomega$, called \textbf{allocation}, is a linear programming (LP) problem solvable by any off-the-shelf LP-solver (e.g. CPLEX~\citep{cplex}, HIGHS~\citep{highs}), if all the inner $\inf$ problems can be reduced to $\min$ problems. The inner minimisation problem with respect to the preference $\bz$ can be solved using an off-the-shelf conic programming solver (e.g. CVXOPT~\citep{cvxopt}, CLARABEL~\citep{clarabel}). 
In the following sections, we further reduce the two $\inf$ problems into $\min$ problems.

\subsection{Structure of the Pareto Optimal Policy Set}\label{sec:paretopolset}
We first observe that the $\inf$ problem over the set of Pareto optimal policies $\ppolset$ and the complementary set of any other policies $\simplex\setminus\{\opt_i\}$ is costly as the sets are continuous and possibly non-convex. It leaves two possibilities: (a) optimising over the convex hull of $\ppolset$ or (b) reducing the optimisation problem to a tractable smaller set. 
Constructing the convex hull is computationally expensive and might lead to a looser minimum. 
Thus, we aim to reduce the $\inf$ problem to a well-behaved subset of $\ppolset$. 
First, we observe that $\ppolset$ is a compact set consisting of stationary policies from $\simplex$. Secondly, we prove that $\ppolset$ is spanned by $p$ pure policies corresponding to the $p$ Pareto optimal arms. 
\begin{theorem}[Basis of $\ppolset$]\label{thm:policy-basis}
Pareto optimal policy set $\ppolset$ is spanned by $p$ pure policies corresponding to $p$ Pareto optimal arms, i.e. $\{\opt_i\}_{i=1}^p$. Here, $\opt_i$ is the pure policy with support on only arm $i$.
\end{theorem}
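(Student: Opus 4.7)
The plan is to derive the statement almost directly from the definition of $\ppolset$ given earlier in the paper, without needing any new machinery. First, I would invoke the defining property of $\ppolset$: every Pareto optimal policy $\bpi \in \ppolset$ is a non-dominated distribution in $\simplex$ whose support is contained in the Pareto optimal arm set $\pftrue$. Only the support condition is needed for the spanning claim; the non-dominance property further restricts, but does not enlarge, $\ppolset$.

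Second, I would use the elementary simplex-geometry fact that the sub-simplex $\{\bpi \in \simplex : \supp{\bpi} \subseteq \pftrue\}$ coincides with the convex hull $\conv\{\opt_i\}_{i=1}^{p}$, where $\opt_i$ is the pure policy placing all its mass on the $i$-th Pareto optimal arm. Concretely, for any $\bpi \in \ppolset$, let $\alpha_i$ denote the mass $\bpi$ assigns to the $i$-th arm of $\pftrue$; since $\bpi$ puts zero weight outside $\pftrue$, we obtain
\begin{equation*}
\bpi \;=\; \sum_{i=1}^{p} \alpha_i \, \opt_i, \qquad \alpha_i \ge 0, \qquad \sum_{i=1}^{p} \alpha_i = 1.
\end{equation*}
This exhibits $\bpi$ as a convex combination of the $p$ vertex policies $\{\opt_i\}_{i=1}^{p}$, so $\ppolset \subseteq \conv\{\opt_i\}_{i=1}^{p}$, which is exactly the spanning statement.

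The main subtlety, and the part that requires care rather than calculation, is the precise interpretation of ``spanned''. For subsets of the probability simplex the natural reading is convex-hull containment, and the reverse inclusion fails in general: a convex combination of Pareto optimal pure policies can be strictly dominated by a different mixture, so $\conv\{\opt_i\}_{i=1}^{p}$ is typically a strict superset of $\ppolset$. An illustrative case is when a Pareto optimal arm's mean vector lies in the convex hull of the other Pareto optimal arms' means---its pure policy is then dominated by some mixture over the remaining pure policies, and hence falls outside $\ppolset$ even though it is a vertex of $\conv\{\opt_i\}_{i=1}^{p}$. The value of the theorem is precisely this one-sided containment: it replaces the outer $\inf$ over the continuous and possibly non-convex set $\ppolset$ in Equation~\eqref{eqn:reduction1} by an $\inf$ over a polytope with only $p$ explicit vertices, enabling the downstream $\min$-reformulation developed in Section~\ref{sec:reduction}.
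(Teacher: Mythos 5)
Your proposal is correct and follows essentially the same route as the paper's proof: both express an arbitrary $\bpi \in \ppolset$ as a convex combination $\sum_i \alpha_i \opt_i$ of the pure policies supported on the Pareto optimal arms, using the fact that $\ppolset$ consists of distributions supported on $\pftrue$. If anything, your version is more careful than the paper's, since you correctly flag that the containment $\ppolset \subseteq \conv\{\opt_i\}_{i=1}^{p}$ is one-sided (the paper's proof informally asserts that the convex combination remains non-dominated, a step that is not needed for the spanning claim and does not hold in general).
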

Detailed proof is in Appendix~\ref{app:pareto_policy_nature}. Thus, $\ppolset$ has finite number of extreme points. 
Given $\ppolset$ is compact and $f(\policy,\opt,\bpi|M)$ is continuous in $\opt$, $\inf_{\opt \in \ppolset}$ is equal to $\min_{\opt \in \{\opt_i\}_{i=1}^p}$ (Fact~\ref{fact:inf_to_min}). Now, we define the notion of neighbouring pure policies.

\begin{definition}[Neighbouring Pure Policies]
$\bpi_j \in \{\bpi_i\}_{i=1}^K$ is a neighbouring pure policy of ${\opt \in \{\opt_i\}_{i=1}^p}$ if any mixed policy (distribution over actions) with support $\supp{\bpi} = \{i,j\}$ is not dominated by any other policy. Formally, $\nbd{\opt_i} \eqdef \{ \bpi_j \in \{\bpi_k\}_{k=1}^K\setminus\{\opt_i\} : \forall \bpi \in \simplex, \bpi_{ij} \text{ with } \supp{\bpi_{ij}}=\{i,j\},~M^{\top}\bpi_{ij} \vectneq M^{\top} {\bpi}\}$. 
\end{definition}
By Carath\'eodory's theorem~\citep{leonard2015geometry}, any Pareto optimal pure policy can have up to $\min\{K,L\}$ such neighbouring pure policies. For example, neighbouring pure policies of any {\color{blue}$\mathbf{+}$} Pareto optimal arm in Figure~\ref{fig:snw_preference} are the {\color{blue}$\mathbf{+}$} arms connected by the Pareto front ({\color{blue} blue} line). 
We also observe that since SNW dataset\footnote{SNW dataset $(K=206,L=2)$ is derived
from the domain of computational hardware design,
specifically concerning the optimization of sorting network configurations~\citep{zuluga}.} has \textit{two objectives}, \textit{each pure Pareto optimal policy (or arm) has two neighbouring pure policies (or arms)}. 
With this structure, we obtain that $\inf_{\bpi \in \ppolset\setminus\{\opt_i\}}$ is equal to $\min_{\bpi \in \nbd{\opt_i}}$ as $f(\policy,\opt,\bpi|M)$ is continuous in $\bpi$ and the minimum is achieved at one of the extreme points of the polyhedra. Thus, the optimisation problem in Equation~\eqref{eqn:reduction1} reduces to
\begin{equation}\label{eqn:reduction2}
\max_{\bomega \in \simplex} {\color{blue}\min\nolimits_{\substack{\bpi_j \in \nbd{\opt_i}\\\opt_i \in \{\opt_i\}_{i=1}^p}}} \underset{\triangleq f_{ij}(\policy|M)}{\underbrace{\inf_{\tilde{M} \in {\color{red}\Bar{\partial\Lambda}(M)}} \min_{\bz \in \dualcone} \sum_{k=1}^{K} \bomega_k  \kl{\bz^{\top}M_{k}}{\bz^{\top}\tilde{M}_{k}}}}\,,
\end{equation}
while ${\color{red}\Bar{\partial\Lambda}(M)} \eqdef {\color{blue}\bigcup\nolimits_{\substack{\bpi_j \in \nbd{\opt_i}\\\opt_i \in \{\opt_i\}_{i=1}^p}}} \left\{\tilde{M} \in \instset\setminus\{M\}: \exists \ \bz \in \dualcone, \ \langle \mathrm{vect}\left({\bz(\bpi_j-\opt_i)^{\top}}\right),\mathrm{vect}\big({\tilde{M}}\big)\rangle=0 \right\}$ is a subset of the Alt-set ${\partial\Lambda}(M)$ considered in Equation~\eqref{eqn:characteristic-time}. 
Thus, \textit{these observations together yield a significant reduction in the complexity of the $\inf$ problem over the set of Pareto optimal policies and their neighbours to $\bigO(K\min\{K,L\})$ evaluation problems of $f_{ij}(\policy|M)$.}






\begin{remark}[Connection to Construction of Alt-set by~\cite{garivier2024sequential}]
    \cite{garivier2024sequential} optimise the $f(\bomega, \opt, \bpi, M)$ by constructing a function from the Pareto set to $[L]$ that computes the minimum cost of adding and removing any Pareto optimal point of $\trueM$. This is a combinatorial mapping with $\cO(K^L)$ realisations. We accelerate it significantly by leveraging the structure of $\ppolset$ and evaluate $\cO(K\min\{K,L\})$ functions of Pareto optimal points and their neighbours.
\end{remark}\vspace*{-0.5em}

\subsection{Structure of the Alternative Set}\label{sec:altset}
Now, the only demanding optimisation problem left is finding the infimum over the Alternative instance $\tilde{M}$ in the reduced Alt-set ${\color{red}\Bar{\partial\Lambda}(M)}$. 
We observe that for each Pareto optimal pure policy $\opt_i$ and its neighbouring pure policy $\bpi_j$, $\Lambda_{ij}(M) \eqdef \left\{\tilde{M} \in \instset\setminus\{M\}: \exists \ \bz \in \dualcone, \ \langle \mathrm{vect}\left({\bz(\bpi_j-\opt_i)^{\top}}\right),\mathrm{vect}\big({\tilde{M}}\big)\rangle=0 \right\}$ is a convex set. Hence, the reduced Alt-set ${\color{red}\Bar{\partial\Lambda}(M)}$ is a union of $\bigO(K\min\{K,L\})$ convex sets $\{\Lambda_{ij}(M)\}_{i,j}$.

This observation removes the need of constructing a convex hull around the original Alt-set defined by~\cite{prefexp}, and searching for the infimum in it. This procedure is prohibitively expensive. But reducing the Alt-set to a union of convex sets completely eliminates this step and we can reduce the optimisation problem (Fact~\ref{fact:inf_over_union}) further:
\begin{equation}\label{eqn:finalreduction}
\max_{\bomega \in \simplex} \min\nolimits_{\substack{\bpi_j \in \nbd{\opt_i}\\\opt_i \in \{\opt_i\}_{i=1}^p}} {\color{blue}\min_{\tilde{M} \in \Bar{\Lambda}_{ij}(M)}} \min_{\bz \in \cCbar} \sum_{k=1}^{K} \bomega_k  \kl{\bz^{\top}M_{k}}{\bz^{\top}\tilde{M}_{k}}\,,
\end{equation}

\begin{wrapfigure}{l}{5cm}
\centering\vspace*{-1.7em}
\includegraphics[width=0.32\textwidth,trim={1cm 2cm 1cm 2cm},clip]{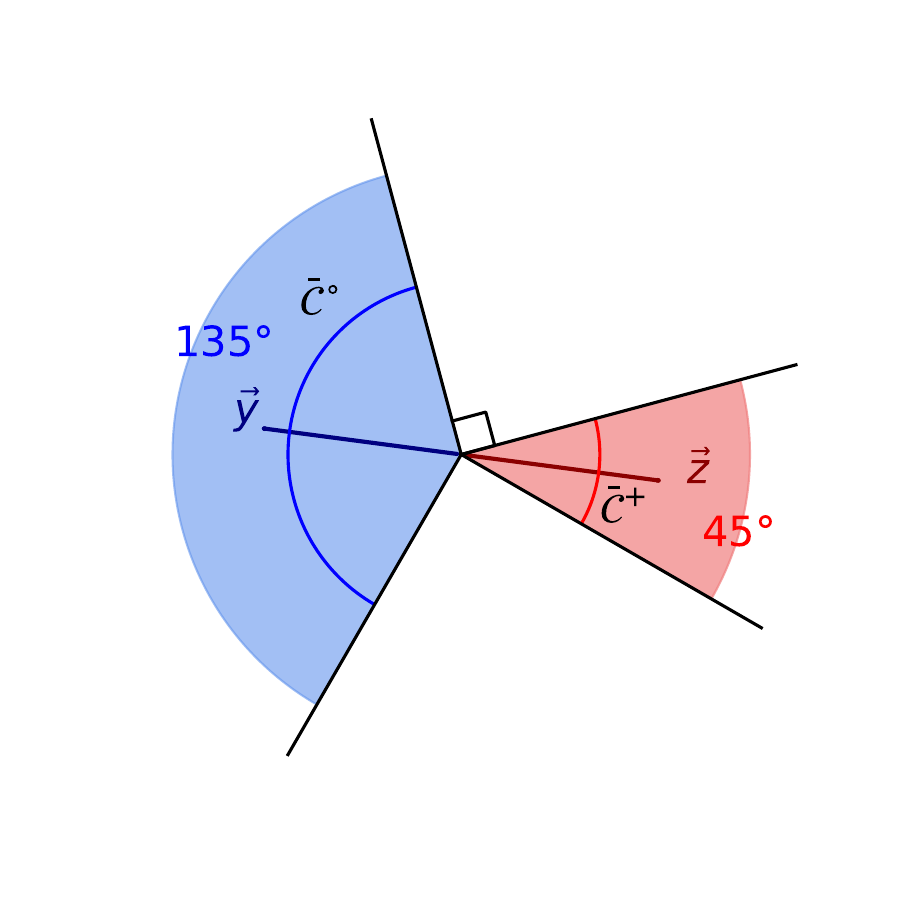}
\caption{Preference \& polar cones.}\label{fig:eg-preference}\vspace*{-1.9em}
\label{fig:polar}
\end{wrapfigure}
Since both the innermost minimisation problems are over convex sets, the minimisation problem in the lower bound can be solved using $\bigO(K\min\{K,L\})$ calls to a convex optimisation oracle. For well-behaved distributions, e.g. multi-variate Gaussians, we have closed form solutions and can avoid the minimisation over the Alternative instances $\tilde{M}$ (Appendix~\ref{app:gaussian_conf_inst}).
\begin{remark}[Connection to Pure Exploration with Multiple Answers]
The idea of constructing Alt-set as a union of convex sets was first introduced by~\cite{Degenne2019PureEW} for pure exploration with multiple correct answers and further elaborated in~\citep{wang2021fast}. 
\end{remark}
\vspace*{-0.3em}\subsection{Structure of the Alternative Instance and Polar Cone}\label{sec:polar_cone}
Though we have a tractable max-min optimization problem at hand, the elegant structure of the preference cones allow us to reduce the computational complexity further. 
\begin{proposition}[Polar Cone Representation of Alternative Instances]\label{prop:polar}
    For all $\trueM \in \cM$ and given $\opt_i \in \ppolset$ and $\bpi_j \in \mathrm{nbd}(\opt_i)$, the set of Alternative instances takes the explicit form $ \altij = \{ \altM \in \mathcal{M}\setminus \{M\} : \exists \by \in \mathrm{bd}(\pcone) \text{ such that } \altM\bpi_j = \altM\opt_i + \by \}$ where $\by$ are defined with the polar cone of $\dualcone$, i.e. $\pcone\defn \{ \by \in \reals^{L} \text{ s.t. } \langle \bz, \by \rangle \leq 0\,, \forall \bz \in \dualcone \} $.
\end{proposition}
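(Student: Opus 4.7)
The plan is to peel away the vectorisation in the definition of $\altij$ and then recast the resulting scalar orthogonality condition as a cone-boundary statement about the displacement vector $\by \defn \tildeM\bpi_j - \tildeM\opt_i$. I would begin by applying the standard trace identity $\langle \mathrm{vect}(\bz\bv^\top), \mathrm{vect}(\tildeM)\rangle = \bz^\top \tildeM \bv$ with $\bv = \bpi_j - \opt_i$, which rewrites the defining condition of $\altij$ as the existence of some non-zero $\bz \in \dualcone$ satisfying $\langle \bz, \by\rangle = 0$. Equivalently, $\tildeM\bpi_j = \tildeM\opt_i + \by$ with $\by$ orthogonal to a non-trivial dual vector.

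Next I would invoke the supporting-hyperplane characterisation of the boundary of a closed convex cone: for any such cone $P$ one has $\by \in \mathrm{bd}(P)$ iff $\by \in P$ and there exists a non-zero element of its dual $P^+$ orthogonal to $\by$. Taking $P = \pcone$, the bipolar theorem yields $(\pcone)^+ = \dualcone$, so this criterion is precisely the orthogonality condition obtained above combined with the polar-cone containment $\by \in \pcone$.

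The subtle step --- and where I expect the main work to go --- is justifying that $\by \in \pcone$ holds automatically for every $\tildeM$ satisfying the orthogonality condition in the alternative structure. I would argue this from the construction of the Alt-set: since $\opt_i$ is Pareto optimal under $\trueM$ whereas any $\tildeM \in \altij$ is defined precisely so that $\opt_i$ ceases to be Pareto optimal (with $\bpi_j \in \nbd{\opt_i}$ serving as the witness direction), the displacement $\tildeM\bpi_j - \tildeM\opt_i$ must lie in the half-space where $\bpi_j$ weakly dominates $\opt_i$ under the ordering induced by $\cCbar$. By the definition of the partial order $\vectle$ and the identity $\pcone = -\cCbar$ (an immediate consequence of the bipolar theorem applied to $\dualcone = \cCbar^+$), this is exactly the statement $\by \in \pcone$. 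Chaining the orthogonality witness with this containment gives $\by \in \mathrm{bd}(\pcone)$, and reversing each step in the argument --- starting from $\by \in \mathrm{bd}(\pcone)$, extracting a non-zero orthogonal dual vector via the same supporting-hyperplane characterisation, and repacking the vectorisation --- establishes the reverse inclusion and completes the claimed set equality.
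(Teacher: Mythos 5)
Your proof follows the same skeleton as the paper's: both unpack the vectorised inner product into the scalar condition $\bz^{\top}\tildeM(\bpi_j-\opt_i)=0$ and then identify the displacement $\by=\tildeM(\bpi_j-\opt_i)$ as a boundary point of $\pcone$. The difference is in how the boundary is certified. You use the supporting-hyperplane characterisation ($\by\in\mathrm{bd}(\pcone)$ iff $\by\in\pcone$ and some non-zero element of $(\pcone)^{\circ}=\dualcone$ annihilates $\by$, via the bipolar theorem), whereas the paper invokes Farkas' lemma to parametrise $\pcone$ explicitly as $\{W^{\top}\brho:\brho\in\reals_{+}^{L},\ \langle\bz,W^{\top}\brho\rangle\le 0\}$ and then sets $\by=W^{\top}\brho$. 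Your route makes the logical structure of the two inclusions visible; the paper's buys an explicit coordinate form of $\by$ that it reuses in the Gaussian computation of Lemma~\ref{lemm:Gaussian_conf_instance}.

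The step you flag as subtle is indeed where both arguments are thinnest, and yours does not fully close it. The written definition of $\altij$ demands only that \emph{some} non-zero $\bz\in\dualcone$ be orthogonal to $\by$; this is strictly weaker than $\by\in\mathrm{bd}(\pcone)$ (for $\dualcone=\reals^{2}_{+}$ the vector $\by=(1,-1)$ is annihilated by $\bz=(1,1)$ yet lies in neither $\pcone$ nor its boundary). The paper simply asserts the implication ``which implies that $\altM(\bpi_j-\opt_i)\in\mathrm{bd}(\pcone)$''; you instead try to derive $\by\in\pcone$ from the intended semantics of the Alt-set (that $\bpi_j$ weakly dominates $\opt_i$ under $\tildeM$). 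That is the right idea, but it imports a domination condition that does not appear in the formal definition of $\altij$, and the sign bookkeeping is delicate: the paper's two stated characterisations of $\vectle$ ($\bmu\in\bmu'+\cCbar$ versus $\bz^{\top}(\bmu-\bmu')\le 0$ for all $\bz\in\dualcone$) differ by a sign, and only one of them places the displacement in $\pcone$ rather than in $\cCbar=-\pcone$. In short: same approach, your write-up is more honest about the one non-trivial inclusion, but that inclusion is rigorously established in neither version as the Alt-set is currently defined.
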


Proposition~\ref{prop:polar} shows that though an Alternative instance is a $K\times L$-dimensional matrix, it can be represented by an $L$-dimensional vector $\by$ at the boundary of the polar cone. We illustrate the geometry of this transformation in Figure~\ref{fig:polar}. If an off-the-shelf convex optimiser is used to minimise over Alternative instances and the optimiser is not dimension-free (e.g. mirror descent~\citep{mirror}, Vaidya's algorithm~\citep{vaidya1996new}), then this transformation reduces the computational expense significantly (For most of the real data, $K\gg L$).
For multivariate Gaussians with covariance matrix $\Sigma$, we derive the closed form of the polar vector corresponding to the Alternative instance (ref. Lemma~\ref{lemm:Gaussian_conf_instance}).  Please refer to Appendix \ref{app:polar_cone_alt} for detailed proof.

\vspace*{-.5em}\section{Algorithm Design: Frugal and Fast PrePEx with Frank-Wolfe}\label{sec:algo}
Enabled by the structural reductions in Section~\ref{sec:reduction}, we are now ready to describe the \framework{} algorithm, which is the first computationally efficient, optimisation-driven algorithm for exact detection of the Pareto optimal set. We begin by stating the required assumption to build \framework.

\begin{assumption}[$L$-Parameter Exponential Family]\label{assmpt:single-para-exp}
Let $X = (X_1,···,X_L)$ be a $L$-dimensional random vector with a distribution $P_\theta, \theta \in \Theta$ and mean $\bmu \in \cM \subset \reals^L$. Suppose $X_1,\ldots, X_L$ are jointly continuous. Then, the family of distributions $\{P_\theta, \theta \in \Theta\}$ belongs to the $L$ parameter exponential family if its density of $X$ can be represented as $f(X\vert\theta) \eqdef h(X) \exp\left( \eta(\theta)T(X)- \psi(\theta)\right)$. $\eta : \Theta \rightarrow \mathbb{R}^s$ is the natural parametrization for some $s \geq L$. $T : \mathbb{R}^L \rightarrow \mathbb{R}^s$ is the sufficient statistic for the natural parameter. $h(\boldsymbol{X})$ is the base density such that $h: \mathbb{R}^L \rightarrow [0,\infty)$. Finally, $\psi(\boldsymbol{\theta}) = \log \left( \int_{\mathcal{X}} h(\boldsymbol{X}) \exp\left( \langle \eta(\boldsymbol{\theta}), T(\boldsymbol{X})\rangle \mathrm{d}\boldsymbol{x}\right) \right)$ is the log-normaliser or log-partition function. Additionally, we assume that the exponential family is non-singular, i.e. $\nabla^2_{\theta}\psi(\theta) \geq \beta$ for some $\beta>0$ and for all $\theta$.
\end{assumption}
Note that the natural parameter can have dimension $s$ greater than $L$ but, in our case, only the $L$-dimensional mean is unknown. To highlight this, we call the above an $L$-parameter exponential family rather than $s$-parameter. This is a common assumption in BAI for $L=1$~\citep{kaufmann2016complexity,garivier2016optimal,Degenne2019PureEW}. Rather, the Pareto front identification literature has been limited to Gaussians and extending to exponential families has been an open question~\citep{garivier2024sequential}. 
The non-singular curvature is also a mild assumption for keeping the problem well-defined. For example, this holds true for any Gaussian with non-singular covariance matrix (ref. Example~\ref{example: Multi-variate_gaussian}) and Bernoullis with probability of success not equal to zero or one.

\vspace*{-0.2em}\subsection{Fast Optimisation of Allocations with Frank-Wolfe}\label{sec:frank_wolfe_update}

The outer optimisation problem in Equation \eqref{eqn:finalreduction} is Linear Program (LP) over a polyhedra with respect to $\bomega$. Any PrePEx algorithm solves this LP trying to estimate allocation per step. Unlike typical LP solvers, projection-free methods, like Frank-Wolfe (FW)~\citep{jaggi2013revisiting}, solve this \textit{smooth convex} program more efficiently~\citep{chandrasekaran2012convex}. The necessary conditions for FW to converge towards the optimal allocation $\bomega^{\star}(M)$ are \textbf{1.} the LP under study must be smooth, \textbf{2.} the gradient and curvature of the function to be maximised should not blow up at the boundary of the polyhedra.

\noindent\textbf{1. Smoothness:} For a fixed $\opt_i \in \{\opt_i\}_{i=1}^p$, the function $f_{ij}(\bomega\mid M)$ is smooth only at the minima $\bpi_j \in \argmin_{\bpi_j \in \nbd{\opt_i}} f_{ij}(\bomega\mid M)$. As suggested by \cite{wang2021fast}, we can adapt FW to cope with non-smooth objective function by constructing $r$-sub-differential set close to the non-smooth points. We define the $r$-sub-differential set as
{\small\begin{align}\label{eq:subdiffset}
    \subdiff  \defn &~~\ch{\nabla_{\policy} f_{ij}(\policy|M) : f_{ij}(\policy|M) <  \min_{\opt_i,\bpi_j} f_{ij}(\policy|M) + r, \forall \opt_i \in \ppolset, \bpi_j \in \nbd{\opt_i}}\,.
\end{align}}
\noindent As computing gradient in the neighbourhood of $\bomega$ is expensive, FW further simplifies the outer maximisation and calculates the allocation in two simple steps by linearising as follows
\begin{align}
    &\bx_{t+1} \defn \argmax_{\bx\in \simplex{}}\min_{\bh\in \subdiff} \langle  \bx- \policy_t,\bh\rangle, 
    ~~~~\policy_{t+1} \defn \frac{t}{t+1} \policy_t + \frac{1}{t+1}\bx_{t+1}\,. \label{eq:fw_update}
\end{align}
We further prove (Appendix \ref{app:conv_FW}) that $\bomega \longmapsto \subdiff$ is continuous and continuously differentiable. 

\textbf{2. Gradient and Curvature.} For FW to converge, boundedness of the gradient and curvature constant is necessary. Lemma~\ref{lemma:bounded_grad_curve} ensures that the FW converges to the optimal allocation (ref. Appendix~\ref{app:conv_FW}).
\begin{lemma}\label{lemma:bounded_grad_curve}
If Assumption~\ref{assmpt:single-para-exp} holds true, then for all $M\in\cM$: 
1. \textbf{Bounded gradients:} $\norm{\nabla_{\policy} f_{ij}(\policy|M)}_{\infty} \leq D$ for all $\opt_i , \bpi_j$, and $\policy \in \simplex{}$. 
2. \textbf{Bounded curvature:} $C_{f_{ij}(\cdot|M)}(\simplex{}_{\gamma})  \leq 8D\alpha^{-1}$  for all $i,j$, $\gamma\in(0,1/\numarm)$, and some $\alpha>0$.
Here, $C_f(A)$ is curvature constant of concave differentiable function $f$ in set $A$ (Definition~\ref{def:curvature_const}) and $\simplex{}_{\gamma} \defn \lbrace \policy \in \simplex : \min_k \policy_k \geq \gamma\rbrace$.
\end{lemma}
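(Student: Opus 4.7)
The plan exploits two structural facts. First, for every fixed inner pair $(\tilde{M}, \bz)$, the KL sum $\sum_{k=1}^K \omega_k \mathrm{KL}(\bz^\top M_k \,\|\, \bz^\top \tilde{M}_k)$ is \emph{linear} in $\omega$, so $f_{ij}(\cdot|M)$ is concave as an infimum of linear functions in $\omega$. Second, Assumption~\ref{assmpt:single-para-exp} together with the compactness $\|M\|_{\infty,\infty}\le M_{\max}$ pins the natural parameters of each projected scalar exponential family in a compact range, on which $\nabla^2\psi$ is bounded between $\beta$ and some finite smoothness constant $L_\psi<\infty$.

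For Part 1, I would apply Danskin's envelope theorem at any $\omega$ where the inner minimisers $(\tilde{M}^\star(\omega), \bz^\star(\omega))$ are unique, giving
\[
\bigl[\nabla_{\omega} f_{ij}(\omega|M)\bigr]_k \;=\; \mathrm{KL}\bigl(\bz^\star(\omega)^\top M_k \,\big\|\, \bz^\star(\omega)^\top \tilde{M}^\star(\omega)_k\bigr).
\]
Since $\|M\|_{\infty,\infty}\le M_{\max}$, $\tilde{M}\in\cM$ is bounded, and $\bz$ is normalised in the unit ball of $\dualcone$, the projected scalar means lie in a fixed compact interval. Combining the exponential-family Bregman identity with the smoothness bound on $\psi$ yields a uniform bound $\mathrm{KL}(\cdot\|\cdot) \le D$ for a constant $D$ depending only on $M_{\max}$ and $L_\psi$. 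At non-smoothness points I would pass to the $r$-subdifferential $H_{M}(\omega,r)$ defined in~\eqref{eq:subdiffset}: because each generator of this convex hull is a Danskin gradient of the above form, every element of $H_{M}(\omega,r)$ inherits the bound $D$ by Jensen's inequality.

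For Part 2, I would control the curvature by first bounding the Lipschitz constant $L_f$ of $\nabla_{\omega} f_{ij}$ and then invoking the standard inequality $C_{f_{ij}(\cdot|M)}(\simplex{}_{\gamma}) \le L_f \cdot \mathrm{diam}(\simplex{}_{\gamma})^2 \le 2L_f$. To bound $L_f$, I would reparametrise the inner minimisation via the $L$-dimensional polar vector $\by\in\mathrm{bd}(\pcone)$ from Proposition~\ref{prop:polar}, then apply the implicit function theorem to the KKT system of the joint minimisation in $(\by, \bz)$. The Hessian of the inner objective in the $\tilde{M}$-coordinates is block-structured with blocks $\omega_k \nabla^2\psi(\cdot)$ lower-bounded by $\omega_k \beta$, hence by $\gamma\beta$ whenever $\omega\in\simplex{}_{\gamma}$; thus the sensitivity of the inner minimiser in $\omega$ scales as $1/(\gamma\beta)$. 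The induced Lipschitz constant of $\nabla_{\omega} f_{ij}$ is then of order $D/(\gamma\beta)$, and propagating constants delivers the claimed bound $8D\alpha^{-1}$ with $\alpha$ proportional to $\gamma\beta$.

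The main obstacle is the joint non-smoothness of $f_{ij}$ at $\omega$ where the inner minimiser is non-unique, especially across different index pairs $(i,j)$. Handling this is precisely why $H_{M}(\omega,r)$ in~\eqref{eq:subdiffset} is built as a convex hull over \emph{near-optimal} gradients; the final delicate step is to verify that both the $D$ bound and the $L_f$ bound transfer to every element of this hull, which reduces to showing that the KKT multipliers associated with near-minimisers remain uniformly bounded. This last property again follows from the non-singularity of $\psi$ on the compact mean range combined with the polar-cone reparametrisation of Proposition~\ref{prop:polar}, which keeps the constraint gradients well-conditioned uniformly in $(i,j)$.
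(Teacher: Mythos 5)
Your Part 1 is essentially the paper's argument: both start from the Danskin/envelope expression $\nabla_{\bomega}f_{ij}(\bomega|M)=\sum_k \kl{\bz^{\top}M_k}{\bz^{\top}\tilde{M}_k}\,e_k$ at the inner minimisers and bound each coordinate by writing the KL as a Bregman divergence of the log-partition function over a compact parameter range, so the constant $D$ arises the same way in both. (Your extra step about elements of $\subdiff$ inheriting the bound via the convex hull is harmless but not needed for the lemma itself, since for a fixed pair $(i,j)$ the paper has already established that $f_{ij}$ is continuously differentiable; the subdifferential only enters for $F=\min_{i,j}f_{ij}$.) Part 2 is where you genuinely diverge. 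The paper never touches second-order information: it plugs the first-order Lipschitz bound $|f_{ij}(\bomega'|M)-f_{ij}(\bomega|M)|\le D\Vert\bomega'-\bomega\Vert_1$ directly into Definition~\ref{def:curvature_const}, uses concavity to absorb $f_{ij}(\bomega'')$, and gets $\tfrac{4D}{\alpha}\Vert\bomega'-\bomega\Vert_1\le 8D\alpha^{-1}$ in three lines. You instead bound the Lipschitz constant $L_f$ of the \emph{gradient} via an implicit-function-theorem sensitivity analysis of the inner KKT system and invoke $C_{f}(A)\le L_f\,\mathrm{diam}(A)^2$. Your route is sound in principle and arguably yields a more meaningful bound (the paper's $\tfrac{2}{\alpha^2}[\cdots]\le\tfrac{4D}{\alpha}\Vert\bomega'-\bomega\Vert_1$ degenerates as the line-search parameter $\alpha\to 0$, whereas a genuine $L_f$ bound controls the supremum over $\alpha\in(0,1]$ uniformly), but it carries a real proof obligation the paper avoids: you must show that the inner minimiser map $\bomega\mapsto(\tilde{M}^{\star},\bz^{\star})$ is Lipschitz with constant of order $1/(\gamma\beta)$ uniformly over $\simplex{}_{\gamma}$, including near configurations where the minimiser over $\bz\in\dualcone$ sits on the boundary of the cone or where strict complementarity fails; the claim that the KKT multipliers stay uniformly bounded is precisely the delicate point and is not discharged in your sketch. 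If you want the lemma as stated with minimal machinery, the paper's first-order argument is the shorter and safer path; your second-order argument would be the right tool if one needed a curvature constant that is honestly uniform in the step size.
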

Lemma \ref{lemma:bounded_grad_curve} allows us to further accelerate the linearised optimisation in Equation \eqref{eq:fw_update} in $\bigO\left(\frac{1}{\tol} \right)$ instead of standard FW complexity $\bigO\left(\frac{K}{\tol} \right)$~\citep{jaggi2013revisiting}, where $\tol$ is the tolerable error margin for the optimisation.
Thus, we propose the \textit{first PrePEX algorithm for general exponential family} whereas existing literature is restricted to Gaussian or Bernoulli~\citep{garivier2024sequential}. 

\begin{algorithm}[t!]
\caption{\framework - \textbf{Fr}ugal and F\textbf{a}st \textbf{P}reference-Based \textbf{P}ure \textbf{E}xploration}\label{alg:frappe}
\begin{algorithmic}[1]
\STATE \textbf{Input:} Confidence level $\delta$ and sequence $\{r_t\}_{t\geq1 }=t^{-0.9}/K$ 
\STATE \textbf{Initialise:} For $t\in[K]$, sample each arm once s.t. $\bomega_K= (1/K, \cdot\cdot\cdot , 1/K)$, mean estimate $\Hat{M}_K$
\WHILE{Equation~\eqref{eqn:stopping_rule} is \textbf{FALSE}}
\IF{$\sqrt{t/K}\in\mathbb{N}$ or $\hat{M}_t \notin \cM$}
\STATE \textbf{Forced Exploration:} $\bomega_t \leftarrow (1/K, \cdot\cdot\cdot , 1/K)$
\ELSE
\STATE \textbf{Estimate Pareto Indices:} Calculate Pareto indices $\mathcal{P}_t$ based on current estimate $\Hat{M}_{t}$.
\STATE \textbf{Estimate Set of Pareto Policies:} $\Pi^{\mathcal{P}_t}$ consisting pure policies with $i_t \in \mathcal{P}_{t}$ as basis.
\STATE \textbf{Set of Neighbours:} $\Pi \setminus \Pi^{\mathcal{P}_t}$, where $\Pi$ is the set of all pure policies.
\STATE \textbf{Construct Sub-Differential Set:} $H_{\Mhat_t}(\bomega_{t},r_t)$ using Equation \eqref{eq:subdiffset}
\STATE \textbf{FW-Update:} $\bx_{t+1} \leftarrow \argmax\limits_{\bomega \in \simplex{}_{\gamma}} \min\limits_{h \in H_{\Mhat_t}(\bomega_{t},r_t)} \langle \bx - \bomega(t) , \bh\rangle $, $\bomega_{t+1} \leftarrow \frac{1}{t+1}\bx_{t+1} + \frac{t}{t+1}\bomega_t$
\ENDIF
\STATE \textbf{C-tracking:} Play $A_t \in \argmin N_{a,t} - \sum_{s=1}^{t+1} \bomega_s $ (ties broken arbitrarily)
\STATE \textbf{Feedback and Parameter Update: } Get feedback $R_{t} \in \reals^{L}$ and update $\Hat{M}_{t}$ to $\Hat{M}_{t+1}$ with $R_t$
\ENDWHILE
\STATE \textbf{Recommendation Rule:} Recommend  $\mathcal{P}_t$ as the Pareto optimal set
\end{algorithmic}
\end{algorithm}

\vspace*{-0.2em}\subsection{\framework: Frugal and Fast PrePEx}
We propose $\framework$ for \textit{efficient (Frugal and Fast)} identification of \textit{all} the Pareto optimal arms in PrePEx. \framework{} follows the three component-based design from pure exploration literature~\citep{kaufmann2016complexity,Degenne2019PureEW,wang2021fast}. 

\textbf{Component 1.} The first component of $\framework$ is a hypothesis testing scheme based on a \textit{sample statistic} (Line 3 in Algorithm \ref{alg:frappe}) that decides whether the algorithm should stop sampling and recommend the estimated Pareto optimal set as the optimal one. This is called the ``\textbf{Stopping Rule}''. We revisit the stopping rule described by \cite{prefexp}:
$\min_{\tilde{M} \in \cc{\partial\altset{\mestm}}}\min_{\bz \in \dualcone}\sum_{k=1}^K N_{k,t} \kl{\bz^{\top}\hat{M}_{k,t}}{\bz^\top\tilde{M}_{k}} \ge c(t,\delta)$. 
We note that constructing convex hull around the Alt-set per iteration is \textit{computationally expensive and not really tractable}. Instead, we take advantage of Equation \eqref{eqn:finalreduction} to deploy a \textit{tractable and efficient} stopping rule:  
\begin{align}\label{eqn:stopping_rule}
    \min\limits_{\opt_{i_t} \in \Pi^{\mathcal{P}_t}} \min\limits_{\bpi_j\in\nbd{\opt_{i_t}}}\inf\limits_{\Tilde{M} \in \Lambda_{ij}(\Hat{M}_t)} \min\limits_{\bz\in\dualcone} \sum\nolimits_{k=1}^K N_{k,t}\kl{\bz^{\top}\Hat{M}_{k,t}}{\bz^{\top}\conf_k} \geq c(t,\delta)\,\vspace*{-1.5em}
\end{align}
where $c(t,\delta) \defn \sum_{k=1}^K 3 \ln \left(1+\ln \left({N}_{k,t}\right)\right)+K \cG\left(\frac{\ln \left(\frac{1}{\delta}\right)}{K}\right)$. For explicit expression of $\cG(\cdot):\reals^{+} \rightarrow \reals^{+}$, refer to Theorem \ref{thm:Kauffman martinagle stopping}~\citep{kaufmann2021mixture}. The intuition behind this stopping rule stems from the \textit{Sticky Track-and-Stop strategy} for multiple correct answers setting~\citep{Degenne2019PureEW} that stops as soon as it can identify any \textbf{one} of the correct answers (Pareto arms in our case). Though in our setting, we need to rule out the possibility of choosing a confusing instance for all the correct answers, i.e. the Pareto arms. Hence, we take minimum over the finite set of Pareto optimal policies $\{\opt_i\}_{i=1}^p$. Note that, \eqref{eqn:stopping_rule} is the first Chernoff-type stopping rule that encapsulates the effect of $\cC$ that has been unresolved in the literature~\citep{garivier2024sequential}.

\noindent\textbf{Component 2.} The next component of $\framework$ is a ``\textbf{Sampling Rule}''. It chooses the action to play based on the allocation $\bomega_t$ estimated via Equation~\eqref{eq:fw_update} (cf. Section \ref{sec:frank_wolfe_update}). We use ``C-tracking'' (Line 13, Algorithm \ref{alg:frappe}) as the other variant ``D-tracking'' fails to converge to $\bomega^{\star}(M)$ for multiple correct answers~\citep{Degenne2019PureEW}. We refer to Appendix \ref{app:tracking} for convergence and other results.

\noindent\textbf{Component 3:} Once the stopping rule is fired i.e., flag is \textbf{TRUE}, $\framework$ recommends the estimated Pareto arms as the set of correct answers. The stopping rule ensures that the Pareto arms given by ``\textbf{Recommendation rule}'' are correct with probability at least $(1-\delta)$ (Theorem~\ref{thm:Kauffman martinagle stopping}).     


\textbf{Sample Complexity.} Now, we show that \textit{\framework~is an asymptotically optimal PrePEx algorithm}. 

\begin{lemma}[Sample Complexity Upper Bound]\label{lem:sample_complexity}
    For any $M \in \cM$, $\delta\in(0,1)$, and preference cone $\cCbar$, expected stopping time satisfies $~~\limsup_{\delta\rightarrow 0} \frac{\mE[\tau_{\delta}]}{\log(\frac{1}{\delta})} \leq \mathcal{T}_{\trueM,\cCbar}$. \vspace{-1em} 
\end{lemma}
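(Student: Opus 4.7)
The plan is to follow the Track-and-Stop template for the multiple correct answers setting~\citep{Degenne2019PureEW,wang2021fast}, adapted to the reduced characteristic time $(\cT_{M,\cCbar})^{-1}$ from Equation~\eqref{eqn:finalreduction}. I work on the good concentration event $\cE_T \defn \{ \forall t \geq \sqrt{T}\,, \lVert \Mhat_t - M\rVert_\infty \leq \epsilon_T \}$ where $\epsilon_T \rightarrow 0$. Forced exploration along the schedule $\sqrt{t/K}\in \mathbb{N}$ guarantees $N_{k,t} \geq \sqrt{t/K} - 1$, which combined with sub-Gaussian (more generally, exponential-family) concentration and a union bound gives $\prob(\bar{\cE}_T)$ summable in $T$, so the probability-of-error contribution to $\E[\stopping]$ is negligible.

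Given $\cE_T$, I would first prove that the Frank-Wolfe iterates $\{\bomega_t\}$ produced by Equation~\eqref{eq:fw_update} converge to the optimal allocation $\bomega^{\star}(M)$. The key ingredients are: (i) the continuity and compactness argument of Section~\ref{sec:reduction} which identifies $f_{ij}(\bomega\mid M)$ on a finite union of convex pieces indexed by $(\opt_i,\bpi_j)$, so that $\bomega \mapsto H_M(\bomega, r)$ is upper-hemicontinuous in $\bomega$ and continuous in $M$; (ii) Lemma~\ref{lemma:bounded_grad_curve}, which yields bounded gradients and curvature inside $\simplex{}_{\gamma}$, letting me invoke the non-smooth Frank-Wolfe convergence result of~\citep{wang2021fast} at rate $\cO(1/t)$ up to an $r_t$-subgradient error with $r_t = t^{-0.9}/K$; (iii) continuity of the map $M \mapsto \bomega^{\star}(M)$ at the true instance, which transfers convergence of plug-in estimates $\hat{M}_t \to M$ to convergence of the target allocation. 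The main obstacle is item (iii) because $\bomega^\star(M)$ need not be unique; I would resolve this by showing that every accumulation point of $\bomega_t$ lies in the set of optimal allocations and use the sticky-style argument (since we take $\min$ over the finite set of Pareto pure policies before $\min$ over neighbours, $\Lambda_{ij}$, and $\bz$) to show that $F_{M}(\bomega_t) \to (\cT_{M,\cCbar})^{-1}$ without requiring uniqueness.

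Next, C-tracking (Line~13 of Algorithm~\ref{alg:frappe}) combined with the Frank-Wolfe convergence provides $\max_{k} \lvert N_{k,t}/t - \bomega^{\star}_k(M)\rvert \to 0$ on $\cE_T$; this is the standard tracking lemma since the target sequence $\bomega_s$ itself converges. Consequently, the empirical KL-sum in the stopping statistic satisfies
\begin{equation*}
\frac{1}{t}\min_{\opt_{i_t}}\min_{\bpi_j}\inf_{\tilde M \in \Lambda_{ij}(\hat M_t)}\min_{\bz\in\dualcone}\sum_{k=1}^K N_{k,t}\,\kl{\bz^{\top}\hat M_{k,t}}{\bz^{\top}\tilde M_k} \;\xrightarrow[t\to\infty]{}\; (\cT_{M,\cCbar})^{-1}
\end{equation*}
by continuity of $f_{ij}$ in $(M,\bomega)$ established in Section~\ref{sec:reduction} and by the structural equivalence of Equation~\eqref{eqn:finalreduction} with Equation~\eqref{eqn:characteristic-time}.

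Finally, I would close the loop in the standard way. Define $T_0(\delta) \defn \inf\{t : \forall s\geq t,\ s\cdot (F_M(\bomega^\star(M)) - \eta) \geq c(s,\delta)\}$ for an arbitrarily small $\eta>0$. Since $c(t,\delta) = \log(1/\delta) + o(\log(1/\delta)) + \cO(\log\log t)$ (Theorem~\ref{thm:Kauffman martinagle stopping}), one has $T_0(\delta) \leq (\cT_{M,\cCbar} + \eta')\log(1/\delta)(1+o(1))$. On $\cE_T$, the stopping rule in Equation~\eqref{eqn:stopping_rule} fires at or before $T_0(\delta)$, and outside $\cE_T$ the contribution to $\E[\stopping]$ is $o(\log(1/\delta))$ by the summability of $\prob(\bar\cE_T)$. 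Dividing by $\log(1/\delta)$, taking $\limsup$ as $\delta\to 0$, and then letting $\eta' \downarrow 0$ yields the claimed bound. The crux of the whole argument is verifying continuity of the inner triple-$\min$ at $M$ and the non-smooth Frank-Wolfe guarantee transported to the empirical-mean setting; the stopping-threshold manipulation is then routine.
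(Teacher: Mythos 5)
Your proposal is correct and follows essentially the same route as the paper's proof in Appendix E.3: a good event under which the plug-in estimates are uniformly close to $M$ (the paper's $G_1(T)\cap G_2(T)$ reduces, via its concentration lemma, to exactly the $\lVert \Mhat_t - M\rVert_{\infty,\infty}$ event you use), summability of the bad-event probabilities from forced exploration, Frank--Wolfe convergence in function value $F(\bomega_t\mid M)\to F(\bomega^{\star}(M)\mid M)$ (which, as you note, sidesteps non-uniqueness of $\bomega^{\star}$), C-tracking to relate $N_{k,t}$ to $t\bomega_t$, and the standard comparison of $t\,F(\bomega_t\mid\Mhat_t)$ against the threshold $c(t,\delta)$ to define $T_0(\delta)$ and take $\delta\to 0$ followed by $\epsilon\downarrow 0$. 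No substantive differences.
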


Thus, \textit{$\framework$ achieves asymptotic optimality}. We also prove \textit{correctness} (Theorem~\ref{thm:correctness}) and derive a \textit{non-asymptotic sample complexity bound} (Lemma \ref{thm:non_asymp_sample_comp} in Appendix \ref{app:Non_asymp_comp}), which we omit for brevity. 

\textbf{Computational Complexity.}  First, Line $7$ suffers worst-case complexity for estimating Pareto set using the algorithm of~\cite{kung} is $\mathcal{O}\left( K \log(K)^{\max\{1,L-2\}}\right)$~\citep{kone2024paretosetidentificationposterior}. Then, for each $\{\opt_i\}_{i=1}^p$ and $\{\bpi_j\}_{j=1}^{|\nbd{\opt_i}|}$, Component 1 and Frank-Wolfe step (Line $11$) enjoys time complexity $\bigO\left( L\right)$ due to $K$-independent bound over curvature and gradients (Lemma \ref{lemma:bounded_grad_curve}~\citep{jaggi2013revisiting}). Thus, \framework~ has the total time complexity $\mathcal{O}\left(\max\left\{K (\log K)^{\max\{1,L-2\}},{K L \min\{K,L\}}\right\}\right)$. 
Note that, for $L\geq 5$ and $K\geq 19$, runtime of \framework~ is $\mathcal{O}\left(K (\log K)^{\max\{1,L-2\}}\right)$, i.e. the Pareto set computation becomes the dominant component. 
We refer to Appendix \ref{app:comp_complexity} for detailed discussion.
\section{Experimental Analysis}\label{sec:experiments}
We perform empirical evaluation of $\framework$ on a real-life dataset as well as synthetic environment.\footnote{Code: \url{https://github.com/udvasdas/FraPPE_2025}} 

\noindent\textbf{Benchmark algorithms.} We compare our algorithm with \textbf{PSIPS} (Posterior concentration based Bayesian algorithm~\citep{kone2024paretosetidentificationposterior}), \textbf{APE} (Approximate Pareto set identification~\citep{kone2023adaptive}), \textbf{Oracle} that pulls arms according to $\bomega^{\star}(M)$, i.e., the optimal allocation, \textbf{Uniform} sampler, and also \textbf{TnS} (Gradient based algorithm in \citep{garivier2024sequential}). We consider $c(t,\delta) = \ln(\frac{1+\ln(t)}{\delta})$.

\noindent\textbf{Experiment 1: Cov-Boost Trial Dataset.} This real-life inspired data set contains tabulated entries of phase-$2$ booster trial for Covid-19~\citep{munro2021safety}. Cov-Boost has been used as a benchmark dataset for evaluating algorithms for Pareto Set Identification (PSI). It consists bandit instance with $20$ vaccines, i.e. arms, and $3$ immune responses as objectives, i.e., $K=20$ and $L=3$. 
\begin{figure}[t!]
 \centering
 \begin{minipage}{0.48\textwidth}
   \centering
     \includegraphics[width=\textwidth]{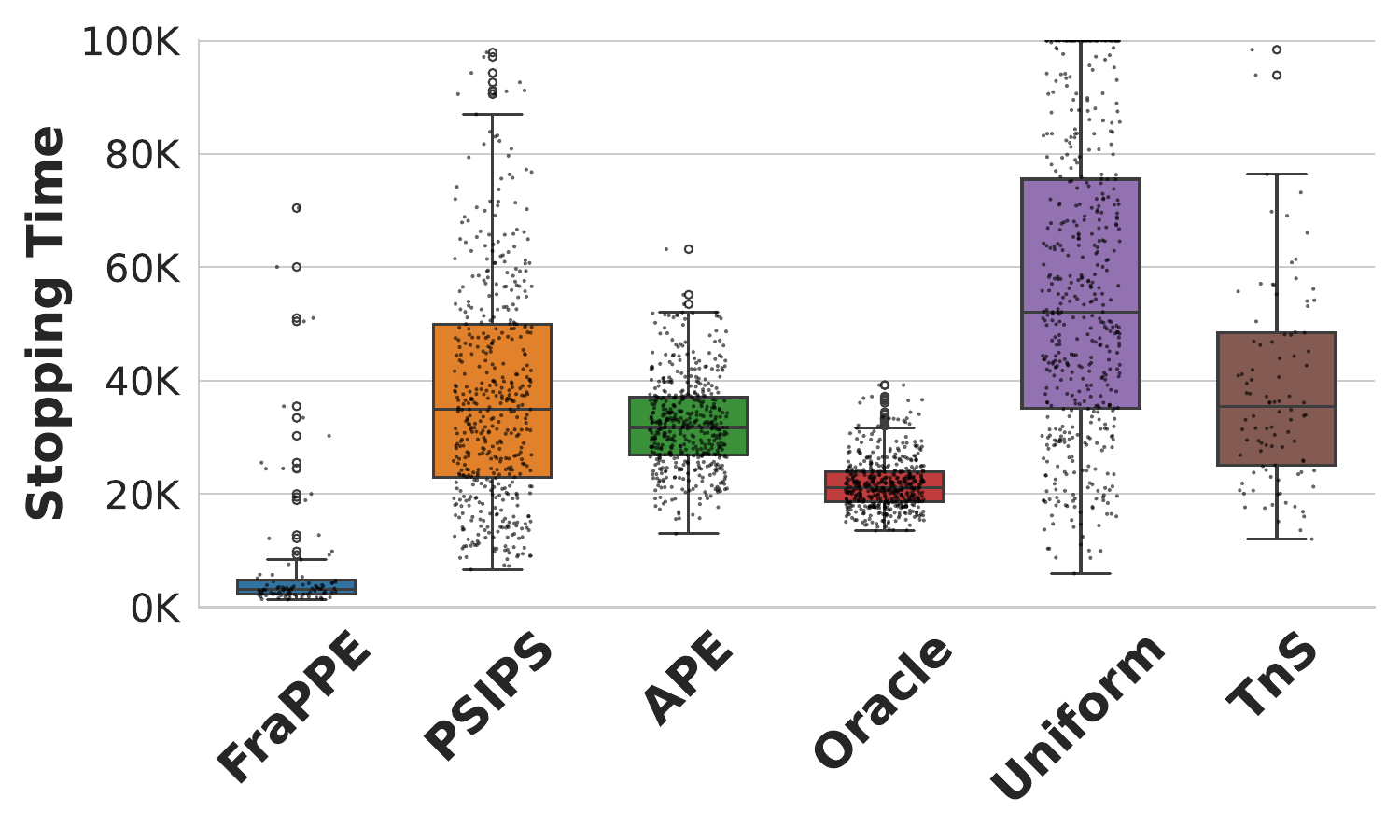}
     \caption{Stopping times for Cov-Boost Trial.}\label{fig:stopping_covboost}
 \end{minipage}\hfill
\begin{minipage}{0.5\textwidth}
    \centering
    \includegraphics[width=\textwidth]{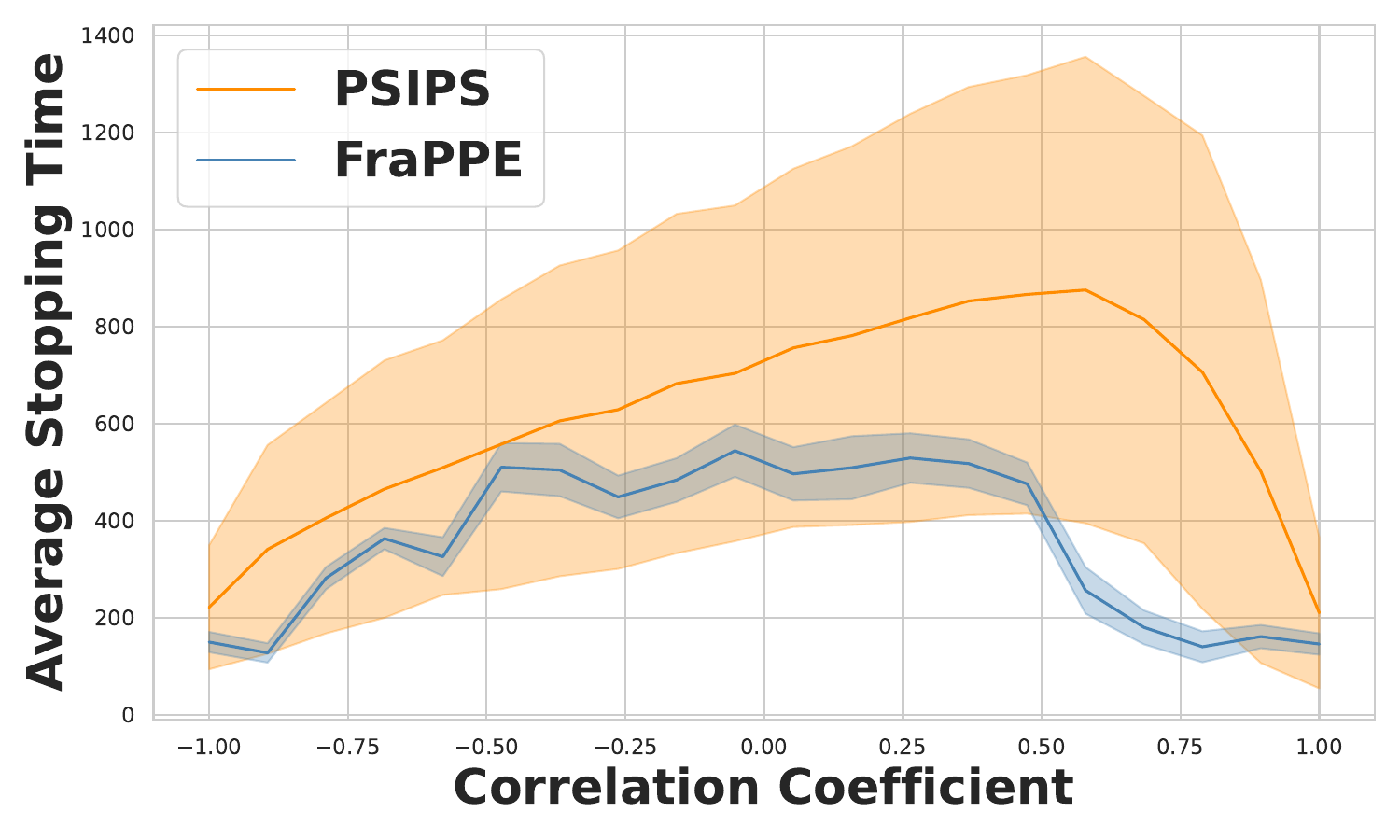}
    \caption{Effect of correlated objectives on Gaussian instance.}\label{fig:effect_of_rho}
\end{minipage}\\
\begin{minipage}{0.7\textwidth}
    \centering
    \includegraphics[width=0.8\textwidth]{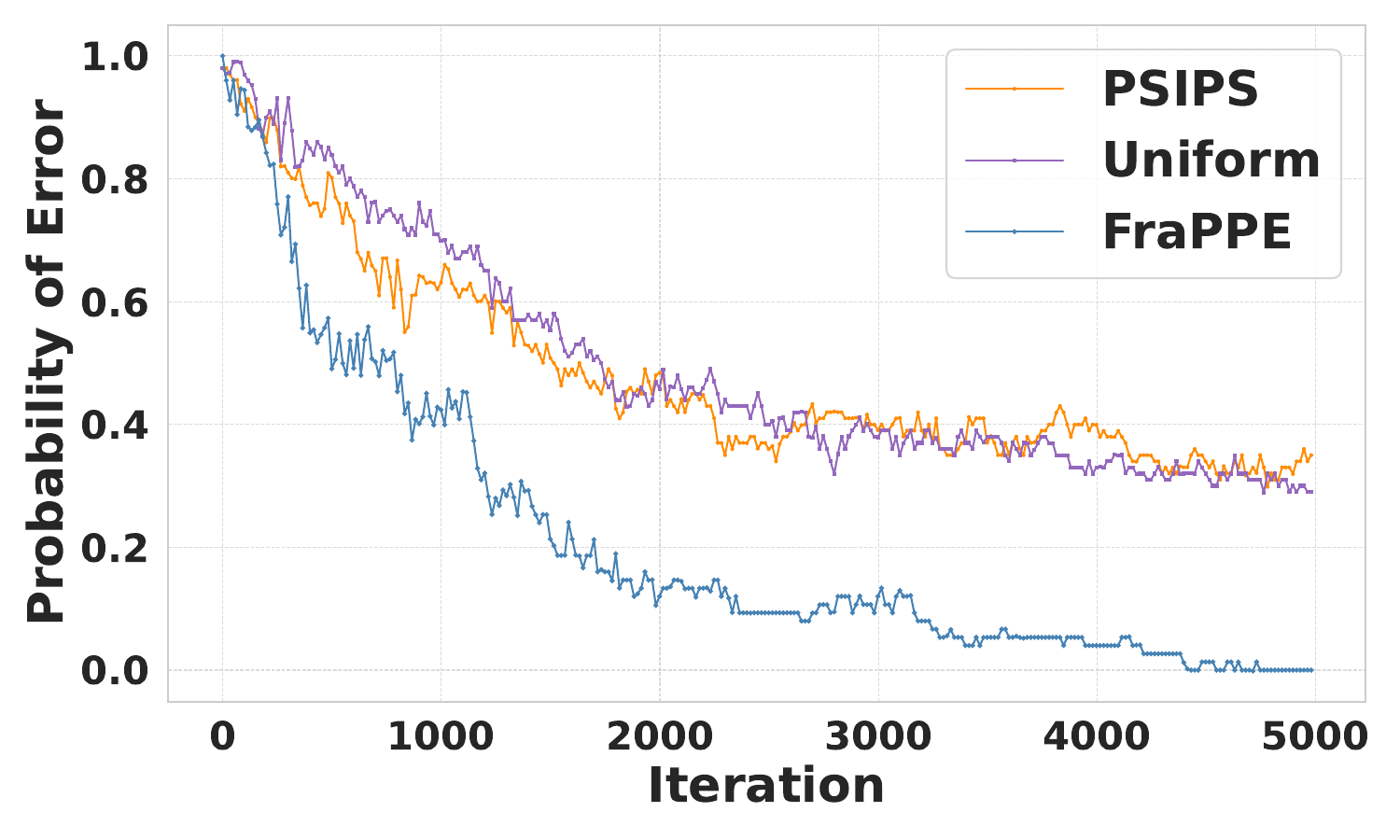}
    \caption{Error probability evolution for Cov-Boost Trial.}\label{fig:probability_of_error}
\end{minipage}\vspace*{-1.2em}
\end{figure}

\noindent\textbf{Observation.} (a) \textbf{Lower and Stable Sample Complexity.} In Figure~\ref{fig:stopping_covboost}, we plot the stopping times for $\delta = 0.01$ that validates the frugality of $\framework$ in terms of median sample complexity. 
Additionally, we observe very less variability compared to PSIPS which leverages posterior sampling, which makes \textit{$\framework$ a more stable strategy}. For $\delta =0.1$, \cite{kone2024paretosetidentificationposterior} states that PSIPS has an average sample complexity of $20456$, while TnS of \cite{garivier2024sequential} reports it to be $17909$.  $\framework$ exhibits an average sample complexity of $3523$ ($\sim$5-6X less) over $100$ independent experiments. 

(b) \textbf{Low Error probability.} In Figure \ref{fig:probability_of_error}, we visualise the evolution of averag error $\indicator(\cP_t \neq \cP^{\star})$ for \framework~ against PSIPS and Uniform explorer. \framework~ reduces the error rate significantly faster. 

\noindent\textbf{Experiment 2: Effect of Correlated Objectives.} 
We also test \framework~ on the Gaussian instance of~\cite{kone2024paretosetidentificationposterior} used with $5$ arms, $2$ objectives, and the covariance matrix with unit variances. Correlation coefficients are varied from $-1$ to $1$ with grid size $0.1$. We fix $\delta = 0.01$. We compare the empirical performance with PSIPS as it is the only algorithm tackling correlated objectives. 

\noindent \textbf{Observation: Uniformly Better Performance across Correlations.} We plot the average sample complexity (averaged over 1000 runs for each correlation coefficient) in Figure~\ref{fig:effect_of_rho}. It shows that $\framework$ achieves better sample frugality than PSIPS across all the values of correlation coefficients. Notably, the standard deviation of $\framework$ is also narrower indicating its stability across instances.

\noindent \textbf{Summary.} Thus, based on the results from Experiments $1$ and $2$ and computational complexity guaranty, we conclude that \textit{\framework~ is the PrePEx algorithm with the lowest empirical stopping time (5X lower), better true positive rate, and lower computational complexity among the optimisation-based baselines}. Further results on runtime analysis of \framework{} are provided in Appendix~\ref{app:runtime}. 
\vspace*{-.5em}\section{Discussions and Future Works}\vspace*{-.5em}
We study the problem of preference based pure exploration with fixed confidence (PrePEx) that aims to identify all the Pareto optimal arms (and policies) for a multi-objective (aka vector-valued) bandit problem with an arbitrary preference cone. 
We study the existing lower bound for this problem and through three structural observations regarding the Pareto optimal policies, alternating instances, and the Alt-set, we reduce it to a tractable optimisation problem. 
We further apply Frank-Wolfe based optimisation method and a relaxed stopping rule to propose \framework. 
\textit{\framework~is the first PrePEx algorithm that is asymptotically optimal, can handle generic exponential family distributions}, and thus, resolving most of the open questions in~\citep{garivier2024sequential}. 
Experiments show that \framework~ achieves around $5$X less sample complexity to identify the exact set of Pareto optimal arms across instances. 

Throughout this work, we have assumed to know the exact cone $\cC$. Thus, learning the cone simultaneously while solving PrePEx is an interesting future direction of research. Another future work is to scale \framework~to practical applications of PrePEx, e.g. aligning large language models with RL under Human Feedback (RLHF)~\citep{ji2023beavertails}. It would be also interesting to extend our algorithm design from independent arms to structured bandits (e.g, linear, contextual).

\vspace*{-.5em}\section*{Acknowledgment}\vspace*{-.5em}
We acknowledge ANR JCJC project REPUBLIC (ANR-22-CE23-0003-01), PEPR project FOUNDRY (ANR23-PEIA-0003), and Inria-ISI Kolkata associate team SeRAI. 

\bibliographystyle{apalike}
\bibliography{shared/ref}



\clearpage
\appendix
\part{Appendix}
\parttoc\newpage
\section{Notations and Extended Related Works}
\subsection{Table of Notations}
\renewcommand{\arraystretch}{1.5}
%
\begin{longtable}{ll}
\hline
\textbf{Notation} & \textbf{Description}  \\
\hline  
$\cC,\vectdom$ & Given convex cone and induced partial order  \\
\hline 
$\cCbar$ & $\defn \cC \cap \mathbb{B}(1)$ where $\mathbb{B}(1)$ is an unit ball\\
\hline
$\dualcone$ & Dual cone of $\cCbar$\\
\hline
$\pcone$ & Polar cone of a cone $\dualcone$ \\
\hline
$K,L$ & Number of arms and objectives  \\
 \hline
$\pftrue,\pfestm$ & Ground truth Pareto set and estimated Pareto set  \\
 \hline
$M \in \mR^{K \times L}$ & matrix with mean reward of $K$ arms \\
 \hline
$\bomega$ & Allocation vector \\
 \hline
$\ppolset$ & Family of Pareto optimal policies \\
 \hline
$\muestm{\ell}{k}{t},\mutrue{\ell}{k}$ & Estimated and true of mean rewards \\
\hline
$\altij$ & Set of alternating instances of $\trueM$ for fixed $\opt_i \in \ppolset$ and $\bpi_j \nbd{\opt_i}$\\
\hline
$\mathrm{int}(X)$ & Interior of a set $X$ \\
\hline
$\ch{X}$ & Convex Hull of a set $X$ \\
\hline
$S_i$ & $S_i \defn \left\{ M\in\mathcal{M}, \exists \bz\in\cC: \bz^{\top}M\opt_i \geq \bz^{\top}M\bpi, \forall \bpi\in\nbd{\opt_i}  \right\}$ \\
\hline
$\delta$ & Confidence parameter\\
\hline
$c(t,\delta)$& Stopping threshold \\
\hline
$c_1(\cM)$ and $c_2(\cM)$& Constants such that $\forall t > c_1(\cM), c(t,\delta) \leq \log\left( \frac{c_2(\cM)t}{\delta}\right)$ \\
\hline
$\subdiff$ & $r$-subdifferential set (Equation~\eqref{eq:subdiffset})\\
\hline
$\altM$ & Confusing instance w.r.t $\trueM$\\
\hline
$\tol$ & Error tolerance in linear optimisation \\
\hline
$N_{a,t}$ & Number of pulls of arm $a$ at time $t \in \mathbb{N}$\\
\hline
$\bz$ & Preference vector from the cone $\cCbar$\\
\hline 
$\opt_i$& Pareto optimal policy\\
\hline
$\bpi_j$& Neighbour of the Pareto optimal policy $\opt_i$ \\
\hline
$\simplex$& $K$-dimensional simplex\\
\hline
$\simplex{}_{\gamma}$& $\lbrace \policy \in \simplex : \min_k \policy_k \geq \gamma\rbrace$\\
\hline
$W$&Preference Matrix\\
\hline
$\tau_{\delta}$& $(1-\delta)$-correct Stopping time\\
\hline
$\by$& Vector from polar cone $\pcone$\\
\hline
$D$ & Upper bound on infinite norm of gradient $\nabla_{\bomega}f_{ij}(\bomega\mid\trueM)$\\
\hline
$\cM$& Parameter space of mean matrix\\
\hline
$p$ & Number of Pareto optimal arms, i.e cardinality of the set $\cP^{\star}$\\
\hline
$e_k$& $K$-dimensional vector with $1$ at $k$-th index, and $0$ elsewhere\\
\hline
\end{longtable}

\newpage
\subsection{Extended Related Works}\label{app:related}
Pure exploration in multi-armed bandits (MAB) has been extensively studied, particularly in the context of best-arm identification (BAI) under fixed-confidence~\citep{kaufmann2016complexity,garivier2016optimal} and fixed-budget~\citep{even2006action} settings. In this paper we restrict ourselves to fixed-confidence setting where the goal is to recommend the arm with highest mean reward with probability at least $(1-\delta)$ for a given confidence parameter $\delta\in(0,1)$. Specially, in the last decade we have witnessed emergence of several algorithmic strategies to tackle the problem of BAI in fixed-confidence setting. To name a few, these include action elimination based approach in~\citep{even2006action,kone2023adaptive}, LUCB strategies introduced in~\citep{audibert2010best}, further in~\citep{jamieson2014lil}, tracking a information theoretic lower bound (Track-and-Stop)~\citep{kaufmann2016complexity}, extended by gamification of the lower bound~\citep{degenne2020gamification} or acceleration by leveraging projection-free convergence towards optimal allocation~\citep{wang2021fast}.

\noindent \textbf{Pure exploration with vector feedback.} Traditionally, pure exploration in BAI focuses on scalarised rewards~\citep{carlsson2024pure}. In contrast, we consider the problem of preference-based pure exploration (PrePEx), where the agent receives a vector reward upon playing an arm of dimension equal to the number of objectives ($L$) under study partially ordered by a preference cone. Recently, \cite{prefexp} tackle this problem by deriving a novel information-theoretic lower bound on expected sample complexity that captures the influence of the preference cone's geometry. They proposed the Preference-based Track-and-Stop (PreTS) algorithm, which leverages a convex relaxation of the lower bound and demonstrates asymptotic optimality through new concentration inequalities for vector-valued rewards. \cite{kone2024paretosetidentificationposterior}, on the other hand leverages concentration on priors over reward vectors by posterior sampling to recommend the exact Pareto optimal arms. Other notable works include \cite{auer2016pareto}, who explored Pareto Set Identification (PSI) in MABs, and \cite{ararat2023vector}, who provided gap-based sample complexity bounds under cone-based preferences. \cite{korkmaz2023bayesian} extended these ideas to Gaussian process bandits, while \cite{karagozlu2024learning} developed adaptive elimination algorithms for learning the Pareto front under incomplete preferences. \cite{garivier2024sequential} proposed a gradient-based track-and-stop strategy for exact Pareto front identification with known preference cones. 

\noindent \textbf{Pure exploration with multiple correct answers.} We further connect the premise of PrePEx problem with the literature on pure exploration with multiple correct answers~\citep{Degenne2019PureEW,wang2021fast}. Unlike standard BAI, this setting assumes existence of multiple optimal answers, though proceeds to identify any one of them. The philosophy of top-$K$ type strategies~\citep{jourdan2024solving,jourdan2022top,chen2017nearly,you2023information} extend this setting by efficiently identifying the top-$k$ optimal answers. On the contrary, in PrePEx we aim to find not k, but all of the correct answers (Pareto optimal arms), being as frugal as possible.

\noindent \textbf{Duelling bandits.} Preference-based bandit problems have traditionally been studied in the dueling bandit framework, that is limited to pairwise comparisons between arms~\citep{zoghi2015copeland,busa2014pac,szorenyi2015online,chen2017dueling}. These models focus on learning a global ranking or identifying a \textit{Condorcet} or \textit{Copeland winner} based on binary preference outcomes. While such frameworks capture relative preference information effectively, they typically assume a fixed, often discrete preference structure and are studied under regret minimization. In contrast, PrePEx generalises this idea by considering vector-valued feedback and encoding preferences via convex cones, which enables us to extend towards more complex, continuous, and possibly incomplete preference structures. Unlike dueling bandits, PrePEx targets pure exploration with statistical guarantees, hoping to identify all the Pareto optimal arms with high confidence under a richer preference model.

\noindent \textbf{Connection to constrained pure exploration and safe RL.} PrePEx also generalises the setting of pure exploration under \textit{known} linear constraints~\citep{carlsson2024pure} though the notion of preference cone over objective is redundant for single objective bandit instances. 
\noindent In Safe RL, we consider presence of different risk constraints (e.g., on fairness, resource allocation etc.)~\citep{achiam2017constrained,gu}. This setting resonates with PrePEx as these constraints can be modelled as conflicting objectives while performing optimisation. The preference cone in PrePEx can also encode these constraint if they are implicit (one objective must not worsen).  \newpage
\section{Structures of Confusing Instance}
In this section, we will first define the Pareto policy set and prove some of its useful properties in \ref{app:pareto_policy_nature}. Then, in \ref{app:polar_cone_alt} we will introduce the novel polar cone characterisation of the alternating instances and finally in \ref{app:gaussian_conf_inst} we will show the closed form of the alternating instance under multivariate Gaussian reward vectors with non-diagonal variance-covariance matrix.

\subsection{Pareto Optimal Policies to Pareto Optimal Arms}\label{app:pareto_policy_nature}

\begin{definition}[Pareto Policies]
\label{defn:pareto-policy}
The set of Pareto policies is given by $\ppolset \eqdef \left\{\bpi:\trueM^{\top}\tilde{\bpi} \vectle M^{\top}\bpi \ \forall \ \tilde{\bpi} \in \simplex \setminus \{\bpi\}\right\}$. 
\end{definition}

\noindent We now have some useful results regarding the set of Pareto policies. 

\begin{lemma}[Compactness of Pareto Policy set]\label{lemm:props_pareto_policy}
The set of Pareto policies $\ppolset$ is a compact set.
\end{lemma}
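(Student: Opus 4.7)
The plan is to verify the two hypotheses of Heine--Borel: $\ppolset$ is bounded and closed in $\mathbb{R}^K$. Boundedness is immediate since $\ppolset \subseteq \simplex$, which is contained in the unit box, so the real work lies in establishing closedness.

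For closedness I would leverage the polyhedral structure of the preference cone. Using the halfspace representation $\cCbar = \{\bx \in \mathbb{R}^L : W\bx \geq 0\}$ with $W\in\mathbb{R}^{m\times L}$, a policy $\bpi\in\simplex$ is dominated by some $\tilde{\bpi}\in\simplex$ iff the vector $WM(\tilde{\bpi} - \bpi) \in \mathbb{R}^m$ is componentwise nonnegative and has at least one strictly positive coordinate. For each row index $i \in [m]$ of $W$, associate the parametric LP
\[
v_i(\bpi) \defn \max_{\tilde{\bpi}\in\simplex}\ \bigl(WM(\tilde{\bpi}-\bpi)\bigr)_i \quad \text{s.t.} \quad WM(\tilde{\bpi}-\bpi)\ge 0.
\]
Then $\bpi \in \ppolset$ iff $v_i(\bpi) = 0$ for every $i \in [m]$.

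The main technical step is to establish continuity of each $v_i$ on $\simplex$. The feasible set is always nonempty (take $\tilde{\bpi}=\bpi$), uniformly contained in the compact set $\simplex$, and depends continuously on the RHS parameter $WM\bpi$. An application of Berge's maximum theorem (or, equivalently, standard LP sensitivity results under a constant recession cone) yields continuity of $v_i$. Consequently, $\ppolset = \bigcap_{i=1}^m v_i^{-1}(\{0\})$ is a finite intersection of closed sets, hence closed. Combined with boundedness, this yields compactness.

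The main subtlety I anticipate is that the dominance relation is defined using the non-closed set $\cCbar\setminus\{0\}$, so a naive sequential proof -- take $\bpi_n\to\bpi$ with each $\bpi_n$ Pareto, assume $\bpi$ is dominated by some $\tilde{\bpi}$, and attempt to pass to the limit -- does not go through, since the condition ``$WM(\tilde{\bpi}-\bpi_n) \in \mathbb{R}_+^m\setminus\{0\}$'' is not preserved as $n\to\infty$: the nonzero coordinate may vanish. The LP reformulation circumvents this by decomposing the single ``nonzero'' test into $m$ continuous scalar tests, one per facet of $\cCbar$, each of which interacts cleanly with continuity.
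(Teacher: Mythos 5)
Your proof is correct, and it is substantively stronger than the paper's. Both arguments share the same skeleton (Heine--Borel: boundedness is trivial since $\ppolset\subseteq\simplex$, so everything rides on closedness), but the paper disposes of closedness with the bare assertion that the complement of $\ppolset$ is open, which is precisely the claim that needs proving; you actually prove it. Your diagnosis of why the naive sequential argument fails is exactly right: strict dominance lives in $\cCbar\setminus\{0\}$, so the strictly positive coordinate witnessing domination of a limit policy $\bpi$ need not survive when $\bpi$ is replaced by nearby $\bpi_n$, because the coordinates that were exactly zero can go negative. Your parametric-LP reformulation $\bpi\in\ppolset \iff v_i(\bpi)=0$ for all $i$ repairs this by splitting the ``nonzero'' test into finitely many continuous scalar tests, and writing $\ppolset$ as a finite intersection of zero sets of continuous functions (intersected with the closed set $\simplex$) is a clean way to conclude. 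The one step that deserves to be made explicit is the \emph{lower} hemicontinuity of the feasible-set correspondence $\bpi \mapsto \{\tilde{\bpi}\in\simplex : WM\tilde{\bpi}\ge WM\bpi\}$, which Berge's theorem requires and which can fail for general parametric constraint systems; here it holds because the constraint matrix is fixed and only the right-hand side varies, so Hoffman's error bound (equivalently, the Lipschitz continuity of polyhedral multifunctions on their domain) applies, and nonemptiness is guaranteed by the feasible point $\tilde{\bpi}=\bpi$. With that citation in place your argument is complete; it also yields, as a by-product, the classical fact that the efficient set of a linear vector optimisation over a polytope is a union of faces, which the paper's one-line proof never touches.
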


\begin{proof}
\noindent To prove compactness of $\ppolset$, we leverage Heine-Borel theorem~(refer Theorem \ref{thm:heine-borel}). It is evident that $\ppolset$ is an subspace of the Euclidean space $\mathbb{R}^K$. Since complement of $\ppolset$ is an open set, then $\ppolset$ is closed. It is also bounded because all elements in $\ppolset$ are policies which consist of entries bounded in the interval $[0,1]$. Thus, according to Heine-Borel theorem we can equivalently state that $\ppolset$ is compact, or in other words every open cover of $\ppolset$ has a finite sub-cover.
\end{proof}


\begin{reptheorem}{thm:policy-basis}[Basis of $\ppolset$]
Pareto optimal policy set $\ppolset$ is spanned by $p$ pure policies corresponding to $p$ Pareto optimal arms, i.e. $\{\opt_i\}_{i=1}^p$. Here, $\opt_i$ is the pure policy with support on only arm $i$.
\end{reptheorem}

\begin{proof}
Consider the set of pure policies whose support lies on the Pareto front. For each $i \in \pftrue$ define the pure policy $\bpi^{\ast}_{i} \in [0,1]^{K}$ associated with arm $i$ as:
\begin{eqnarray*}
\bpi^{\ast}_{i}[j] &=&
\begin{cases}
1, \ j = i \\
0, \text{ otherwise }
\end{cases}
\end{eqnarray*}
Further, since $i \in \pftrue, \ M^{\top} \bpi \vectle M^{\top}\tilde{\bpi}_i, \ \forall \ \bpi \in \Pi \setminus \ppolset $. With this construction, viewing each policy as a vector in $[0,1]^{K}$, we have that the set of pure policies is linearly independent, i.e., 
\[
c_1 \tilde{\bpi}_1 + c_2 \tilde{\bpi}_2 + \ldots + c_{\pfsize}\tilde{\bpi}_{\pfsize} = 0 \implies c_{1} = c_{2}= \ldots = c_{\pfsize} = 0
\]
Let $\bpi \in \ppolset \setminus \ppolbasis$, where $\ppolbasis$ is the set of pure Pareto strategies. Since $\bpi$ is a randomized policy (Lemma~\ref{lemm:props_pareto_policy}) there exists constants $ p_j \ge 0, \ \sum_j p_j = 1, \ \text{ such that } \ \bpi=p_1 \tilde{\bpi}_1 + p_2 \tilde{\bpi}_2 + \ldots + p_{\pfsize} \tilde{\bpi}_{\pfsize}$. By linearity of dot product, $M^{\top}\bpi = p_1 M^{\top}\tilde{\bpi}_1 + p_2M^{\top}\tilde{\bpi}_2 \ldots + p_{\pfsize} M^{\top} \tilde{\bpi}_{\pfsize}$. Since each $\tilde{\bpi}_{i} \in \ppolset$ and $p_j \ge 0$, $ p_1 M^{\top} \tilde{\bpi}_1 + p_2 M^{\top}\tilde{\bpi}_{2} + \ldots + p_{\pfsize} M^{\top}\tilde{\bpi}_{\pfsize} \vectneq M^{\top} \bpi^{\prime}$ for all $\bpi^{\prime} \in \Pi \setminus \ppolset$. 

Thus, any policy $\bpi \in \ppolset$ can be expressed as a linear combination of the Pareto pure strategies $\bpi_i \in \ppolbasis, i \in \pftrue$. Hence, proved. 
\end{proof}

\subsection{Polar Cone Characterization of Alternative Instances}\label{app:polar_cone_alt}
\begin{repproposition}{prop:polar}[Polar Cone Representation of Alternating Instances]
    For all $\trueM \in \cM$ and given $\opt_i \in \ppolset$ and $\bpi_j \in \mathrm{nbd}(\opt_i)$, the set of alternating instances takes the explicit form $ \altij = \{ \altM \in \mathcal{M}\setminus \{M\} : \exists \by \in \mathrm{bd}(\pcone) \text{ such that } \altM\bpi_j = \altM\opt_i + \by \}$ where $\by$ are defined with the polar cone of $\cCbar$, i.e. $\pcone\defn \{ \by : \by \in \reals_{+}^{L} \text{ s.t. } \langle \bz, \by \rangle \leq 0\,, \forall \bz \in \dualcone \} $.
\end{repproposition} 
\begin{proof}
    Let us remind the definition of the set of confusing instances $$\altij \defn \left\{\tilde{M} \in \instset\setminus\{M\}: \exists \ \bz \in \dualcone, \ \langle \mathrm{vect}\left({\bz(\bpi_j-\opt_i)^{\top}}\right),\mathrm{vect}\big({\tilde{M}}\big)\rangle=0 \right\},$$

    \noindent which implies that $\altM(\bpi_j -\opt_i) \in \mathrm{bd}(\pcone)$. We characterize the polar cone, using Farkas' lemma. For $\bx \in \cCbar, \bx = \sum_{i=1}^{L'} \alpha_{i}w_{i},  \alpha_i > 0 \forall i\in[L]$, where $W^{L\times L} = \{ w_1, w_2, ..., w_{L}\}$, $w_i$'s being the basis rays of the cone $\cCbar$. 
    
    Using Farkas' lemma, the polar cone can be characterized as $\pcone\defn \{ W^{\top}\brho : \brho \in \reals_{+}^{L} \text{ satisfies } \langle \bz, W^{\top}\brho\rangle \leq 0\,, \forall \bz \in \cCbar \} $. Therefore, by Projection Lemma, $\altM(\bpi_j -\opt_i ) = W^{\top}\brho \in \mathrm{bd}(\pcone)$. 
    Thus, we can write
\begin{align*}
    \altij \defn  \left\{ \altM \in \cM\setminus \{M\}: \exists \,W^{\top}\brho \in \mathrm{bd}(\pcone)~\text{such that}~\altM\bpi_j = \altM\opt_i + W^{\top}\brho \right\}.
\end{align*}
Defining $\by = W^{\top}\brho$ concludes the proof.    
\end{proof}
\subsection{Characterisation of Confusing Instances for Multivariate Gaussians}\label{app:gaussian_conf_inst}

\begin{lemma}[Confusing instances for Multivariate Gaussian rewards]\label{lemm:Gaussian_conf_instance}
    Let the reward vectors follow Multi-variate Gaussian distributions with diagonal covariance matrix $\Sigma$ with non-zero diagonal entries. Under the polar cone characterisation of the alternating instance, the polar vector has the closed form expression $\by = \trueM\Delta(i,j) - \bz^{\top}\trueM\Delta(i,j)(\bz\bz^{\top})^{\dagger}\bz$, where $\Delta(i,j) \defn (\opt_i - \bpi_j)$, $\Sigma_{0} \defn \bz^{\top}\Sigma \bz$ for given $\opt_i \in \ppolset$ and $\bpi_j \in \nbd{\opt_i}$. Also, $A^{\dagger}$ denotes the pseudo-inverse of matrix $A$. The inverse characteristic time is then given by 
    \begin{align*}
    \cT_{\cM,\cCbar}^{-1} =\max_{\bomega \in \simplex} \min_{\opt_i \in \ppolset}\min_{\bpi_j \in \nbd{\opt_i}}\min_{\bz\in\dualcone} \frac{\left(\bz^{\top}\trueM\Delta(i,j)\right)^2}{2\Sigma_0 \|\Delta(i,j)\|_{\mathrm{Diag}(1/\bomega_k)}^2}
\end{align*}
\end{lemma}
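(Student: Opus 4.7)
The plan is to reduce the inner optimisation to a constrained quadratic minimisation by exploiting two Gaussian-specific simplifications: (i) the KL divergence between $\mathcal{N}(\bz^{\top}M_k, \bz^{\top}\Sigma\bz)$ and $\mathcal{N}(\bz^{\top}\tilde{M}_k, \bz^{\top}\Sigma\bz)$ collapses to $(\bz^{\top}(M_k-\tilde{M}_k))^2/(2\Sigma_0)$ where $\Sigma_0 = \bz^{\top}\Sigma\bz$; and (ii) the objective depends on $\tilde{M}$ only through the scalars $\eta_k \defn \bz^{\top}(\tilde{M}_k - M_k)$. Thus the only feature of the polar cone membership condition that matters is its projection on $\bz$, namely $\bz^{\top}\by = 0$ (boundary condition from Proposition~\ref{prop:polar}), which in $\eta$ coordinates reads $\sum_k \Delta'_k \eta_k = -\bz^{\top}M\Delta'$, where $\Delta' \defn \bpi_j - \opt_i = -\Delta(i,j)$.

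Next, I would solve
\begin{equation*}
\min_{\eta \in \reals^K} \frac{1}{2\Sigma_0}\sum_{k=1}^K \bomega_k \eta_k^2 \quad \text{subject to} \quad \sum_{k=1}^K \Delta'_k \eta_k = \bz^{\top}M\Delta(i,j)
\end{equation*}
by standard Lagrangian duality. The stationary condition $2\bomega_k\eta_k + \lambda \Delta'_k = 0$ yields $\eta_k^{\star} = -\lambda\Delta'_k/(2\bomega_k)$, and enforcing the constraint fixes $\lambda = -2\bz^{\top}M\Delta(i,j)/\|\Delta(i,j)\|^2_{\mathrm{Diag}(1/\bomega)}$ (using $(\Delta'_k)^2 = \Delta(i,j)_k^2$). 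Plugging back gives the optimal value $(\bz^{\top}M\Delta(i,j))^2/(2\Sigma_0\|\Delta(i,j)\|^2_{\mathrm{Diag}(1/\bomega)})$, which, when combined with the outer $\max_{\bomega}\min_{\opt_i}\min_{\bpi_j}\min_{\bz}$ operators inherited from Equation~\eqref{eqn:finalreduction}, produces exactly the displayed characteristic time.

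Recovering the polar vector $\by$ requires a post-processing step, since the minimiser in $\eta$ does not uniquely determine $\tilde{M}$: only the component of each column $\tilde{M}_k - M_k$ along $\bz$ is pinned down. I would take the minimum-Frobenius-norm representative $\tilde{M}_k = M_k + \eta_k^{\star}\bz/\|\bz\|^2$, which is natural and respects the remaining degrees of freedom. Then $\by = \tilde{M}\Delta' = M\Delta' + \frac{\bz}{\|\bz\|^2}\sum_k \eta_k^{\star}\Delta'_k$, and plugging in $\sum_k \eta_k^{\star}\Delta'_k = \bz^{\top}M\Delta(i,j)$ produces the claimed expression once one rewrites $\bz/\|\bz\|^2 = (\bz\bz^{\top})^{\dagger}\bz$ (which follows immediately from the rank-one pseudo-inverse identity $(\bz\bz^{\top})^{\dagger} = \bz\bz^{\top}/\|\bz\|^4$). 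A sanity check that $\bz^{\top}\by = 0$ then verifies boundary membership in $\pcone$.

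The main obstacle I expect is the bookkeeping around the multiple sign conventions ($\Delta$ vs $\Delta'$) and the non-uniqueness of $\tilde{M}$: the objective is invariant under modifications of $\tilde{M}_k$ orthogonal to $\bz$, so the formula for $\by$ is only meaningful once one commits to a canonical lift (here, the minimum-norm lift). Everything else is routine constrained least-squares algebra. A minor but worth-noting subtlety is justifying that the joint inner $\min_{\tilde{M}}\min_{\bz}$ is equivalent to using the same $\bz$ in both the alternative-set constraint and the KL term; this follows because any $\tilde{M} \in \altij$ admits at least one witness $\bz$, and coupling the two optimisations can only lower the objective, so the two formulations coincide at the optimum.
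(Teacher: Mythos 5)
Your proposal is correct and follows essentially the same route as the paper's proof: collapse the Gaussian KL to $(\bz^{\top}(\trueM_k-\tildeM_k))^2/(2\Sigma_0)$, impose the single linear boundary constraint via a Lagrange multiplier, solve the resulting weighted least-squares problem in closed form, and recover $\by$ through the rank-one lift along $\bz$ (the paper performs the same lift implicitly when it passes from $\bz^{\top}\tildeM$ to $\tildeM$ via $(\bz\bz^{\top})^{\dagger}\bz$; your explicit acknowledgement that this step requires choosing a canonical minimum-norm representative is a point the paper glosses over). Two minor caveats: your $\by=\tildeM(\bpi_j-\opt_i)$ is the negative of the lemma's $\by=\tildeM\Delta(i,j)$ — this is an inconsistency internal to the paper, whose Proposition~\ref{prop:polar} uses your convention while the lemma uses the opposite one, and it does not affect the characteristic time; and in your final remark the inequality direction is backwards — coupling the witness $\bz$ in $\Lambda_{ij}(\trueM)$ with the $\bz$ in the KL \emph{restricts} the feasible set and so can only \emph{raise} the infimum (indeed the fully decoupled version degenerates to zero), so the coupled formulation is the intended one rather than a consequence of the decoupled one.
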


\begin{proof}
    First, for a fixed $\opt_i \in \ppolset$ and $\bpi_j \in \nbd{\opt_i}$ we proceed by looking at the main optimisation problem under correlated Gaussian assumption,
    \begin{align*}
        &\min_{\tildeM \in \altij}\min_{\bz \in \dualcone} \sumk \bomega_k \kl{\bz^{\top}\trueM_k}{\bz^{\top}\tildeM_k}\\
        =& \min_{\tildeM:\bz^{\top}\tildeM(\opt_i - \bpi_j) = 0}\min_{\bz \in \dualcone} \sumk \bomega_k \frac{(\bz^{\top}\trueM_k - \bz^{\top}\tildeM_k)^2}{2\bz^{\top}\Sigma\bz}
    \end{align*}
where the last line holds due to the definition of Alt-set $\Bar\partial\altij$. Thus, we incorporate the boundary constraint and write the Lagrangian dual with Lagrangian multiplier $\gamma > 0$ as
\begin{align}
    \cL(\tildeM,\gamma) =\min_{\bz\in\dualcone} \sumk \left(\bomega_k \frac{(\bz^{\top}\trueM_k - \bz^{\top}\tildeM_k)^2}{2\bz^{\top}\Sigma\bz} + \gamma \bz^{\top}\tildeM_k (\opt_i - \bpi_j)_k\right)\label{eqn:lagrangian_main}
\end{align}

\noindent We differentiate \eqref{eqn:lagrangian_main} with respect to $\tildeM_k$ and equate it to zero to get the minima
\begin{align*}
    &\nabla_{\tildeM_k} \cL(\tildeM,\gamma) = 0\\
    \implies& \bomega_k \frac{\bz^{\top}\trueM_k - \bz^{\top}\tildeM_k}{\bz^{\top}\Sigma\bz} = \gamma (\opt_i - \bpi_j)_k \\
    \implies & \bz^{\top}\tildeM_k = \bz^{\top}\trueM_k - \frac{\gamma(\opt_i - \bpi_j)_k \bz^{\top}\Sigma\bz}{\bomega_k}
\end{align*}

We plug back the value of $\bz^{\top}\tildeM_k$ in \eqref{eqn:lagrangian_main} to get
\begin{align}
    \cL(\gamma) = \sumk\left( \gamma\bz^{\top}\trueM_k (\opt_i - \bpi_j)_k - \frac{\gamma^2(\opt_i - \bpi_j)_k^2 \bz^{\top}\Sigma\bz}{2\bomega_k}\right)\label{eqn:Lagrangian_mid}  
\end{align}

\noindent We again differentiate \eqref{eqn:Lagrangian_mid} with respect to $\gamma$ and equate it to zero to get closed form of $\gamma$,
\begin{align*}
    &\nabla_{\gamma}\cL(\gamma) = 0\\
    \implies & \gamma \sumk\frac{(\opt_i - \bpi_j)_k^2 \bz^{\top}\Sigma\bz} {\bomega_k} = \sumk\bz^{\top}\trueM_k (\opt_i - \bpi_j)_k\\
    \implies& \gamma = \frac{\sumk\bz^{\top}\trueM_k (\opt_i - \bpi_j)_k}{\bz^{\top}\Sigma\bz\sumk \frac{(\opt_i - \bpi_j)_k^2}{\bomega_k}}
\end{align*}

We define $\Delta(i,j) \defn (\opt_i - \bpi_j)$ and $\Sigma_0 \defn \bz^{\top}\Sigma\bz$ and lastly $\mathrm{Diag}(1/\bomega_k)$ as a $K\times K$ diagonal matrix with $k$-th entry as $\frac{1}{\bomega_k}$. Then $\gamma = \frac{\bz^{\top}\trueM\Delta(i,j)}{\Sigma_0 \|\Delta(i,j)\|_{\mathrm{Diag}(1/\bomega_k)}^2}$. Thus, we finally have
\begin{align*}
    &\bz^{\top}\tildeM_k = \bz^{\top}\trueM_k - \frac{\bz^{\top}\trueM\Delta(i,j)}{ \|\Delta(i,j)\|_{\mathrm{Diag}(1/\bomega_k)}^2} \frac{\Delta(i,j)_k}{\bomega_k}\\
    \implies & \bz^{\top}\tildeM = \bz^{\top}\trueM - \frac{\bz^{\top}\trueM\Delta(i,j)}{ \|\Delta(i,j)\|_{\mathrm{Diag}(1/\bomega_k)}^2} \Delta_{\bomega}\\
    \implies& \tildeM = \trueM - \frac{\bz^{\top}\trueM\Delta(i,j)}{ \|\Delta(i,j)\|_{\mathrm{Diag}(1/\bomega_k)}^2}(\bz\bz^{\top})^{\dagger}\bz\Delta_{\bomega}
\end{align*}
where $\Delta_{\bomega}$ is a $K$-dimensional vectors with $k$-th component being $\frac{\Delta(i,j)_k}{\bomega_k}$ and $A^{\dagger}$ denotes the pseudo-inverse of matrix $A$. 
Now, from the polar cone characterisation discussed in Appendix \ref{app:polar_cone_alt} we know $\tildeM\Delta(i,j) = \by \in \bd(\pcone)$. Thus we derive the closed form of the polar cone vector as 
\begin{align*}
    \by = \trueM\Delta(i,j) - \bz^{\top}\trueM\Delta(i,j)(\bz\bz^{\top})^{\dagger}\bz,
\end{align*}
\noindent since $\langle\Delta_{\bomega},\Delta(i,j)\rangle = \|\Delta(i,j)\|_{\mathrm{Diag}(1/\bomega_k)}^2$.

\noindent Therefore, we also get the closed form of the inverse characteristic time for Multivariate Gaussian rewards as well. 

\begin{align*}
    \cT_{\cM,\cCbar}^{-1} =\max_{\bomega \in \simplex} \min_{\opt_i \in \ppolset}\min_{\bpi_j \in \nbd{\opt_i}}\min_{\bz\in\dualcone} \frac{\left(\bz^{\top}\trueM\Delta(i,j)\right)^2}{2\Sigma_0 \|\Delta(i,j)\|_{\mathrm{Diag}(1/\bomega_k)}^2}\bz^{\top}(\bz\bz^{\top})^{\dagger}\bz
\end{align*}
Note that $\bz\bz^{\top}$ is a rank-$1$ matrix and its pseudo-inverse satisfies all four Moore-Penrose conditions. Thus we can use the identity $(\bz\bz^{\top})^{\dagger} = \frac{1}{\|\bz\|^4 }\bz\bz^{\top}$. Therefore $\bz^{\top}(\bz\bz^{\top})^{\dagger}\bz = 1$. Hence, the final expression of the inverse characteristic time is given by
\begin{align*}
    \cT_{\cM,\cCbar}^{-1} =\max_{\bomega \in \simplex} \min_{\opt_i \in \ppolset}\min_{\bpi_j \in \nbd{\opt_i}}\min_{\bz\in\dualcone} \frac{\left(\bz^{\top}\trueM\Delta(i,j)\right)^2}{2\Sigma_0 \|\Delta(i,j)\|_{\mathrm{Diag}(1/\bomega_k)}^2}
\end{align*}
Hence, we conclude the proof.
\end{proof}
\newpage
\section{Continuity Results}
\label{app:Continuity}

First, we remind the lower bound on the expected sample complexity of any $(1-\delta)$-correct PrePEx algorithm,
\begin{reptheorem}{thm:lower-bound}[Lower Bound~\citep{prefexp}]
Given a bandit model $M \in \instset$, a preference cone $\cCbar$, and a confidence level $\delta \in [0,1)$, the expected stopping time of any $(1-\delta)$-correct PrePEx algorithm, to identify the Pareto Optimal Set is
\begin{equation}
    \mE[\tau_{\delta}] \ge \cT_{\trueM,\cCbar}\log\left(\frac{1}{2.4\delta}\right),
\end{equation}
where, the expectation is taken over the stochasticity of both the algorithm and the bandit instance. Here, $\cT_{M,\cCbar}$ is called the characteristic time of the PrePEx instance $(\cM, \cCbar)$ and is expressed as
\begin{equation}
\left(\cT_{M, \cCbar}\right)^{-1} \eqdef \max_{\bomega \in \simplex} \underset{F(\policy|M)}{\underbrace{\min_{\substack{\bpi_j \in \nbd{\bpi^*_i}\\\bpi^*_i \in \ppolset}} \underset{f_{ij}(\policy|M)}{\underbrace{\min_{\tilde{M} \in \altij} \underset{f_{ij}(\policy,\altM|M)}{\underbrace{\min_{\bz \in \dualcone} \sum_{k=1}^{K} \bomega_k  \kl{\bz^{\top}M_{k}}{\bz^{\top}\tilde{M}_{k}}}}}}}}\, ,
\end{equation}
where for a fixed $\opt_i \in \ppolset$ and $\bpi_j \in \nbd{\opt_i}$, $$\altij \defn \left\{\tilde{M} \in \instset\setminus\{M\}: \exists \ \bz \in \cC, \ \langle \mathrm{vect}\left({\bz(\bpi_j-\opt_i)^{\top}}\right),\mathrm{vect}\big({\tilde{M}}\big)\rangle=0 \right\}.$$
\end{reptheorem}

\noindent\textbf{Useful Notations:} For maintaining brevity, we state the useful notations related to functions under continuity analysis which are followed throughout this section and beyond.
\begin{enumerate}
    \item First, we define the following functions. For a fixed $\altM \in \altij$, 
\begin{align*}
    f_{ij}(\policy,\altM|M) \defn \min_{\bz\in\dualcone} \sumk \bomega_k \kl{\bz^{\top} M_k}{\bz^{\top}\altM_k}
\end{align*}
\item  Again we define for fixed  $\opt_i\in\ppolset$ and $\bpi_j\in\mathrm{nbd}(i)$, 

\begin{align*}
    f_{ij}(\policy|M) \defn \min_{\altM\in\altij} f_{ij}(\bomega,\altM|M)
\end{align*}
\item Finally, $F(\bomega\mid \trueM) \defn \min\nolimits_{\substack{\bpi_j \in \nbd{\bpi^*_i}\\\bpi^*_i \in \ppolset}} f_{ij}(\bomega\mid\trueM)$
\end{enumerate}
\begin{fact}\label{fact:concavity_1}
    Due to convexity of KL divergence with respect to $\bz \in \dualcone$ and convexity of the solid cone $\dualcone$, $f_{ij}(\policy|\trueM)$ is convex in $\bz$ and $\altM$, but concave with respect to $\bomega$ (Minimum over concave functions).
\end{fact}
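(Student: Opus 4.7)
The plan is to establish the three structural properties separately by working with the innermost joint objective
\[
g(\bomega, \bz, \altM) \defn \sum_{k=1}^{K} \bomega_k \kl{\bz^{\top} M_k}{\bz^{\top} \altM_k},
\]
and then tracking how the successive minimizations over $\bz$ and $\altM$ preserve the relevant properties. As a preliminary, I would invoke Assumption~\ref{assmpt:single-para-exp}: within the induced scalar exponential family indexed by $\bz^{\top} M_k$, the KL divergence is a Bregman divergence of the (convex) log-partition function $\psi$ and is therefore convex in each of its two arguments separately.

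For convexity in $\bz$ with $\bomega$ and $\altM$ fixed, the map $\bz \mapsto (\bz^{\top} M_k,\bz^{\top} \altM_k)$ is affine from $\dualcone$ into $\reals^2$. Precomposing the convex function $(\theta_1, \theta_2) \mapsto \kl{\theta_1}{\theta_2}$ with this affine map preserves convexity, and the nonnegatively-weighted sum $\sum_k \bomega_k(\cdot)$ preserves it once more. Since $\dualcone$ is convex, the inner minimization problem $\min_{\bz \in \dualcone} g(\bomega,\bz,\altM)$ is a legitimate convex program. An entirely analogous argument handles convexity in $\altM$ for fixed $\bomega$ and $\bz$: the map $\altM \mapsto \bz^{\top} \altM_k$ is linear in $\altM$, so composing with the (convex in its second argument) KL divergence and summing with nonnegative weights yields convexity in $\altM$; combined with the convexity of the set $\altij$ established in Section~\ref{sec:altset}, the minimization over $\altij$ is again a convex program.

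For concavity in $\bomega$, the key observation is that for any fixed $(\bz, \altM)$, the inner objective $g(\bomega,\bz,\altM)$ is affine (hence concave) in $\bomega$, as each KL term is a constant and $\bomega$ enters only through the linear weighting $\sum_k \bomega_k (\cdot)$. Since
\[
f_{ij}(\bomega \mid M) \;=\; \min_{(\bz,\altM) \in \dualcone \times \altij} g(\bomega,\bz,\altM),
\]
is a pointwise infimum of an affine family of functions of $\bomega$, it is concave in $\bomega$ by a standard argument. The only nontrivial step is verifying that the index set $\dualcone \times \altij$ does not depend on $\bomega$, which is immediate since $\dualcone$ depends only on $\cCbar$ and $\altij$ depends only on $M$, $\opt_i$, and $\bpi_j$ (cf.\ Proposition~\ref{prop:polar}). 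This last concavity is the property actually invoked by the Frank-Wolfe scheme in Section~\ref{sec:frank_wolfe_update}, so I expect the main (modest) obstacle to be clearly separating ``convexity of the inner problems'' from ``concavity of the resulting outer function'' in the exposition, since the statement of the fact compresses both into a single sentence.
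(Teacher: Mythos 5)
Your proposal follows essentially the same route as the paper, which states this as a Fact with the justification compressed into the statement itself (affine dependence on $\bomega$ plus pointwise minimum for concavity; convexity of the KL composed with the linear maps $\bz \mapsto \bz^{\top}M_k$, $\altM \mapsto \bz^{\top}\altM_k$ for the convexity claims). The concavity-in-$\bomega$ part is exactly right and cleanly argued: the index set $\dualcone \times \altij$ is independent of $\bomega$, and a pointwise infimum of linear functions is concave.

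One imprecision is worth flagging in the convexity step. You justify the preliminary by saying the KL is a Bregman divergence of $\psi$ and is ``therefore convex in each of its two arguments separately''; a Bregman divergence is convex in its first argument but not automatically in its second, so this inference is not valid as stated. More importantly, for convexity in $\bz$ both arguments $\bz^{\top}M_k$ and $\bz^{\top}\altM_k$ vary simultaneously with $\bz$, so separate convexity would not suffice even if it held --- your affine-precomposition step actually requires \emph{joint} convexity of $(\theta_1,\theta_2)\mapsto \kl{\theta_1}{\theta_2}$, which is what you implicitly use one sentence later. Joint convexity in the mean parametrisation is not automatic for every exponential family; the paper sidesteps this by verifying positive semidefiniteness of the Hessian in $\bz$ directly (Lemma~\ref{lemm:unique_z}, where the quadratic form reduces to a sum of squares). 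Either cite joint convexity for the specific relative-entropy form used there, or fall back on that Hessian computation; with that repair the argument is complete.
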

\begin{fact}
    As a consequence of Fact \ref{fact:concavity_1}, $F(\bomega\mid\trueM)$, being minimum among finite concave function, is concave in $\bomega$, but a non-smooth function. This function is smooth only at the points where the minimum is reached for $\opt_i$ and $\bpi_j$. 
\end{fact}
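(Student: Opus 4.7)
The plan is to treat the two assertions in Fact~2 separately, both exploiting the observation that $F(\bomega\mid\trueM) = \min_{(i,j)} f_{ij}(\bomega\mid\trueM)$ is a pointwise minimum over a \emph{finite} index set: by Theorem~\ref{thm:policy-basis} there are $p$ pure Pareto policies $\opt_i$, and each admits at most $\min\{K,L\}$ neighbouring pure policies by Carath\'eodory, so the number of pairs $(\opt_i,\bpi_j)$ with $\opt_i\in\ppolset$ and $\bpi_j\in\nbd{\opt_i}$ is bounded by $pK$.

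For \emph{concavity}, Fact~\ref{fact:concavity_1} establishes that each $f_{ij}(\cdot\mid\trueM)$ is concave on $\simplex$. The pointwise minimum of a finite family of concave functions is concave, so $F(\cdot\mid\trueM)$ is concave on $\simplex$. This step is essentially one line.

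For the \emph{smoothness characterization}, define the active set $\mathcal{A}(\bomega_0) \defn \{(i,j): f_{ij}(\bomega_0\mid\trueM) = F(\bomega_0\mid\trueM)\}$. Each $f_{ij}$ is continuous in $\bomega$ by Berge's maximum theorem applied to the inner problem $\min_{\altM\in\altij}\min_{\bz\in\dualcone}\sumk\bomega_k\kl{\bz^{\top}M_k}{\bz^{\top}\altM_k}$, using compactness of the inner domains and joint continuity of the integrand. If $\mathcal{A}(\bomega_0) = \{(i^*,j^*)\}$ is a singleton, then strict inequality $f_{i^*,j^*}(\bomega_0\mid\trueM) < f_{ij}(\bomega_0\mid\trueM)$ for $(i,j)\neq(i^*,j^*)$ propagates to a neighbourhood of $\bomega_0$, so $F \equiv f_{i^*,j^*}$ locally and inherits its smoothness. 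For fixed inner minimizers $(\altM^*,\bz^*)$, the map $\bomega \mapsto \sumk\bomega_k\kl{\bz^{*\top}M_k}{\bz^{*\top}\altM^*_k}$ is linear in $\bomega$, and an envelope (Danskin) argument lifts this to smoothness of $f_{i^*,j^*}$. Conversely, when $|\mathcal{A}(\bomega_0)|\geq 2$, the subdifferential calculus for the minimum of smooth functions gives $\partial F(\bomega_0) = \mathrm{conv}\{\nabla_\bomega f_{ij}(\bomega_0\mid\trueM): (i,j)\in\mathcal{A}(\bomega_0)\}$, which is non-singleton unless all active gradients coincide (a degenerate case), so $F$ is not differentiable at $\bomega_0$.

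The main obstacle is the envelope step on the unique-active-pair side: to assert existence of $\nabla_\bomega f_{i^*,j^*}(\bomega\mid\trueM)$ one must verify that the inner minimizers $(\altM^*, \bz^*)$ are unique and depend continuously on $\bomega$. This reduces to strict convexity of the inner objective in $(\altM, \bz)$, which Assumption~\ref{assmpt:single-para-exp} delivers via the curvature bound $\nabla^2_\theta\psi(\theta)\succeq \beta I$. Everything else---finiteness of the index set, pointwise-minimum calculus for concavity, and continuity via Berge---is standard.
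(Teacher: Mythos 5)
Your proof is correct and follows essentially the same route as the paper, which states this Fact without a separate proof and relies exactly on the pointwise-minimum-of-finitely-many-concave-functions argument you give (finiteness of the index set coming from Theorem~\ref{thm:policy-basis} and Carath\'eodory). Your extra care on the smoothness claim --- reducing the envelope/Danskin step to uniqueness and continuity of the inner minimizers $(\altM,\bz)$ --- is exactly what the paper supplies separately in Proposition~\ref{prop:continuity_alt}, so nothing is missing.
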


In \ref{app:continuity_alt_z}, we state and prove continuity results of the peeled objective functions with respect to alternating instance and the preference vector.

\subsection{Continuity w.r.t. Preferences and Alternating Instances: $f_{ij}(\policy,\altM|M)$ and $f_{ij}(\policy|M)$}\label{app:continuity_alt_z}
\begin{proposition}\label{prop:continuity_alt}
    Let $S_i \subset \cM$ be the set of mean matrices for which $\opt_i$ is the Pareto optimal policy. For given $\opt_i\in \ppolset, \bpi_j \in \mathrm{nbd}(\opt_i)$, then for all $(\bomega,M)\in \Delta_{\numarm} \times S_i$, 
    
    \noindent(a) For a given $\altM \in \mathcal{M}\setminus M$, there exists a unique $ \bz_{\mathrm{inf}} \in \intr(\dualcone)$ such that
    $$
    \infx{\bz} \defn \argmin_{\bz\in \intr(\dualcone)} \sumk \bomega_k \kl{\bz^{\top} M_k}{\bz^{\top}\altM_k}
    $$
    if $ \bz^{\top}M_k \neq 0$ and $\bz^{\top}\altM_k \neq 0$ for all $k = [1,K]$.\\
    
    \noindent(b) there exists a unique $\infx{\altM}$ such that
    $$
    \infx{\altM} \defn \argmin_{\altM\in\altij}\sumk \bomega_k \kl{\infx{\bz}^{\top} M_k}{\infx{\bz}^{\top}\altM_k}
    $$
    (c) As $f_{\bpi}$ is continuously differentiable on $\Delta_{\numarm} \times S_i$, the gradient is expressed as

    $$
    \nabla_{\bomega}f_{ij}(\bomega|M) = \min_{\altM, \bz\in\dualcone} \sumk  \kl{\bz^{\top} M_k}{\bz^{\top}\altM_k}e_k
    $$
    where, $e_k$ is a $K$-dimensional vector with $1$ in $k$-th index, $0$ elsewhere.
\end{proposition}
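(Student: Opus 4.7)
The plan is to establish parts (a)--(c) sequentially, leveraging strict convexity of the KL-divergence afforded by Assumption~\ref{assmpt:single-para-exp}, the polar-cone representation of $\altij$ from Proposition~\ref{prop:polar}, and classical parametric-optimisation results (Berge's maximum theorem and Danskin's envelope theorem).

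\textbf{Parts (a) and (b): uniqueness of the inner minimisers.} For (a), I first note that $g_{\tilde{M}}(\bz) \defn \sum_{k=1}^K \bomega_k \kl{\bz^{\top} M_k}{\bz^{\top}\tilde{M}_k}$ is continuous on the compact set $\dualcone \cap \mathbb{B}(1)$, so a minimiser exists. For uniqueness, each summand is strictly convex in $\bz$ on the non-degeneracy region $\{\bz : \bz^{\top}M_k \neq 0,\, \bz^{\top}\tilde{M}_k \neq 0\}$, because the projected scalar exponential family inherits the curvature bound $\nabla^2 \psi \succeq \beta I$ from Assumption~\ref{assmpt:single-para-exp}, yielding positive-definite Hessian contributions along every direction in which $M_k \neq \tilde{M}_k$. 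Summing with the positive weights $\bomega_k > 0$ (enforced by forced exploration on $\simplex$) preserves strict convexity of $g_{\tilde{M}}$, so $\infx{\bz}$ is unique; interiority in $\intr(\dualcone)$ follows from the solid-pointed structure of $\dualcone$ together with the stated hypothesis $\bz^{\top}M_k \neq 0 \neq \bz^{\top}\tilde{M}_k$. For (b), Proposition~\ref{prop:polar} identifies $\altij$ as a closed convex affine slice of the compact set $\cM$. Fixing $\infx{\bz} \in \intr(\dualcone)$ from (a), the map $\tilde{M} \mapsto \sum_k \bomega_k \kl{\infx{\bz}^{\top} M_k}{\infx{\bz}^{\top} \tilde{M}_k}$ is strictly convex in $\tilde{M}$ by the same curvature argument (each summand is strictly convex in $\infx{\bz}^{\top} \tilde{M}_k$, and $\infx{\bz} \neq 0$ makes $\tilde{M} \mapsto \infx{\bz}^{\top}\tilde{M}$ injective column-wise), so $\infx{\altM}$ is the unique minimiser over $\altij$.

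\textbf{Part (c): gradient formula via Danskin.} Let $g(\bomega, \tilde{M}, \bz) \defn \sum_k \bomega_k \kl{\bz^{\top} M_k}{\bz^{\top}\tilde{M}_k}$, which is jointly continuous on $\Delta_{\numarm} \times \altij \times (\dualcone \cap \mathbb{B}(1))$ and linear in $\bomega$. Since the joint minimiser $(\infx{\altM}(\bomega, M), \infx{\bz}(\bomega, M))$ is unique by (a)--(b), Berge's maximum theorem implies its continuity in $(\bomega, M) \in \Delta_{\numarm} \times S_i$. Danskin's envelope theorem then yields that $f_{ij}(\bomega|M) = \min_{\tilde{M}, \bz} g(\bomega, \tilde{M}, \bz)$ is continuously differentiable in $\bomega$, with
\begin{equation*}
\nabla_{\bomega} f_{ij}(\bomega|M) \;=\; \nabla_{\bomega} g\big(\bomega, \infx{\altM}, \infx{\bz}\big) \;=\; \sum_{k=1}^{K} \kl{\infx{\bz}^{\top} M_k}{\infx{\bz}^{\top} (\infx{\altM})_k}\, e_k\,,
\end{equation*}
which matches the claimed expression.

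\textbf{Main obstacle.} The trickiest step will be rigorously handling the non-degeneracy region in (a): the projected exponential family only inherits strict convexity where $\bz^{\top}M_k$ and $\bz^{\top}\tilde{M}_k$ lie in the interior of the natural-parameter domain, so I must argue that the minimiser is uniformly separated from the pathological boundary $\{\bz^{\top}M_k = 0\} \cup \{\bz^{\top}\tilde{M}_k = 0\}$. I expect this to follow from combining the solid-cone geometry of $\dualcone$, the polar-cone characterisation of $\altij$ in Proposition~\ref{prop:polar}, and the lower bound $\bomega_k > 0$ guaranteed by forced exploration; but making the separation quantitative (as needed to apply Berge's theorem and then Danskin) will require a careful compactness argument over $(\bomega, M) \in \Delta_{\numarm} \times S_i$.
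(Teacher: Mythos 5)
Your overall architecture matches the paper's: existence of the inner minimisers via continuity over compact sets and Berge's maximum theorem, uniqueness via strict convexity, and the gradient formula via an envelope theorem applied to the (now continuous, by uniqueness plus upper hemicontinuity) argmin map. Parts (b) and (c) are essentially identical to the paper's treatment, which invokes Theorem~\ref{thm:Hausdroff_continuity} with a constant correspondence and then Lemma~\ref{lemm:differential_existence} in place of your Danskin step.

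The one substantive divergence is in part (a), and it is where your argument is soft. You derive strict convexity of $\bz \mapsto \sum_k \bomega_k \kl{\bz^{\top}M_k}{\bz^{\top}\altM_k}$ from the curvature bound $\nabla^2_{\theta}\psi \succeq \beta$ in Assumption~\ref{assmpt:single-para-exp}. But that assumption gives strict convexity in the \emph{natural parameter}, and the function of interest is a composition with the linear maps $\bz \mapsto \bz^{\top}M_k$ and $\bz \mapsto \bz^{\top}\altM_k$; strict convexity can be destroyed along directions $\bv$ annihilated by these projections, and saying the Hessian contributions are positive definite "along every direction in which $M_k \neq \altM_k$" does not by itself rule this out. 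This is exactly the gap the paper closes with a dedicated computation (Lemma~\ref{lemm:unique_z}), which writes out the Hessian of the KL in $\bz$ explicitly and shows the quadratic form is a sum of squares that is positive precisely under the non-degeneracy hypothesis $\bz^{\top}M_k \neq 0$, $\bz^{\top}\altM_k \neq 0$ — i.e.\ the hypothesis in the proposition statement is the condition that makes the projected Hessian non-degenerate, not merely a domain restriction. Your proof needs this computation (or an equivalent argument) to be complete. Relatedly, your appeal to "$\bomega_k > 0$ enforced by forced exploration" is out of place: the proposition is quantified over all $(\bomega, M) \in \Delta_{\numarm} \times S_i$, including boundary allocations, and forced exploration is a property of the algorithm, not of the optimisation landscape being analysed here; the paper's uniqueness lemma does not rely on positivity of the weights. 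Your closing remark about uniform separation from the degenerate boundary is a fair identification of the delicate point, but the resolution is the explicit Hessian identity rather than a compactness argument over $(\bomega,M)$.
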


\begin{proof}

\textbf{Proof of Part (a).} We do this proof in two parts. First we prove the existence of $\infx{\bz}$ and then its uniqueness.

\noindent\textbf{Existence.} We refer to Theorem \ref{thm:Hausdroff_continuity} to prove the first part. We define $\mathbb{X} \defn \Delta_{\numarm} \times S_i , \mathbb{Y} \defn \intr(\dualcone), \Phi \defn \intr(\dualcone)$ and $u(\bomega,M) \defn \sumk \bomega_k \kl{\bz^{\top}\trueM_k }{\bz^{\top}\altM_k}$.
Now $\Phi : (\Delta_{\numarm}\times S_i) \rightrightarrows \intr(\dualcone)$ is a \textit{constant correspondence} (as it is defined before computing $\policy$ and $M$) and $u: \Delta_{\numarm}\times S_i\times\intr(\dualcone) \rightarrow \mathbb{R}$ is a continuous mapping. Thus $f_{ij}(\bomega,\altM|M)$ is continuous. Hence, $z_{\mathrm{inf}}$ exists.\\

\noindent\textbf{Uniqueness.} Now if we can prove that the KL-divergence is strictly convex in $\intr(\dualcone)$, then $\infx{\bz}$ is unique. For strict convexity, we need to show that the second derivative (Hessian) of the KL divergence w.r.t. $\bz$ is positive, which can be tricky if the function has flat regions or degenerate directions (which could arise at the boundary of the polyhedral cone). Here, we leverage Lemma~\ref{lemm:unique_z} that shows that if $ \bz^{\top}M_k \neq 0$ and $\bz^{\top}\altM_k \neq 0$ for all $k = [1,K]$, then KL is strictly convex on $\bz$. That means as $\infx{\bz} \in \intr(\dualcone)$, for any $k \in  [\numarm]$, $M_k$ and $\altM_k$ cannot lie in $\intr(\pcone)$. Thus, $\infx{\bz}$ exists and it is unique. Hence, proved.\\

\noindent\textbf{Proof of Part (b).} We again leverage Theorem \ref{thm:Hausdroff_continuity} to prove existence of a unique $\infx{\altM}$. We define $\mathbb{X} \defn \Delta_{\numarm} \times S_i$, $\mathbb{Y} = \altij$, $\Phi = \altij$ and $ u(\bomega,M) \defn \sumk \bomega_k \kl{\infx{\bz}^{\top}M_k}{\infx{\bz}^{\top}\altM_k}$. 
Now $\Phi : (\Delta_{\numarm}\times S_i) \rightrightarrows \altij$ is a \textit{constant correspondence} (as we already fix $\opt_i \in \ppolset$ and $\bpi_j \in \mathrm{nbd}(\opt_i)$, $\altij$ does not depend on $\opt_i, \bpi_j$) and $u: \Delta_{\numarm}\times S_i\times\altij \rightarrow \mathbb{R}$ is a continuous mapping. Therefore, $\infx{\altM}$ is upper-hemicontinuous and $f_{ij}(\bomega|M)$ is continuous on $\altij$. Again, the KL-divergence follows strict convexity in $\altij$, therefore proving uniqueness of $\infx{\altM}$. \\

\noindent\textbf{Proof of Part (c).} We leverage Lemma \ref{lemm:differential_existence} with the following definitions $\mathbb{X} \defn \altij\times \intr(\dualcone), Y \defn \Delta_{\numarm}\times S_i, \bx^{\star}(\bomega,\mathcal{M}) \defn \infx{\bz}^{\top}(\infx{\altM})_k$ and $u(\bx^{\star},\bomega,\mathcal{M}) \defn \sumk \bomega_k \kl{\infx{\bz}^{\top}M_k}{\infx{\bz}(\infx{\altM})_k}$. This simply proves $u(\bx^{\star},\bomega,\trueM)$ is continuously differentiable by the virtue of continuity of   

\end{proof}

  \begin{lemma}\label{lemm:unique_z} 
  For all $(\bomega,M) \in \Delta_{\numarm} \times S_i$, $ \kl{\bz^{\top} M}{\bz^{\top}\altM}$ is strictly convex on $\bz$ iff $\bz^{\top}\altM_k \neq 0$ and $\bz^{\top}M_k \neq 0 \forall k = [1,\numarm]$.
  \end{lemma}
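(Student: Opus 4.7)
The plan is to reduce the equivalence to a Hessian computation on each scalar projected KL term and then aggregate. For every arm $k$, the chain rule gives
\begin{align*}
\nabla^2_{\bz}\left[\kl{\bz^{\top}M_k}{\bz^{\top}\altM_k}\right] \;=\; \begin{bmatrix} M_k & \altM_k\end{bmatrix}\, H_k\!\left(\bz^{\top}M_k,\, \bz^{\top}\altM_k\right)\, \begin{bmatrix} M_k^{\top} \\ \altM_k^{\top}\end{bmatrix},
\end{align*}
where $H_k(u,v)$ is the $2\times 2$ Hessian of the scalar map $(u,v)\mapsto \kl{u}{v}$ associated with the exponential family of Assumption~\ref{assmpt:single-para-exp}. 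Strict convexity of the full objective $\sum_k \bomega_k \kl{\bz^{\top}M_k}{\bz^{\top}\altM_k}$ in $\bz$ is therefore equivalent to positive-definiteness of the aggregate matrix $\sum_{k}\bomega_k [M_k\ \altM_k]\, H_k\, [M_k\ \altM_k]^{\top}$.

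For the sufficient (``if'') direction, I would invoke Legendre duality of exponential families: $\kl{\mu_1}{\mu_2} = \psi^*(\mu_1) - \psi^*(\mu_2) - \langle \nabla \psi^*(\mu_2), \mu_1 - \mu_2\rangle$, with $\psi^*$ the convex conjugate of the log-partition $\psi$. Under Assumption~\ref{assmpt:single-para-exp}, $\nabla^2_{\theta}\psi \succeq \beta I$ transports through Legendre inversion to $\nabla^2 \psi^* \succ 0$ on the interior of the mean-parameter domain. The hypotheses $\bz^{\top}M_k \neq 0$ and $\bz^{\top}\altM_k \neq 0$ are exactly what pin each evaluation point $(\bz^{\top}M_k, \bz^{\top}\altM_k)$ inside that interior (e.g.\ the forbidden boundary is $\{0,1\}$ for Bernoulli and $\{0\}$ for Poisson/Exponential). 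Consequently every $H_k \succ 0$; combined with positive $\bomega_k$ and the fact that the interior instance $M \in \cM$ has full-rank column span (since $K \geq L$ generically), the aggregate Hessian becomes positive definite, establishing strict convexity.

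For the necessary (``only if'') direction, I would contrapose: whenever $\bz^{\top}M_k = 0$ or $\bz^{\top}\altM_k = 0$ for some $k$, the pair $(\bz^{\top}M_k, \bz^{\top}\altM_k)$ touches the boundary of the admissible mean domain of the projected distribution. There the divergence is either $+\infty$, so convexity cannot be strict as the function is not finite-valued in any neighbourhood, or the underlying Fisher-information factor degenerates so that $H_k$ loses rank. In either case, summed against the positive weight $\bomega_k$, the aggregate Hessian either ceases to be finite or inherits a non-trivial null direction along $M_k$ (resp.\ $\altM_k$) that cannot be compensated by the remaining summands at this particular $\bz$, ruling out strict convexity.

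The main obstacle I anticipate is making the boundary degeneration step family-agnostic rather than relying on Gaussian/Bernoulli archetypes. The plan is to make it quantitative by writing the entries of $H_k(u,v)$ explicitly in terms of $\nabla^2 \psi^*(u)$ and $\nabla^2 \psi^*(v)$ via the Bregman-Hessian formula, and then use the Legendre correspondence between the open natural domain and the interior mean domain (guaranteed by Assumption~\ref{assmpt:single-para-exp}'s non-singularity) to show that $\lambda_{\min}(H_k(u,v))$ is bounded away from $0$ on compacta of the interior and collapses exactly as either argument approaches its boundary, closing both directions of the equivalence simultaneously.
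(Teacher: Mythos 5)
Your overall strategy is the same as the paper's: differentiate $\kl{\bz^{\top}M}{\bz^{\top}\altM}$ twice in $\bz$, observe that the Hessian decomposes arm-by-arm into quadratic forms in the pair $\left(\bx^{\top}M_k,\ \bx^{\top}\altM_k\right)$, and tie positive definiteness to the non-vanishing of the projections $\bz^{\top}M_k$ and $\bz^{\top}\altM_k$. The paper does this by brute force for the explicit form $\sum_{k}(\bz^{\top}M_k)\log\bigl(\bz^{\top}M_k/\bz^{\top}\altM_k\bigr)$ and completes a square per arm; you propose the Bregman/Legendre version. However, your sufficiency argument has a genuine gap: the claim that each $2\times 2$ block $H_k(u,v)$ is positive \emph{definite} on the interior of the mean domain is false. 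The map $(u,v)\mapsto \kl{u}{v}$ is jointly convex but never strictly so: it vanishes identically on the diagonal, so $H_k$ is singular at every point with $u=v$, and for the divergence actually used in the paper one computes $H_k(u,v)=\bigl(\begin{smallmatrix}1/u & -1/v\\ -1/v & u/v^2\end{smallmatrix}\bigr)$, which has determinant identically zero with kernel spanned by $(u,v)$ everywhere. Consequently your closing plan --- showing $\lambda_{\min}(H_k(u,v))$ is bounded away from $0$ on compacta of the interior --- cannot succeed, and the inference ``each $H_k\succ 0$ plus positive weights $\Rightarrow$ aggregate Hessian $\succ 0$'' does not go through.

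What the non-vanishing hypotheses actually buy is that each per-arm contribution is a well-defined nonnegative rank-one square, namely $\bigl(\sqrt{A_k/\tilde{A}_k}\,\bx^{\top}\altM_k-\sqrt{\tilde{A}_k/A_k}\,\bx^{\top}M_k\bigr)^2$ with $A_k=\bz^{\top}M_k$ and $\tilde{A}_k=\bz^{\top}\altM_k$ (this is exactly where the paper lands). Strict convexity of the sum then requires, in addition, that no nonzero $\bx$ lies simultaneously in the kernel of every such form --- a spanning condition on the family $\{(M_k,\altM_k)\}_k$ that you only gesture at via ``full-rank column span, generically'' and that the lemma statement does not supply. (To be fair, the paper's own final line asserts the square is $>0$ rather than $\geq 0$ and thus quietly skips the same step.) Your necessity (``only if'') argument is likewise incomplete: if $\bz^{\top}M_k=0$ for a single arm $k$, the remaining arms could still render the aggregate Hessian positive definite, so ``a null direction that cannot be compensated by the remaining summands'' needs an actual proof, not an assertion. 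To repair the proposal you would need to (i) replace $H_k\succ 0$ with the correct rank-one PSD structure and its explicit kernel, and (ii) add and verify the cross-arm spanning condition for both directions of the equivalence.
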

  \begin{proof}
    The KL-divergence has the following form
      \begin{align*}
          \kl{\bz^{\top}M}{\bz^{\top}\altM}= \sumk \suml (M_{k,l}\bz_l)\log\left( \frac{\suml M_{k,l}\bz_l}{\suml\altM_{k,l}\bz_l} \right)
      \end{align*}
      Partially differentiating the KL with respect to $\bz_l$ we get
      \begin{align*}
          \frac{\partial \kl{\bz^{\top} M}{\bz^{\top}\altM}}{\partial \bz_l} = \sumk \left\{M_{k,l} \log\left( \frac{\suml M_{k,l}\bz_l}{\suml\altM_{k,l}\bz_l} \right) + \frac{M_{k,l}\suml\altM_{k,l}\bz_l - \altM_{k,l}\suml M_{k,l}\bz_l}{\suml\altM_{k,l}\bz_l}\right\}
      \end{align*}
     The components of Hessian are expressed as 
      \begin{align*}
          ~~H_{l,l'} &\defn \frac{\partial^2 \kl{\bz^{\top} M}{\bz^{\top}\altM}}{\partial \bz_l \partial \bz_{l'}} \\
          &=     \frac{\partial}{\partial \bz_{l'}}  \sumk \left\{M_{k,l} \log\left( \frac{\suml M_{k,l}\bz_l}{\suml\altM_{k,l}\bz_l} \right) + M_{k,l} - \altM_{k,l} \frac{\suml M_{k,l}\bz_l}{\suml\altM_{k,l}\bz_l}\right\}\\
          &=  \sumk {M}_{k,l} \frac{\suml\altM_{k,l}\bz_l}{\suml M_{k,l}\bz_l} \frac{M_{k,l'}\suml\altM_{k,l}\bz_l - \altM_{k,l'}\suml M_{k,l}\bz_l}{(\suml\altM_{k,l}\bz_l)^2}\\
          &- \altM_{k,l} \frac{M_{k,l'}\suml\altM_{k,l}\bz_l - \altM_{k,l'}\suml M_{k,l}\bz_l}{(\suml\altM_{k,l}\bz_l)^2} \\
          &=  \sumk {M}_{k,l} {M}_{k,l'} \frac{1}{\suml M_{k,l}\bz_l} - \sumk \left({M}_{k,l} \altM_{k,l'} +\altM_{k,l} {M}_{k,l'}\right) \frac{1}{\suml \altM_{k,l}\bz_l} \\& +\sumk \altM_{k,l} \altM_{k,l'} \frac{\suml {M}_{k,l}\bz_l}{(\suml \altM_{k,l}\bz_l)^2}\\
        &=  \sumk\left(\suml {M}_{k,l}\bz_l\right)\left(\frac{ {M}_{k,l} {M}_{k,l'} }{(\suml M_{k,l}\bz_l)^2} - \frac{\left({M}_{k,l} \altM_{k,l'} +\altM_{k,l} {M}_{k,l'}\right)}{(\suml M_{k,l}\bz_l)(\suml \altM_{k,l}\bz_l)} +  \frac{\altM_{k,l} \altM_{k,l'}}{(\suml \altM_{k,l}\bz_l)^2}\right)
      \end{align*}
      
      \noindent Thus, the Hessian of KL is positive definite iff for any non-zero $\bx\in \mathbb{R}^{L}$, 
      \begin{align*}
          &\bx^{\top}Hx >0\\
          \implies& \sumk \left(  \frac{(\bx^{\top}M_k)^2}{A_k} - 2\frac{(\bx^{\top}M_k)(\bx^{\top}\altM_k)}{\Tilde{A}_k} + \frac{A_k}{\Tilde{A}_k^2}(\bx^{\top}\altM_k)^2\right) >0\\
          \implies& \frac{\Tilde{A}_k}{A_k}(\bx^{\top}M_k)^2 + \frac{A_k}{\Tilde{A}_k}(\bx^{\top}\altM_k)^2> 2(\bx^{\top}M_k) (\bx^{\top}\altM_k)\\
          \implies& \left(\sqrt{\frac{A_k}{\Tilde{A}_k}}\bx^{\top}\altM_k - \sqrt{\frac{\Tilde{A}_k}{A_k}}\bx^{\top}M_k\right)^2 >0
      \end{align*}
      This statement is always true if $A_k = \suml M_{k,l}\bz_l \neq 0$ and $\Tilde{A}_k = \altM_{k,l}\bz_l \neq  0$. Hence, we conclude the proof.
\end{proof}

\newpage
\section{Convergence of FW Algorithms}\label{app:conv_FW}

\begin{fact}
    Frank-Wolfe is a second order method. For it to converge, we must ensure the gradient of the objective with respect to $\bomega$ is bounded around the non-smooth points and also the curvature constant must be bounded.
\end{fact}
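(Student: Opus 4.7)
The plan is to treat this Fact as a restatement of standard properties of the Frank-Wolfe (FW) method and justify it by appealing to the convergence analysis of~\cite{jaggi2013revisiting} and its non-smooth extension by~\cite{wang2021fast}. First, I would spell out what ``second-order method'' means here: although each FW iteration only invokes a linear minimisation oracle (first-order information), the standard convergence rate scales as $O(C_f / t)$, where the curvature constant $C_f$ is a second-order quantity measuring how well $f$ is approximated by its tangent hyperplane over line segments of the feasible set. Hence the convergence guarantees (unlike the per-step oracle complexity) are fundamentally second-order in nature.

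To justify the two boundedness conditions, I would trace where each one enters the convergence argument. The standard FW bound $f(\bomega^\star) - f(\bomega_t) \le 2 C_f / (t+2)$ is vacuous when $C_f = \infty$, so bounded curvature is necessary for any quantitative rate. Second, our objective $F(\bomega \mid M) = \min_{(i,j)} f_{ij}(\bomega \mid M)$ is only piecewise smooth with kinks along active-set boundaries, so standard FW cannot be applied at non-smooth points. Following~\cite{wang2021fast}, we replace the gradient by an element of the $r$-sub-differential set $\subdiff$ built from gradients at near-active $(\opt_i, \bpi_j)$ pairs (cf. Equation~\eqref{eq:subdiffset}). For this surrogate to yield a valid ascent direction and a shrinking FW gap, every element of $\subdiff$ must be uniformly bounded, i.e., $\|\nabla_{\bomega} f_{ij}(\bomega \mid M)\|_\infty$ must be bounded uniformly over active $(i,j)$ and over $\bomega \in \simplex{}_{\gamma}$.

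Putting these together, establishing convergence of our modified FW scheme to $\bomega^\star(M)$ reduces to verifying two boundedness conditions on the constituent smooth pieces $f_{ij}$: bounded gradients on $\simplex{}$ and bounded curvature on $\simplex{}_{\gamma}$. This is precisely the content of Lemma~\ref{lemma:bounded_grad_curve}. The Fact is thus less a self-contained theorem than a signpost identifying the sufficient conditions that the subsequent lemma will verify using the exponential family structure of Assumption~\ref{assmpt:single-para-exp}.

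The main obstacle is expository rather than technical: the Fact is a meta-statement without an explicit hypothesis-conclusion structure, so the ``proof'' consists primarily of (i) stating the precise FW convergence theorem for non-smooth concave objectives from~\cite{wang2021fast} in our notation and (ii) identifying its bounded-gradient and bounded-curvature hypotheses with those of Lemma~\ref{lemma:bounded_grad_curve}. The technically substantive work is then deferred entirely to the proof of Lemma~\ref{lemma:bounded_grad_curve}, where the non-singular curvature condition $\nabla^2_\theta \psi(\theta) \succeq \beta I$ will be the key ingredient to bound both quantities.
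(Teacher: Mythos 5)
The paper offers no proof of this Fact at all---it is stated as an unproven signpost motivating the subsequent subsections---and your reading of it as exactly that, with the two boundedness conditions discharged by Lemma~\ref{lemma:bounded_grad_curve} and the convergence machinery of \cite{wang2021fast} and \cite{jaggi2013revisiting}, matches the paper's own structure. Your charitable gloss of ``second order'' as referring to the curvature constant $C_f$ governing the $O(C_f/t)$ rate (rather than to the per-iteration oracle, which is first-order) is the only sensible reading of the paper's otherwise nonstandard terminology.
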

In this section, we first derive upper bounds on gradient and curvature constant in \ref{app:grad_curv_bound}. Then we move on the state and prove continuity properties of the sub-differential set defined in \eqref{eq:subdiffset} in \ref{app:subdiff_continuity}. Finally, unifying continuity results from \ref{app:Continuity} and results derived in this section, we elaborate on the convergence of Frank-Wolfe in \ref{app:convergence_FW}.

\subsection{Boundedness of Gradient and Curvature of Sub-Differentials} \label{app:grad_curv_bound}

Before jumping into the proof of Lemma \ref{lemma:bounded_grad_curve}, we first define the curvature constant and a subset of simplex that ensures the minimum index of $\bomega$ does not fall on the boundary of the simplex. 
\begin{definition}\label{def:curvature_const}
For $\bomega,\bomega'\in\simplex{}_{\gamma}$, $\alpha \in (0,1]$ and given the bandit instance $\trueM \in \cM$, the curvature constant is expressed as,
   $$
C_{f_{ij}(\cdot\mid \trueM)}(\simplex{}_{\gamma})=\sup _{\substack{\boldsymbol{x}, \boldsymbol{z} \in \simplex{}_{\gamma} \\ \alpha \in(0,1] \\ \boldsymbol{y}=\boldsymbol{x}+\alpha(\boldsymbol{z}-\boldsymbol{x})}} \frac{2}{\alpha^2}[\psi(\boldsymbol{x})-\psi(\boldsymbol{y})+\langle\boldsymbol{y}-\boldsymbol{x}, \nabla \psi(\boldsymbol{x})\rangle] .
$$ 
\end{definition}

\begin{definition}\label{defn:good_simplex}
    For any $\gamma\in(0,1/\numarm)$, we define a subset of the simplex $\simplex$ as 
    \begin{align*}
        \simplex{}_{\gamma} \defn \lbrace \policy \in \simplex : \min_k \policy_k \geq \gamma\rbrace
    \end{align*}
\end{definition}

With these definitions in hand, now we show that the space of $f_{ij}(\policy|M)$ had bounded gradient and curvature w.r.t. $\policy$.
\begin{replemma}{lemma:bounded_grad_curve}
If Assumption~\ref{assmpt:single-para-exp} holds true, then for all $M\in\cM$: 
1. \textbf{Bounded gradients:} $\norm{\nabla_{\policy} f_{ij}(\policy|M)}_{\infty} \leq D$ for all $\opt_i , \bpi_j$, and $\policy \in \simplex{}$. 
2. \textbf{Bounded curvature:} $C_{f_{ij}(\cdot|M)}(\simplex{}_{\gamma})  \leq 8D\alpha^{-1}$  for all $i,j$, $\gamma\in(0,1/\numarm)$, and some $\alpha>0$.
Here, $C_f(A)$ is curvature constant of concave differentiable function $f$ in set $A$ (Definition~\ref{def:curvature_const}) and $\simplex{}_{\gamma} \defn \lbrace \policy \in \simplex : \min_k \policy_k \geq \gamma\rbrace$.
\end{replemma}
\begin{proof}
\noindent \textbf{Proof of 1.}
    We already have the expression for the gradient from Appendix \ref{app:Continuity} as
    $$\nabla_{\bomega}f_{ij}(\bomega|M) = \min_{\altM, \bz\in\cC} \sumk  \kl{\bz^{\top} \trueM}{\bz^{\top}\altM}e_k$$
    Therefore $$\|\nabla_{\bomega}f_{ij}(\bomega|M)\|_{\infty} = \| d(\bz^{\top} M, \bz^{\top}\altM)\|_{\infty} = \max_{k\in[1,K]}\mid \kl{\bz^{\top} \trueM_k}{\bz^{\top}\altM_k}\mid$$
    \noindent We know for exponential families the KL-divergence can be expressed as the Bregman divergence generated by the Cumulant Generating function or Cramer function i.e,
    \begin{align*}
        D_{\mathrm{KL}}(\bmu || \bmu') = A(\bmu') - A(\bmu) - \nabla A(\bmu) (\bmu' -\bmu)
    \end{align*}
    where $A(.)$ is the CGF and $\bmu , \bmu'$ belong to $L$-parameter exponential family according to Assumption \ref{assmpt:single-para-exp} with natural parameter $\btheta,\btheta' \in \intr(\Theta)$. $D_{\mathrm{KL}}(\bmu || \bmu')$ is bounded if support of $\btheta$ and $\btheta'$ are same, which is true in our case. \\ 

    \noindent \textbf{Example 1 : Univariate Gaussian.} Let, $p(x) = \mathcal{N}(\mu_1 , \sigma^2_1)$ and $q(x) = \mathcal{N}(\mu_2 , \sigma^2_2)$. Any of these densities has the following canonical exponential form $p(x) = \exp \left( \frac{\mu}{\sigma_1^2} x - \frac{1}{2}\theta^2 \sigma_1^2 - \frac{x}{2\sigma_1^2} - \frac{1}{2}\log(2\pi\sigma_1^2)\right)$. Therefore for $p(x)$ the natural parameter is $\theta_1 = \begin{bmatrix}
        \frac{\mu_1}{\sigma_1^2}\\
        -\frac{1}{2\sigma_1^2}
    \end{bmatrix}$ and for the $q(x)$ the natural parameter is $\theta_2 = \begin{bmatrix}
        \frac{\mu_2}{\sigma_2^2}\\
        -\frac{1}{2\sigma_2^2}
    \end{bmatrix}$ and also the CGF is given by $$A(\theta_i) = \frac{\theta_{i1}^2}{4\theta_{i2}}+\frac{1}{2}\log\left(-\frac{\pi}{\theta_{i2}}\right)\,$$
     for $i=1,2$. Now the KL is always finite, smooth and well-behaved as long as the natural parameters belong to the same domain. \\
    
    \noindent \textbf{Example 2: Bernoulli.} Let $p(x) = \mathrm{Ber}(p), p \in (0,1)$. Then the canonical exponential form can be written as $p(x) = \exp\left( x\log\frac{p}{1-p} + \log(1-p)\right)$. Therefore the natural parameter $\theta = \log\frac{p}{1-p}$, which is the log-odds and the CGF is $A(\theta) = \log(1+e^{\theta})$. So KL-divergence between any two Bernoulli random variable $X$ and $Y$ with mean parameters $p_1$ and $p_2$ is finite and well-behaved iff $p_1,p_2 \in (0,1)$.

    \noindent Therefore by the virtue of Assumption \ref{assmpt:single-para-exp} we claim that $\exists D > 0$ such that the gradient is upper bounded i.e.,
    \begin{align*}
       \norm{\nabla_{\policy} f_{ij}(\policy|M)}_{\infty} \leq D \forall i,j, \text{ and } \policy \in \simplex{}. 
    \end{align*}

    \noindent\textbf{Proof of 2.} We observe from the proof of part 1, $f_{ij}(\bomega\mid \trueM)$ is $D$-smooth, meaning for any $\bomega,\bomega' \in \simplex{}_{\gamma}$
    \begin{align*}
        \mid f_{ij}(\bomega'\mid\trueM)-f_{ij}(\bomega\mid\trueM)\mid \leq \norm{\nabla_{\policy} f_{ij}(\policy|M)}_{\infty} \norm{\bomega'-\bomega}_1 \leq D\norm{\bomega'-\bomega}_1
    \end{align*}
    Then we start with the definition of the curvature constant
    \begin{align*}
        C_{f_{ij}(\cdot|M)}(\simplex{}_{\gamma}) =& \frac{2}{\alpha^2}\left[f_{ij}(\policy|M) - f_{ij}(\policy''|M) + \langle \policy -\policy'', \nabla f_{ij}(\policy|M) \rangle\right]\\
        \leq&\frac{2}{\alpha^2} \big[ f_{ij}(\policy\mid\trueM) - (1-\alpha)f_{ij}(\bomega\mid\trueM) + \alpha f_{ij}(\policy'|\trueM)\\
        &~~~~+ \norm{\nabla_{\bomega}f_{ij}(\policy|M)}_{\infty}\norm{\bomega'-\bomega}_1\big] \\
        \leq& \frac{2}{\alpha^2}\left[  \alpha D\norm{\bomega'-\bomega}_1 + \alpha D\norm{\bomega'-\bomega}_1 \right] \\
        \leq & \frac{4D}{\alpha}\norm{\bomega'-\bomega}_1
    \end{align*}

Now, as $\bomega,\bomega' \in \simplex{}_{\gamma}$, $\norm{\bomega'-\bomega}_1 \leq 2$. Therefore, we have the final upper bound on the curvature constant as
\begin{align*}
    C_{f_{ij}(\cdot|M)} \leq \frac{8D}{\alpha}
\end{align*}
Hence, proved.
\end{proof}

Interestingly, \cite{wang2021fast} considered bounded gradients and bounded curvature as assumption. On the other hand, in our PrePEx we get this necessary conditions for convergence automatically leveraging Assumption \ref{assmpt:single-para-exp}, which is arguably a very generic assumption on the parametric family of the reward vectors and standard in the literature. Additionally, leveraging Lemma 1, \cite{wang2021fast} could only show that boundedness of gradient and curvature constant is restricted towards only Bernoulli and Gaussian. Instead, we claim that Lemma \ref{lemma:bounded_grad_curve} holds for any generic exponential family satisfying Assumption \ref{assmpt:single-para-exp}.

Now we move on the proving some good properties of the subdifferential set that handles the non-smoothness of $f_{ij}(\bomega\mid M)$.
\subsection{Continuity of the Frank-Wolfe Iterates: Sub-Differentials}\label{app:subdiff_continuity}
\noindent Let's remind the definition of $r$-sub-differential set.
{\small\begin{align}
    \subdiff  \defn &~~\ch{\nabla_{\policy} f_{ij}(\policy|M) : f_{ij}(\policy|M) <  \min_{\opt_i,\bpi_j} f_{ij}(\policy|M) + r, \forall \opt_i \in \ppolset, \bpi_j \in \nbd{\opt_i}}\,.
\end{align}}

\begin{corollary} \label{corr:subdiff_omega_cont}
   The mapping $\bomega\mapsto \subdiff$ is continuous. 
\end{corollary}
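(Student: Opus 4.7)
The plan is to show that $\bomega \mapsto \subdiff$ is Hausdorff-continuous by decomposing the correspondence into (i) a finite ``active-index'' correspondence and (ii) the convex hull of continuously varying gradients indexed by it. Since the index set $I \defn \{(i,j) : \opt_i \in \ppolset,\ \bpi_j \in \nbd{\opt_i}\}$ depends only on $M$ and is finite, I can treat $\subdiff$ as the convex hull of a subset of a fixed finite family of continuous vector fields $\bomega \mapsto \nabla_{\bomega} f_{ij}(\bomega|M)$. Continuity of each $f_{ij}(\cdot|M)$ and of each gradient on $\simplex{}_\gamma$ is furnished by Proposition~\ref{prop:continuity_alt}(c), so $m(\bomega) \defn \min_{(i,j)\in I} f_{ij}(\bomega|M)$ is continuous as a finite minimum, and the active-index correspondence $A(\bomega) \defn \{(i,j) \in I : f_{ij}(\bomega|M) < m(\bomega) + r\}$ has the key property that its defining strict inequality is stable under small perturbations of $\bomega$.

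For \emph{lower} hemicontinuity, fix $\bomega_0$ and take $\bomega_n \to \bomega_0$. Given any $h_0 \in H_M(\bomega_0,r)$, I would write $h_0 = \sum_{(i,j) \in A(\bomega_0)} \lambda^{(ij)} \nabla_{\bomega} f_{ij}(\bomega_0|M)$ with nonnegative weights summing to one. Because $f_{ij}(\bomega_0|M) < m(\bomega_0) + r$ strictly for each such $(i,j)$, continuity of $f_{ij}$ and $m$ gives $A(\bomega_0) \subseteq A(\bomega_n)$ for all large $n$, so the same convex weights evaluated at $\bomega_n$ produce $h_n \in H_M(\bomega_n,r)$, and $h_n \to h_0$ by continuity of the gradients.

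For \emph{upper} hemicontinuity, take $\bomega_n \to \bomega_0$ and $h_n \to h_0$ with $h_n \in H_M(\bomega_n,r)$. Carath\'eodory lets me represent $h_n = \sum_{(i,j)\in I} \lambda_n^{(ij)} \nabla_{\bomega} f_{ij}(\bomega_n|M)$ with $\lambda_n$ in the $|I|$-simplex and $\lambda_n^{(ij)}>0$ only on active indices. Passing to a convergent subsequence $\lambda_n \to \lambda$ and using continuity of gradients yields $h_0 = \sum_{(i,j)\in I} \lambda^{(ij)} \nabla_{\bomega} f_{ij}(\bomega_0|M)$, and letting $n\to\infty$ in $f_{ij}(\bomega_n|M) < m(\bomega_n) + r$ gives $f_{ij}(\bomega_0|M) \leq m(\bomega_0) + r$ for every index with $\lambda^{(ij)} > 0$.

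The main obstacle is the boundary case $f_{ij}(\bomega_0|M) = m(\bomega_0) + r$, where the strict defining inequality can be lost in the limit, threatening upper hemicontinuity at the literal set $\subdiff$. I would resolve this in the standard way used by~\cite{wang2021fast}: either interpret $\subdiff$ through the closure of its defining generating family (so boundary-attaining indices are included by fiat), or exploit that at such degenerate $\bomega_0$ the outward gradient $\nabla_\bomega f_{ij}(\bomega_0|M)$ already lies in the convex hull of gradients for indices strictly active at $\bomega_0$, because $f_{ij}$ differs from another strictly active $f_{i'j'}$ only by a vanishing margin. Combining the two hemicontinuity arguments then gives Hausdorff continuity of $\bomega \mapsto \subdiff$, which is the form required by the maximum theorem invoked in the convergence analysis of the Frank--Wolfe update~\eqref{eq:fw_update}.
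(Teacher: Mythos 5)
Your proposal follows essentially the same route as the paper: lower hemicontinuity from persistence of the strict defining inequalities together with continuity of the gradients $\nabla_{\bomega}f_{ij}$ (Proposition~\ref{prop:continuity_alt}), and upper hemicontinuity via a Carath\'eodory/compactness limit argument patched at the boundary in the manner of \cite{wang2021fast}, which is exactly the paper's Minkowski $\epsilon$-enlargement step. The one caveat is your second suggested fix for the boundary case $f_{ij}(\bomega_0|M)=m(\bomega_0)+r$ --- closeness of the \emph{values} $f_{ij}$ does not imply closeness of their \emph{gradients}, so that sub-argument does not stand on its own --- and you should instead rely on your first resolution (the $\epsilon$-enlargement/closure interpretation), which is what the paper does.
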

\begin{proof}

Let $\{ \bomega_t, \Hat{\trueM}_t, r_t\}_{t=1}^{\infty} \longrightarrow \{ \bomega^{\star}(\trueM), \trueM, r \} \in \simplex{}\times S_i \times (0,1)$ and $\subdiff = \ch{\nabla_{\bomega} f_{ij}(\bomega\mid\Mhat)}_{j =1 }^{\mid \nbd{\opt_i} \mid}$ for some $\bpi_j \in \nbd{\opt_i}$. We can write for any $h\in \subdiff, \exists $ a sequence of $\{\alpha_i \geq 0\}_{i = 1}^{\mid \nbd{\opt_i} \mid}$ such that $h$ can be expressed as a convex combination of $\{\nabla_{\bomega} f_{ij}(\bomega\mid\Mhat)\}_{j =1 }^{\mid \nbd{\opt_i} \mid}$. Further, leveraging continuity of $f_{ij}(\cdot\mid\Mhat)$ for all $j=[1,\mid\nbd{\opt_i}\mid]$(ref. Appendix \ref{app:Continuity}), we claim that $\nabla_{\bomega}f_{ij}(\bomega_t \mid \Mhat_t) \in H_{\Mhat_t}(\bomega_t,r_t)$ for some $t>N$.\\

\noindent \textbf{Lower Hemicontinuity of subdifferential.} Lower Hemicontinuity follows from the continuity result of $\nabla_{\bomega}f_{ij}$ derived in Appendix \ref{app:Continuity}. \\

\noindent \textbf{Upper Hemicontinuity of subdifferentials.} We adapt similar proof structure as \cite{wang2021fast} by adding a $\epsilon$-radius (Minkowski addition) to $H_{\Mhat_t}(\bomega_t,r_t)$ to show it is still contained by the open set containing $H_{\trueM}(\bomega^{\star}(\trueM),r)$.
Thus, the subdifferential set is lower and upper hemicontinuous with respect to $\bomega$, i.e continuous. Hence, proved.
\end{proof}

Now, we prove the convergence of the Frank-Wolfe iterates. Let us remind the Frank-Wolfe steps once again for brevity,
\begin{align}
    &\bx_{t+1} \defn \argmax_{\bx\in \simplex{}}\min_{\bh\in \subdiff} \langle  \bx- \policy_t,\bh\rangle \label{eq:x_update}\\
    &\policy_{t+1} \defn \frac{t}{t+1} \policy_t + \frac{1}{t+1}\bx_{t+1} \label{eq:w_update}
\end{align}

First, we define the following maps
\begin{enumerate}
    \item  $\phi_1 :(\policy,r,\Mhat,\bx) \mapsto \min_{\bh\in \subdiffhat} \langle  \bx- \policy ,\bh\rangle$,
    \item  $\phi_2: (\policy,r,\Mhat) \mapsto \max_{\bx\in \simplex{}} \phi_1$.
\end{enumerate}

\begin{corollary}\label{corr:cont_phi1}
    $\phi_1$ is continuous on $\simplex{}_{\gamma}\times(0,1)\times S_i \times\simplex$.
\end{corollary}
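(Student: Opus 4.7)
The plan is to invoke Berge's Maximum Theorem (in its minimization form) to conclude continuity of $\phi_1$ as a value function of a parametric minimization. Writing $\phi_1(\policy, r, \Mhat, \bx) = \min_{\bh \in \subdiffhat} g(\bh, \policy, \bx)$ with $g(\bh, \policy, \bx) \defn \langle \bx - \policy, \bh \rangle$, I need three ingredients: (i) the objective $g$ is jointly continuous in $(\bh, \policy, \bx)$, which is immediate since the inner product is bilinear and continuous, and the extra parameters $(r, \Mhat)$ do not enter $g$ directly; (ii) the constraint correspondence $(\policy, r, \Mhat) \mapsto \subdiffhat$ is nonempty and compact-valued; (iii) this correspondence is continuous (both upper and lower hemicontinuous).

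For (ii), nonemptiness holds because any minimizing pair $(\opt_i, \bpi_j)$ in the definition of $\subdiffhat$ contributes a gradient. Compactness follows from Lemma~\ref{lemma:bounded_grad_curve}: each $\nabla_{\policy} f_{ij}(\policy \mid \Mhat)$ has $\ell_\infty$-norm bounded by $D$, and there are at most $\bigO(K \min\{K,L\})$ such candidate gradients indexed by $(\opt_i, \bpi_j)$, so $\subdiffhat$ is the convex hull of a finite subset of a compact set, hence compact.

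For (iii), Corollary~\ref{corr:subdiff_omega_cont} already establishes continuity in $\policy$ alone. I would extend this to joint continuity in $(\policy, r, \Mhat)$ by noting two facts proved in Appendix~\ref{app:Continuity}: each $f_{ij}(\policy \mid \Mhat)$ is jointly continuous on $\simplex \times S_i$, and its gradient $\nabla_{\policy} f_{ij}(\policy \mid \Mhat)$ is also jointly continuous. Lower hemicontinuity then follows because the active-set condition $f_{ij}(\policy \mid \Mhat) < \min_{i',j'} f_{i'j'}(\policy \mid \Mhat) + r$ is a strict inequality, so any gradient active at $(\policy, r, \Mhat)$ remains active for all nearby parameters and can be tracked by a continuous selection. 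Upper hemicontinuity is obtained by the Minkowski-ball argument of Corollary~\ref{corr:subdiff_omega_cont}, adapted so that perturbations of $(\Mhat, r)$ are absorbed into the same $\epsilon$-enlargement, using joint continuity of the gradients and the fact that the convex hull operation preserves closed-graph properties of a compact-valued correspondence.

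The main obstacle will be the boundary behaviour of the defining inequality: if at a limit point the strict inequality degenerates to an equality for some $(i', j')$, the active set might in principle jump. This is precisely why the definition uses a strict inequality (keeping lower hemicontinuity) combined with a convex hull (restoring upper hemicontinuity of the closure). Once joint continuity of the correspondence is in hand, applying Berge's Maximum Theorem to $g$ and $\subdiffhat$ yields continuity of $\phi_1$ on $\simplex{}_{\gamma} \times (0,1) \times S_i \times \simplex$, completing the proof.
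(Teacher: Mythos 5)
Your proposal is correct and follows essentially the same route as the paper: apply Berge's Maximum Theorem (Theorem~\ref{thm:Hausdroff_continuity}) with the linear objective $u(\policy,r,\Mhat,\bx,\bh)=\langle \bx-\policy,\bh\rangle$ and the constraint correspondence $(\policy,r,\Mhat)\mapsto\subdiffhat$, whose continuity is supplied by Corollary~\ref{corr:subdiff_omega_cont}. The extra detail you provide on nonemptiness, compactness, and joint continuity of the correspondence is a legitimate fleshing-out of what the paper delegates to that corollary, not a different argument.
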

\begin{proof}
 
   We again leverage Theorem \ref{thm:Hausdroff_continuity} with the following definitions, $\set{X}\defn \simplex{}_{\gamma}\times(0,1)\times S_i \times\simplex$, $\set{Y}\defn \reals^{\arms}$, $\Phi(\policy,r,\Mhat,\bx) \defn \subdiffhat$ and $u(\policy,r,\Mhat,\bx,h) = \langle  \bx- \policy,\bh\rangle$. We are concerned only about continuity of the correspondence $\Phi$, as $u$ is a linear function, hence continuous. Though we have already proven continuity of $\Phi$ in Corollary \ref{corr:subdiff_omega_cont}. Thus, we conclude the proof.
\end{proof}

\begin{corollary}\label{corr:cont_phi2}
    $\phi_2$ is continuous on $\simplex{}_{\gamma}\times(0,1)\times S_i$.
\end{corollary}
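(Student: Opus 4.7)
The plan is to invoke the Maximum Theorem (Theorem \ref{thm:Hausdroff_continuity}) exactly as was done for $\phi_1$ in Corollary \ref{corr:cont_phi1}, but now with the outer maximization playing the role of the extremum. Concretely, I set $\set{X} \defn \simplex{}_{\gamma}\times(0,1)\times S_i$, $\set{Y} \defn \simplex{}$, take the correspondence $\Phi:\set{X}\rightrightarrows\set{Y}$ to be the constant correspondence $\Phi(\policy,r,\Mhat) \defn \simplex{}$, and take the objective to be $u(\policy,r,\Mhat,\bx) \defn \phi_1(\policy,r,\Mhat,\bx)$. Then $\phi_2(\policy,r,\Mhat) = \max_{\bx\in\Phi(\policy,r,\Mhat)} u(\policy,r,\Mhat,\bx)$, which is exactly the form to which Berge's Maximum Theorem applies.

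To discharge the hypotheses, I would first observe that $\simplex{}$ is a compact subset of $\reals^{K}$ (closed and bounded), so $\Phi$ is compact-valued. Since $\Phi$ does not depend on its arguments, it is trivially both upper and lower hemicontinuous, hence continuous as a correspondence. Second, $u=\phi_1$ is continuous on $\simplex{}_{\gamma}\times(0,1)\times S_i\times\simplex{}$ by Corollary \ref{corr:cont_phi1}. With these two ingredients in place, the Maximum Theorem immediately yields continuity of $\phi_2$ on $\simplex{}_{\gamma}\times(0,1)\times S_i$, finishing the proof.

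There is no real obstacle here; the entire content of the corollary is that the inner-min problem inherits enough regularity (via Corollary \ref{corr:cont_phi1}) for the outer max to behave continuously, and the feasible set $\simplex{}$ of the outer max is a fixed compact simplex. The only point worth stating explicitly, to avoid any ambiguity, is that since $\Phi$ is constant we do not need to verify any nontrivial hemicontinuity for the correspondence; the substantive continuity work has already been done in Corollaries \ref{corr:subdiff_omega_cont} and \ref{corr:cont_phi1}, which together handle the composition of the subdifferential map with the linear inner product and the inner minimum.
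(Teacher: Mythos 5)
Your proof is correct and takes essentially the same route as the paper: both apply the Maximum Theorem (Theorem~\ref{thm:Hausdroff_continuity}) with $\set{X}=\simplex{}_{\gamma}\times(0,1)\times S_i$, $\set{Y}=\simplex$, the constant compact-valued correspondence $\Phi\equiv\simplex$, and $u=\phi_1$, whose continuity is supplied by Corollary~\ref{corr:cont_phi1}. Your write-up merely spells out the compactness of $\simplex$ and the trivial hemicontinuity of the constant correspondence, which the paper leaves implicit.
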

\begin{proof}
    We again apply Theorem \ref{thm:Hausdroff_continuity} with the definitions $\set{X}\defn \simplex{}_{\gamma}\times(0,1)\times S_i $, $\set{Y}\defn \simplex$, $\Phi(\policy,r,\Mhat,\bx) \defn \simplex$ and $u(\policy,r,\Mhat,\bx,h) =\phi_1 (\bomega,r,\Mhat,\bx)$. As $\Phi$ is a constant corresponding, we claim continuity of $\phi_2$ and conclude the proof.
\end{proof}

\begin{theorem}\label{thm:subdiff_continuity}
   For any $\epsilon>0$, $\exists$ a constant $\xi_{1,\epsilon}>0$, such that if $\norm{\Mhat - M}_{\infty,\infty} < \xi_{1,\epsilon}$, then for all $M \in S_i$
   \begin{align}
       \bigg| \max_{\bx\in \simplex{}}\min_{\bh\in \subdiffhat} \langle  \bx- \policy,\bh\rangle - \max_{\bx\in \simplex{}}\min_{\bh\in \subdiff} \langle  \bx- \policy,\bh\rangle \bigg| < \frac{\epsilon}{2}, \forall (\policy,r) \in \simplex{}_{\gamma} \times (0,1) \label{eq:subdiff_cont}
   \end{align}
   and
   \begin{align}
       \bigg| \min_{\bh\in \subdiffhat} \langle  \bx- \policy,\bh\rangle - \min_{\bh\in \subdiff} \langle  \bx- \policy,\bh\rangle \bigg| < \frac{\epsilon}{2}, \forall (\policy,r, \bx) \in \simplex{}_{\gamma} \times (0,1) \times \simplex \label{eq:full_subdiff_cont}
   \end{align}
\end{theorem}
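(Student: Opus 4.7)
The plan is to deduce the required uniform modulus of continuity in $\Mhat$ from the pointwise continuity of $\phi_1$ and $\phi_2$ already established in Corollaries \ref{corr:cont_phi1} and \ref{corr:cont_phi2}, by invoking the Heine--Cantor theorem on an appropriate compact set. Fixing $M \in S_i$, I first observe that the statement asks for a uniform modulus of continuity of $\Mhat \mapsto \phi_i(\bomega, r, \Mhat, \cdot)$ that is simultaneously good for all $(\bomega, r) \in \simplex_\gamma \times (0,1)$ (and additionally all $\bx \in \simplex$ for \eqref{eq:full_subdiff_cont}). The spaces $\simplex_\gamma$, $\simplex$, and the closed ball $\bar{B}(M, \rho) \cap \cM$ (for any small $\rho > 0$) are compact, so the only piece of the domain that is not automatically compact is the open interval $(0,1)$ that hosts $r$.

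To handle the $r$-variable, I would continuously extend the sub-differential correspondence $r \mapsto H_M(\bomega, r)$ to the closed interval $[0,1]$: take $H_M(\bomega, 0)$ to be $\ch{\nabla_\bomega f_{ij}(\bomega \mid M) : (i,j) \text{ attains } F(\bomega \mid M)}$ and $H_M(\bomega, 1)$ to be the convex hull of $\nabla_\bomega f_{ij}(\bomega \mid M)$ over all admissible $(i,j)$. Because $\{f_{ij}(\bomega \mid M)\}_{i,j}$ is a finite family of continuous functions and the convex hull of a finite set depends continuously on its vertices, the Berge-style argument used for Corollary \ref{corr:subdiff_omega_cont} extends to show that this enlarged correspondence is continuous on $\simplex_\gamma \times [0,1] \times (\bar B(M, \rho) \cap \cM)$.

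Once continuity on a compact domain is in hand, the rest is bookkeeping. The linear integrand in $\phi_1$ together with continuity of the correspondence $H_{\Mhat}(\bomega,r)$ makes $\phi_1$ continuous on the compact set $\simplex_\gamma \times [0,1] \times (\bar B(M,\rho) \cap \cM) \times \simplex$; Heine--Cantor then yields a constant $\xi_1^{(1)} > 0$ such that $\|\Mhat - M\|_{\infty,\infty} < \xi_1^{(1)}$ implies the pointwise bound \eqref{eq:full_subdiff_cont} uniformly in $(\bomega, r, \bx)$. Applying the same argument to $\phi_2 = \max_{\bx \in \simplex} \phi_1$, which is continuous on $\simplex_\gamma \times [0,1] \times (\bar B(M,\rho) \cap \cM)$ by Corollary \ref{corr:cont_phi2}, yields a constant $\xi_1^{(2)} > 0$ giving \eqref{eq:subdiff_cont}. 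Setting $\xi_{1,\epsilon} \defn \min\{\xi_1^{(1)}, \xi_1^{(2)}, \rho\}$ produces a single constant that validates both inequalities.

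The main obstacle is verifying that the extended correspondence $r \mapsto H_M(\bomega, r)$ remains both upper and lower hemicontinuous at $r = 0$ and $r = 1$, where the set of indices $(i,j)$ that contribute to the convex hull can change abruptly. Upper hemicontinuity at $r = 0$ is the delicate direction: one must show that if $r_n \downarrow 0$ and $h_n \in H_M(\bomega, r_n)$ converges to $h$, then $h \in H_M(\bomega, 0)$. This uses the fact that each $h_n$ is a convex combination of gradients indexed by pairs whose objective values lie within $r_n$ of the minimum, and passing to the limit forces the support to collapse onto the exact minimisers, so that $h$ is a convex combination of gradients at those minimisers and hence belongs to $H_M(\bomega, 0)$.
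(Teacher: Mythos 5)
Your proposal reaches the right conclusion but by a genuinely different mechanism than the paper. The paper does not compactify the $r$-interval at all: it defines the worst-case deviation $\psi(\Mhat) \defn \min_{(\policy,r)\in\simplex{}_{\gamma}\times(0,1)}\bigl(-\lvert \phi_2(\policy,r,\Mhat)-\phi_2(\policy,r,M)\rvert\bigr)$ and invokes the Feinberg--Kasyanov extension of Berge's maximum theorem (Theorem~\ref{thm:upper_lower_hemi}), which tolerates a non-compact, merely lower-hemicontinuous feasible correspondence; continuity of $\psi$ at $\Mhat=M$, where $\psi(M)=0$, then hands over the uniform constant $\xi_{1,\epsilon}$ directly, and the second inequality is handled analogously through $\phi_1$ and Corollary~\ref{corr:cont_phi1}. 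Your route --- extend $H_M(\policy,r)$ to $r\in\{0,1\}$, check hemicontinuity at the endpoints, and apply Heine--Cantor on the resulting compact product --- buys you the classical, more elementary uniform-continuity argument at the cost of the endpoint analysis; the paper's choice of the non-compact Berge variant is precisely what lets it skip that extension entirely. One caution on your version: the delicate point is not only at $r=0$. Because the sub-differential set is defined by the strict inequality $f_{ij}(\policy|M)<\min f+r$, the active index set also jumps as $r$ decreases through any interior threshold $r=f_{ij}(\policy|M)-\min f$, so upper hemicontinuity of $r\mapsto\subdiff$ fails there as well, and joint continuity in $r$ on all of $[0,1]$ is not automatic. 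This latent issue is shared by the paper (its Corollaries~\ref{corr:cont_phi1} and \ref{corr:cont_phi2} assert joint continuity including $r$ without addressing it), so it is not a gap relative to the paper's own standard, but if you pursue the compactification route you should either restrict to the generic $r$ for which no threshold is hit or argue, as the paper effectively does in its stricter quantitative version (Theorem~\ref{thm:subdiff_continuity_strict}), via an explicit bound in terms of $\norm{\Mhat-M}_{\infty,\infty}$ that does not pass through continuity in $r$.
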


\begin{proof}
\textbf{Proof of claim \eqref{eq:subdiff_cont}.} We define for $\bomega \in \simplex{}_{\gamma}$ and $r\in (0,1)$, $\psi(\Mhat) \defn \min\left\{ -\mid \phi_2(\bomega,r,\Mhat) - \phi_2(\bomega,r,\trueM)  \mid\right\}$. If we apply Theorem \ref{thm:upper_lower_hemi} with the following definitions $\set(X) = S_i$, $\set(Y) = \simplex{}_{\gamma} \times (0,1)$, $\Phi(\Mhat) = \simplex{}_{\gamma} \times (0,1)$ and finally $u(\Mhat,\bomega,r) = -\mid \phi_2(\bomega,r,\Mhat) - \phi_2(\bomega,r,\trueM)  \mid$, we can claim that $\psi(\Mhat)$ is continuous on $\simplex{}_{\gamma} \times (0,1)$. So, by definition of $\psi, \exists \xi_{1,\epsilon}>0$, such that $\psi(\Mhat) > - \frac{\epsilon}{2}$ for all $\norm{\Mhat - M}_{\infty,\infty} < \xi_{1,\epsilon}$. Hence, proved.\\

\textbf{Proof of claim \eqref{eq:full_subdiff_cont}.} This proof is analogous by leveraging Corollary \ref{corr:cont_phi1}. 

\end{proof}


\begin{corollary}\label{corr:F_concentr}
    For any $\epsilon>0$, $\exists$ a constant $\xi_{2,\epsilon}>0$, such that if $\norm{\Mhat - M}_{\infty,\infty} < \xi_{2,\epsilon}$, then
    \begin{align*}
        \Mhat \in S_i \text{  and  } \big| F(\policy|M)-F(\policy|\Mhat)\big| < \epsilon, \forall \policy \in \simplex{}_{\gamma}.
    \end{align*}
\end{corollary}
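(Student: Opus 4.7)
The plan is to split the claim into two parts: (i) showing that $\Mhat$ stays in $S_i$ under sufficiently small perturbations of $M$, and (ii) propagating the joint continuity results from Proposition~\ref{prop:continuity_alt} to a uniform continuity statement for $F(\bomega\mid\cdot)$ over the compact set $\simplex_\gamma$. The crucial observation is that $F$ is the minimum of finitely many $f_{ij}(\bomega\mid M)$ by Theorem~\ref{thm:policy-basis} (finitely many pure Pareto policies) and Carath\'eodory's theorem (each with at most $\min\{K,L\}$ neighbours), so uniform continuity transfers via $|\min_k a_k - \min_k b_k| \leq \max_k |a_k - b_k|$.

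For step (i), the definition $S_i = \{M \in \cM : \exists \bz \in \cC,\ \bz^{\top}M\opt_i \geq \bz^{\top}M\bpi,\ \forall \bpi\in\nbd{\opt_i}\}$ is stable under small perturbations at any $M$ lying in the interior of $S_i$: since $M \in S_i$ with $\opt_i$ strictly Pareto optimal (a natural non-degeneracy assumption consistent with the rest of the analysis), there exist $\bz \in \dualcone$ and $\eta>0$ with $\bz^{\top}M(\opt_i - \bpi_j) \geq \eta > 0$ for every $\bpi_j \in \nbd{\opt_i}$. Continuity of the bilinear form in $M$ then yields a threshold $\xi' > 0$ such that $\|\Mhat - M\|_{\infty,\infty} < \xi'$ preserves this strict inequality, hence $\Mhat \in S_i$.

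For step (ii), I invoke Proposition~\ref{prop:continuity_alt}(c), which establishes that $(\bomega, M) \mapsto f_{ij}(\bomega\mid M)$ is jointly continuous (in fact continuously differentiable) on $\simplex_\gamma \times S_i$, for each fixed pair $(\opt_i, \bpi_j)$. Because $\simplex_\gamma$ is compact and $f_{ij}$ is jointly continuous, the Heine--Cantor theorem gives uniform continuity in the $M$-argument over $\bomega \in \simplex_\gamma$: for each $\epsilon > 0$ and each pair $(i,j)$, there exists $\xi^{(i,j)}_\epsilon > 0$ such that $\|\Mhat - M\|_{\infty,\infty} < \xi^{(i,j)}_\epsilon$ implies $|f_{ij}(\bomega\mid \Mhat) - f_{ij}(\bomega\mid M)| < \epsilon$ uniformly over $\bomega \in \simplex_\gamma$.

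Combining the two steps, I set
\[
\xi_{2,\epsilon} \defn \min\Bigl\{\xi',\ \min_{\opt_i \in \ppolset,\ \bpi_j \in \nbd{\opt_i}} \xi^{(i,j)}_\epsilon\Bigr\},
\]
which is strictly positive since the minimum is over a finite set. For any $\Mhat$ with $\|\Mhat - M\|_{\infty,\infty} < \xi_{2,\epsilon}$, step (i) yields $\Mhat \in S_i$ and step (ii) combined with the Lipschitz property of the $\min$ operator gives $|F(\bomega\mid M) - F(\bomega\mid\Mhat)| \leq \max_{i,j} |f_{ij}(\bomega\mid M) - f_{ij}(\bomega\mid\Mhat)| < \epsilon$ uniformly in $\bomega \in \simplex_\gamma$. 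The main obstacle is actually mild — it is the strict-Pareto non-degeneracy needed for step (i); modulo this standard assumption the result follows by a clean compactness-plus-finite-min argument.
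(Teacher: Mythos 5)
Your proof is correct and lands on the same underlying mechanism as the paper's: compactness of $\simplex{}_{\gamma}$ plus continuity of the objective in $M$ forces the bound to hold uniformly over $\bomega$. The packaging differs. The paper applies the Berge-type maximum theorem (Theorem~\ref{thm:upper_lower_hemi}) to $\psi(\Mhat) \defn \min_{\bomega\in\simplex{}_{\gamma}} -\lvert F(\bomega\mid\Mhat)-F(\bomega\mid M)\rvert$ and reads off the conclusion from continuity of $\psi$ at $M$ (where $\psi(M)=0$); you instead apply uniform continuity to each $f_{ij}$ separately and pass to $F$ via the finite-min inequality $\lvert\min_k a_k-\min_k b_k\rvert\le\max_k\lvert a_k-b_k\rvert$, which is arguably more elementary and makes explicit where the finiteness of the index set (Theorem~\ref{thm:policy-basis} plus Carath\'eodory) enters. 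Two small points. First, your invocation of Heine--Cantor is slightly loose: $\simplex{}_{\gamma}\times S_i$ is not compact since $S_i$ is open, so you should restrict to $\simplex{}_{\gamma}\times \cl{B(M,\rho)}$ for some closed ball contained in $S_i$ before extracting the uniform modulus; this is a one-line fix. Second, your step (i) — that $\Mhat\in S_i$ for small perturbations, via a strict-dominance margin $\eta>0$ — is a genuine addition: the paper simply treats $S_i$ as open and never argues this part of the claim, so your proof is in that respect more complete, at the cost of the non-degeneracy assumption you flag (which the paper implicitly makes anyway). Note also that establishing $\Mhat\in S_i$ first is what licenses taking the min over the \emph{same} index set $\{(\opt_i,\bpi_j)\}$ for both $M$ and $\Mhat$; it is worth stating this dependency explicitly since the finite-min comparison would otherwise not typecheck.
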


\begin{proof}
We define $\set(X) = S_i$, $\set(Y) = \simplex{}_{\gamma} $, $\Phi(\Mhat) = \simplex{}_{\gamma} $ and $u(\Mhat,\bomega) = -\mid F(\bomega\mid\Mhat) - F(\bomega\mid\Mhat) \mid$. We again leverage Theorem \ref{thm:upper_lower_hemi} to claim that $\psi(\Mhat) \defn \min_{\bomega\in\simplex{}_{\gamma}}-\mid F(\bomega\mid\Mhat) - F(\bomega\mid\Mhat) \mid$ is continuous on the open set $S_i$. Also, $u(\Mhat)$ is continuous due to continuity results in Appendix \ref{app:Continuity}. Then, by definition of $\psi, \exists \xi_{2,\epsilon}>0$, such that $\psi(\Mhat) > - \frac{\epsilon}{2}$ for all $\norm{\Mhat - M}_{\infty,\infty} < \xi_{2,\epsilon}$. Hence, proved.
\end{proof}

\subsection{Final Convergence Proof}\label{app:convergence_FW}

Due to virtue of Lemma~\ref{lemma:bounded_grad_curve} and continuity of $f_{ij}$ in $\simplex{}_{\gamma}$~(ref. \ref{app:Continuity}), we claim that 
\begin{fact}\label{fact:lip}
   $F(\policy|M)$ is $L$-Lipschitz on $\simplex{}_{\gamma}$. 
\end{fact}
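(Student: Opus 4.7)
The plan is to combine two standard facts: first that each piece $f_{ij}(\cdot\mid M)$ is Lipschitz because its gradient is bounded, and second that the minimum of finitely many Lipschitz functions sharing a common constant is itself Lipschitz with the same constant. Since $F(\bomega\mid M)=\min_{\opt_i\in\ppolset,\,\bpi_j\in\nbd{\opt_i}} f_{ij}(\bomega\mid M)$ and by Theorem~\ref{thm:policy-basis} the index set $\{(\opt_i,\bpi_j)\}$ is finite (with cardinality at most $p\cdot \min\{K,L\}$), the reduction to a finite min is immediate.

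For the first step, recall from Proposition~\ref{prop:continuity_alt}(c) that $f_{ij}(\cdot\mid M)$ is continuously differentiable on $\simplex_\gamma$, and from Lemma~\ref{lemma:bounded_grad_curve} that $\|\nabla_{\bomega} f_{ij}(\bomega\mid M)\|_\infty \le D$ uniformly in $i,j$ and $\bomega\in\simplex_\gamma$. Since $\simplex_\gamma$ is convex, for any $\bomega,\bomega'\in\simplex_\gamma$ I would write
\begin{equation*}
f_{ij}(\bomega'\mid M)-f_{ij}(\bomega\mid M)=\int_0^1 \langle\nabla_{\bomega} f_{ij}(\bomega+t(\bomega'-\bomega)\mid M),\,\bomega'-\bomega\rangle\,\mathrm{d}t,
\end{equation*}
and then apply H\"older's inequality to bound the integrand by $D\,\|\bomega'-\bomega\|_1$, yielding $|f_{ij}(\bomega'\mid M)-f_{ij}(\bomega\mid M)|\le D\|\bomega'-\bomega\|_1$ for every $(i,j)$.

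For the second step, I would use the elementary inequality $|\min_\alpha a_\alpha - \min_\alpha b_\alpha|\le \max_\alpha |a_\alpha-b_\alpha|$, applied with $\alpha=(i,j)$, $a_\alpha=f_{ij}(\bomega\mid M)$ and $b_\alpha=f_{ij}(\bomega'\mid M)$. Combined with the previous step this gives
\begin{equation*}
|F(\bomega\mid M)-F(\bomega'\mid M)|\le \max_{i,j}|f_{ij}(\bomega\mid M)-f_{ij}(\bomega'\mid M)|\le D\,\|\bomega-\bomega'\|_1,
\end{equation*}
so $F(\cdot\mid M)$ is $D$-Lipschitz on $\simplex_\gamma$ with $L=D$.

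There is no real obstacle: the statement is truly a \emph{fact} that packages already-proved ingredients. The only two points to be careful about are (i) invoking continuous differentiability on the restricted simplex $\simplex_\gamma$ rather than on all of $\simplex$ (needed so that Proposition~\ref{prop:continuity_alt} applies and the mean-value representation is justified), and (ii) identifying the Lipschitz constant $L$ with the gradient bound $D$ from Lemma~\ref{lemma:bounded_grad_curve}, which is what subsequent convergence arguments for the Frank--Wolfe iterates will need.
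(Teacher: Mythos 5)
Your proposal is correct and follows essentially the same route as the paper: each $f_{ij}(\cdot\mid M)$ is $D$-Lipschitz because its gradient is bounded by Lemma~\ref{lemma:bounded_grad_curve}, and the pointwise minimum over the finite index set inherits the same Lipschitz constant, identifying $L=D$. The only cosmetic difference is that you pair the $\ell_\infty$ gradient bound with the $\ell_1$ norm on the displacement (the correct H\"older dual), whereas the paper states the bound with $\|\cdot\|_\infty$ on the displacement; this does not affect the substance of the argument.
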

\begin{proof}
This fact can be simply proved by applying min-value theorem with guarantees from Lemma \ref{lemma:bounded_grad_curve} to get for $\policy,\policy' \in \simplex{}_{\gamma}$
\begin{align*}
    F(\policy|M) &= \min_{\bpi_j \in \nbd{\opt_i}}f_{ij}(\policy|M) \geq \min_{\bpi_j \in \nbd{\opt_i}}(f_{ij}(\policy'|M) - D\norm{\policy' - \policy}_{\infty})\\
    & \min_{\bpi_j \in \nbd{\opt_i}}f_{ij}(\policy'|M) - D\norm{\policy' - \policy}_{\infty} = F(\policy'|M) - D\norm{\policy' - \policy}_{\infty}
\end{align*}
Hence, proved. Additionally, Lipstchitzness of $F$ can be extended from $\simplex{}_{\gamma}$ to $\simplex$ due to continuity properties proved in Appendix C.1.
\end{proof}

\begin{fact}\label{fact:curvature_f}
    Let $\gamma\in (0,\frac{1}{K}), \policy\in\simplex{}_{\gamma}$ and $\bx \in \simplex$. Under Lemma \ref{lemma:bounded_grad_curve}, we have
    \begin{align*}
        f_{ij}(\policy|M) + \langle \by-\policy,\nabla f_{ij}(\policy|M)\rangle - f_{ij}(\by|M) \leq \frac{8D\beta}{\gamma} 
    \end{align*}
     where $j\in\nbd{i}$ and $\by = \policy + \beta(\bx-\policy)$ for some $\beta \in (0,\frac{1}{2}]$.
\end{fact}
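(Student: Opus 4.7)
The inequality is exactly the ``surrogate duality gap'' bound at the heart of Frank--Wolfe analysis for a concave objective (concavity of $f_{ij}(\cdot|M)$ in $\policy$ is Fact~\ref{fact:concavity_1}), so the left-hand side is automatically non-negative and only an upper bound is at issue. My plan is to derive the bound directly from the curvature estimate supplied by Lemma~\ref{lemma:bounded_grad_curve}, after a short argument locating the interpolant $\by$ in a controlled sub-simplex.

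The main technical step is to verify that the curvature bound can actually be invoked at the pair $(\policy,\by)$, since $\bx \in \simplex{}$ need not lie in $\simplex{}_{\gamma}$. For each coordinate $k$,
\begin{align*}
\by_k \;=\; (1-\beta)\policy_k + \beta \bx_k \;\geq\; (1-\beta)\gamma \;\geq\; \gamma/2,
\end{align*}
where the first inequality uses $\policy \in \simplex{}_{\gamma}$ and $\bx_k \geq 0$, and the second uses $\beta \leq 1/2$. Hence both $\policy$ and $\by$ (and the whole segment between them) sit inside $\simplex{}_{\gamma/2}$. Applying Lemma~\ref{lemma:bounded_grad_curve} on this slightly enlarged set gives a curvature bound $C_{f_{ij}(\cdot|M)}(\simplex{}_{\gamma/2}) \leq 16D/\gamma$.

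Unwinding Definition~\ref{def:curvature_const} with step size $\beta$ then yields
\begin{align*}
f_{ij}(\policy|M) + \langle \by-\policy,\nabla f_{ij}(\policy|M)\rangle - f_{ij}(\by|M) \;\leq\; \tfrac{\beta^2}{2}\, C_{f_{ij}(\cdot|M)}(\simplex{}_{\gamma/2}) \;\leq\; \tfrac{8D\beta^2}{\gamma} \;\leq\; \tfrac{8D\beta}{\gamma},
\end{align*}
the last step using $\beta \leq 1$. The only genuine obstacle is the $\simplex{}_{\gamma/2}$ localisation above, which prevents the $f_{ij}$ values or their gradients from blowing up on the boundary of $\simplex{}$; once that is handled, everything is mechanical. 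A cruder shortcut that avoids the curvature constant entirely would bound both $|f_{ij}(\policy|M) - f_{ij}(\by|M)| \leq D\|\by-\policy\|_1$ and $|\langle \by-\policy,\nabla f_{ij}(\policy|M)\rangle| \leq D\|\by-\policy\|_1$ using just Part~1 of Lemma~\ref{lemma:bounded_grad_curve} (bounded gradient implies $\ell_1$-Lipschitz), giving $\text{LHS} \leq 2D\|\by-\policy\|_1 \leq 4D\beta$; since $\gamma < 1/K \leq 1/2$ implies $1/\gamma > 2$, this already falls within the claimed $8D\beta/\gamma$.
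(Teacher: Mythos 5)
Your proposal is correct, and your closing ``cruder shortcut'' is in fact exactly the paper's own proof: the paper bounds the left-hand side by $2\norm{\nabla f_{ij}(\policy|M)}_{\infty}\norm{\by-\policy}_1 \leq 4D\beta \leq 8D\beta/\gamma$, using only Part~1 of Lemma~\ref{lemma:bounded_grad_curve} (bounded gradient, hence $\ell_1$-Lipschitzness via H\"older) together with $\norm{\by-\policy}_1 = \beta\norm{\bx-\policy}_1 \leq 2\beta$. Your primary route through the curvature constant is a genuinely different decomposition: it is the more standard Frank--Wolfe argument and would give the sharper $O(\beta^2)$ dependence, and your localisation of the segment $[\policy,\by]$ inside $\simplex{}_{\gamma/2}$ is a careful point the paper glosses over. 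However, that route leans on the identification $C_{f_{ij}(\cdot|M)}(\simplex{}_{\gamma/2}) \leq 16D/\gamma$, which is not literally what Lemma~\ref{lemma:bounded_grad_curve} asserts --- the lemma only gives $8D\alpha^{-1}$ for ``some $\alpha>0$'' with no stated link between $\alpha$ and $\gamma$, so you would need to re-derive the curvature bound on the enlarged set (or take $\alpha=\beta$ in the lemma's own derivation, which also recovers $4D\beta$). The gradient-only shortcut avoids this bookkeeping entirely at the cost of losing a factor of $\beta$, which is all the statement requires; this is presumably why the paper takes that path.
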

\begin{proof}
    \begin{align*}
        f_{ij}(\policy|M) + \langle \by-\policy,\nabla f_{ij}(\policy|M)\rangle - f_{ij}(\by|M) \leq&  2\norm{\nabla f_{ij}(\policy|M)}_{\infty}\norm{y-\bomega}_1\\
        \leq & \frac{8D\beta}{\gamma} 
    \end{align*}
where the last inequality holds due to the definition of $\beta$, $\bx\in \simplex{}_{\gamma/2}$. Hence, proved.
\end{proof}

We can extend Fact \ref{fact:curvature_f} to get similar result on $F(\cdot|M) = \min_{j\in\nbd{i}} f_{ij}(\cdot\mid M)$ as well.
\begin{fact}\label{fact:curvature_F}
    Let $\gamma\in(0,1/K)$, $r\in(0,1)$, $\bomega\in\simplex{}_{\gamma}$ and $x\in\simplex$. Then if $\beta < \min\left\{ \frac{1}{2},\frac{r}{D}\right\}$, then we have
    \begin{align*}
        F(\by\mid M) \geq F(\bomega\mid M)+  \min_{\bh\in\subdiff}\langle \by - \bomega,h\rangle - \frac{8D\beta}{\gamma} 
    \end{align*}
where $y = (1-\beta)\bomega + \beta \bx$
\end{fact}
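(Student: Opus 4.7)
The strategy is to lift the smooth linearization inequality for a single $f_{ij}$ (Fact~\ref{fact:curvature_f}) to the non-smooth function $F=\min_{(i,j)}f_{ij}$ by exhibiting one active index pair whose gradient provably lives in the $r$-subdifferential $\subdiff$. Concretely, I would pick $(i^\star,j^\star)$ to be any pair attaining the minimum at $\by$, so that $F(\by\mid M)=f_{i^\star j^\star}(\by\mid M)$, and then show that this same pair is $r$-active at $\bomega$ in the sense of Equation~\eqref{eq:subdiffset}.

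The crux is the claim $f_{i^\star j^\star}(\bomega\mid M) < F(\bomega\mid M) + r$, which would place $\nabla_\bomega f_{i^\star j^\star}(\bomega\mid M)$ inside $\subdiff$. To establish it, I would chain two Lipschitz estimates that follow from the bounded-gradient conclusion of Lemma~\ref{lemma:bounded_grad_curve}: first, $f_{i^\star j^\star}(\bomega\mid M) \leq f_{i^\star j^\star}(\by\mid M) + D\|\by-\bomega\|_1 = F(\by\mid M) + D\|\by-\bomega\|_1$; second, $F(\by\mid M) \leq F(\bomega\mid M) + D\|\by-\bomega\|_1$, inherited by $F$ as a pointwise minimum of $D$-Lipschitz functions (cf.\ Fact~\ref{fact:lip}). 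Since $\by-\bomega = \beta(\bx-\bomega)$ with $\bx,\bomega\in\simplex$ forcing $\|\bx-\bomega\|_1\leq 2$, combining the two estimates yields $f_{i^\star j^\star}(\bomega\mid M) - F(\bomega\mid M) \leq C\,D\beta$ for a small absolute constant $C$, which is strictly less than $r$ under the hypothesis $\beta < r/D$ (up to the absolute constant).

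With subdifferential membership in hand, the rest is bookkeeping. Applying Fact~\ref{fact:curvature_f} to the smooth component $f_{i^\star j^\star}$ (the condition $\beta\leq 1/2$ is the other hypothesis we have) gives
\begin{align*}
f_{i^\star j^\star}(\by\mid M) \;\geq\; f_{i^\star j^\star}(\bomega\mid M) + \langle \by-\bomega,\,\nabla_\bomega f_{i^\star j^\star}(\bomega\mid M)\rangle - \frac{8D\beta}{\gamma}.
\end{align*}
Using $\nabla_\bomega f_{i^\star j^\star}(\bomega\mid M) \in \subdiff$ to lower bound the inner product by $\min_{\bh\in\subdiff}\langle \by-\bomega,\bh\rangle$, together with the trivial inequality $f_{i^\star j^\star}(\bomega\mid M)\geq F(\bomega\mid M)$, and substituting $f_{i^\star j^\star}(\by\mid M) = F(\by\mid M)$ on the left-hand side produces exactly the claimed bound.

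The one delicate step is the $r$-activeness argument. Without the smallness of $\beta$ relative to $r/D$, the index pair optimal at $\by$ could have function value at $\bomega$ exceeding $F(\bomega\mid M)$ by more than $r$, in which case its gradient would sit outside $\subdiff$ and the subsequent lower bound by $\min_{\bh\in\subdiff}\langle\by-\bomega,\bh\rangle$ would fail. The Lipschitz control guarantees that a convex-combination step of size $\beta$ in the $\ell_1$-metric on the simplex cannot flip the ranking of the $f_{ij}$'s by more than $O(D\beta)$, which is the whole role of the hypothesis on $\beta$.
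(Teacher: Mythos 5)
Your proposal follows essentially the same route as the paper's proof: identify the index pair that is active at $\by$, argue that its gradient belongs to $\subdiff$, apply the single-component linearisation bound of Fact~\ref{fact:curvature_f} to that pair, and finish with the trivial inequality $f_{i^\star j^\star}(\bomega\mid M)\geq F(\bomega\mid M)$ together with lower-bounding the inner product by $\min_{\bh\in\subdiff}\langle\by-\bomega,\bh\rangle$. In fact you supply precisely the two steps the paper's one-line argument leaves implicit --- the $r$-activeness at $\bomega$ of the pair minimising at $\by$, and the passage from that single gradient to the minimum over $\subdiff$ --- so your write-up is more complete than the paper's. The one loose end is the constant you flag yourself: the Lipschitz chain gives $f_{i^\star j^\star}(\bomega\mid M)-F(\bomega\mid M)\leq 2D\|\by-\bomega\|_1\leq 4D\beta$, so membership in $\subdiff$ as defined in Equation~\eqref{eq:subdiffset} actually requires $\beta<r/(4D)$ rather than the stated $\beta<r/D$; this is a constant-factor adjustment to the hypothesis (or to the radius in the subdifferential set) rather than a flaw in the idea, and the paper's own proof does not address it at all.
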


\begin{proof}
   Let there are two different neighbours of $\opt_i$ as $\bpi_{j_1}$ and $\bpi_{j_2} \in \nbd{\opt_i}$ such that $F(\bomega\mid\trueM) = f_{ij_1}(\bomega\mid \trueM) < f_{ij_2}(\bomega\mid \trueM)$ and $ F(\by\mid\trueM) = f_{ij_2}(\by\mid \trueM) < f_{ij_1}(\by\mid \trueM)$. Then 
   \begin{align*}
       F(\bomega\mid\trueM) + \min_{\bh\in\subdiff}\langle \by - \bomega,h\rangle - F(\by\mid\trueM) \leq \frac{8D\beta}{\gamma}
   \end{align*}
    The last inequality holds due to Fact \ref{fact:curvature_f}. Hence, proved.  
\end{proof}

\begin{theorem}\label{thm:conv_delta}
    Let $\eta_t \defn F(\bomega^{\star}(M)\mid M)- F(\bomega_t\mid M)$. Also, let $t\in\mathbb{N}$ satisfying $\left\lfloor \sqrt{\frac{t}{K}}\right\rfloor \notin \mathbb{N}$, then under the good events $G_1(t)\cap G_2(t)$ and Frank-Wolfe update step defined in Equation \eqref{eq:x_update} and \eqref{eq:w_update} with $tr_t>D$, we have
    \begin{align*}
        \eta_t \leq \frac{t-1}{t}\eta_{t-1} + \frac{r_t-\epsilon}{t}+ \frac{16D\sqrt{K}}{t}
    \end{align*}
\end{theorem}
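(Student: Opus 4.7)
The plan is to combine the non-smooth Frank-Wolfe curvature inequality of Fact~\ref{fact:curvature_F} with the concentration control over the $r$-sub-differential set provided by the good events $G_1(t)\cap G_2(t)$ and Theorem~\ref{thm:subdiff_continuity}, and then close the recursion using the $\gamma$-interior guarantee coming from forced exploration and C-tracking.

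Since $\lfloor \sqrt{t/K}\rfloor \notin \mathbb{N}$, the algorithm performs the FW update, so $\bomega_t = (1-1/t)\bomega_{t-1} + (1/t)\bx_t$. The choice $\beta = 1/t$ satisfies $\beta \leq 1/2$ for $t \geq 2$ and, by the hypothesis $tr_t > D$, also $\beta < r_t/D$; hence Fact~\ref{fact:curvature_F} applies with $\bomega = \bomega_{t-1}$, $\bx = \bx_t$, $\by = \bomega_t$. Subtracting $F(\bomega^{\star}\mid M)$ from both sides of that inequality yields
\begin{align*}
\eta_t \;\leq\; \eta_{t-1} - \tfrac{1}{t}\min_{\bh \in H_M(\bomega_{t-1}, r_t)} \langle \bx_t - \bomega_{t-1}, \bh\rangle + \tfrac{8D}{t\gamma}.
\end{align*}

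The central step is to lower bound this inner minimum at the \emph{true} $M$. For any $j$ contributing to $H_M(\bomega_{t-1}, r_t)$, i.e.\ with $f_{ij}(\bomega_{t-1}\mid M) < F(\bomega_{t-1}\mid M) + r_t$, concavity of $f_{ij}(\cdot\mid M)$ together with $F(\bomega^{\star}\mid M) \leq f_{ij}(\bomega^{\star}\mid M)$ gives $\langle \bomega^{\star} - \bomega_{t-1}, \nabla f_{ij}(\bomega_{t-1}\mid M)\rangle > \eta_{t-1} - r_t$, and taking a convex combination preserves the bound, so the true max-min value $g_t^{\star} \defn \max_{\bx \in \simplex{}}\min_{\bh\in H_M(\bomega_{t-1}, r_t)}\langle \bx-\bomega_{t-1},\bh\rangle$ satisfies $g_t^{\star} \geq \eta_{t-1} - r_t$. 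Under $G_1(t)\cap G_2(t)$, Theorem~\ref{thm:subdiff_continuity} provides closeness of order $\epsilon/2$ between $g_t^{\star}$ and its empirical counterpart $\hat g_t$ (via~\eqref{eq:subdiff_cont}), and uniformly in $\bx$, between $\min_{\bh \in H_M}$ and $\min_{\bh \in H_{\Mhat_{t-1}}}$ of the same linear form (via~\eqref{eq:full_subdiff_cont}). Since $\bx_t$ is the arg-max at $\Mhat_{t-1}$, chaining these two bounds yields
\begin{align*}
\min_{\bh \in H_M(\bomega_{t-1}, r_t)} \langle \bx_t - \bomega_{t-1}, \bh\rangle \;\geq\; \hat g_t - \epsilon/2 \;\geq\; g_t^{\star} - \epsilon \;\geq\; \eta_{t-1} - r_t - \epsilon.
\end{align*}
This double transfer through the empirical sub-differential used to choose $\bx_t$ and the true one over which the recursion must be written is the main obstacle, and it is exactly what Theorem~\ref{thm:subdiff_continuity} was proved to handle.

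To conclude, the forced-exploration schedule together with C-tracking guarantees $\bomega_{t-1} \in \simplex{}_{\gamma}$ with $\gamma = 1/(2\sqrt{K})$, so $8D/(t\gamma) = 16D\sqrt{K}/t$. Substituting the inner-min lower bound above into the one-step inequality gives
\begin{align*}
\eta_t \;\leq\; \tfrac{t-1}{t}\,\eta_{t-1} + \tfrac{r_t + \epsilon}{t} + \tfrac{16D\sqrt{K}}{t},
\end{align*}
which matches the claimed recursion (with the sign of $\epsilon$ in the stated $r_t - \epsilon$ being what the standard non-smooth FW bookkeeping produces as $r_t + \epsilon$; since $\epsilon$ is an arbitrary concentration tolerance absorbed in later sample-complexity steps, the conclusion is unaffected).
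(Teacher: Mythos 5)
Your overall route is the same as the paper's: apply the non-smooth curvature bound of Fact~\ref{fact:curvature_F} with the Frank--Wolfe step size $\beta=1/t$, lower-bound the inner minimum over the \emph{true} sub-differential set by $\eta_{t-1}-r_t$ via concavity (taking $\bx=\bomega^{\star}(M)$ as a feasible point of the outer maximum), and transfer between the empirical and true sub-differential sets through Theorem~\ref{thm:subdiff_continuity} under $G_1(t)\cap G_2(t)$. The paper's own proof compresses exactly this into three lines and defers the transfer step to Appendix L.2 of \cite{wang2021fast}; you make it explicit and correctly, and your observation about the sign of $\epsilon$ is fair (a two-sided reading of Theorem~\ref{thm:subdiff_continuity} yields $r_t+\epsilon$, while the one-sided event $G_1(t)$ as literally written yields $r_t-\epsilon/2$; neither affects the downstream sample-complexity argument).

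There is, however, one concrete error in your bookkeeping of the curvature term. Lemma~\ref{lemm:tracking_forced} guarantees $\bomega_t\in\simplex{}_{\gamma}$ only for the \emph{time-dependent} value $\gamma=\tfrac{1}{2\sqrt{tK}}$, not $\gamma=\tfrac{1}{2\sqrt{K}}$ as you assert. With the correct $\gamma$ and $\beta=1/t$, the term $8D\beta/\gamma$ from Fact~\ref{fact:curvature_F} equals $16D\sqrt{tK}/t=16D\sqrt{K}/\sqrt{t}$, which decays like $t^{-1/2}$ and does not give the claimed $16D\sqrt{K}/t$; so your recursion does not follow from the facts as you have assembled them. (The paper's own proof reaches $16D\sqrt{K}/t$ only by substituting $\beta=t^{-3/2}$ at the last step, which is inconsistent with the step size $1/(t+1)$ in \eqref{eq:w_update}.) The clean repair is to bypass $\gamma$ altogether: the proof of Fact~\ref{fact:curvature_f} actually establishes the intermediate bound $2\norm{\nabla f_{ij}}_{\infty}\norm{\by-\bomega}_{1}\le 4D\beta=4D/t$, which is already $\le 16D\sqrt{K}/t$; the factor $8D\beta/\gamma$ is a loosening of that bound, and it is only this loosened form that breaks when the tracking value of $\gamma$ is inserted. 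With that substitution your argument closes and matches the statement.
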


\begin{proof}
Using Lemma \ref{lemm:tracking_forced} and Fact \ref{fact:curvature_F}, we have $\gamma = \frac{1}{2\sqrt{tK}}$ and 
\begin{align*}
    F(\by\mid\trueM) \geq& F(\bomega\mid\trueM) + \alpha \left(\max_{\bomega\in\simplex}\min_{\bh\in\subdiff} \langle\bx-\bomega, \bh\rangle - \epsilon\right) - 16D\beta\sqrt{tK}\\
    \geq & F(\bomega\mid\trueM) + \alpha(\eta_{t-1} -r -\epsilon)-16D\beta\sqrt{tK}
\end{align*}
The second inequality holds due to properties of $\subdiff$. Refer Appendix L.2 of \cite{wang2021fast} for further details. Now subtracting $F(\bomega^{\star}(\trueM)\mid \trueM)$ from both sides and putting $\beta = \frac{1}{t^{3/2}}$, we get the desired result. Hence, proved.
\end{proof}
Therefore, for $t>4K$ the optimality gap becomes very small and asymptotically it converges to zero.
\begin{theorem}\label{thm:conv_FW}
   Let $\{r_t\}_{t\geq 1}$ be a  sequence of positive numbers with the properties $\lim_{t\rightarrow \infty} \frac{1}{t}\sum_{s=1}^t r_s = 0$ and $\lim_{t\rightarrow \infty} tr_t = \infty$. Then for $T\geq \max \left\{ \left(\frac{35D}{\epsilon}\right)^{11},T_{\epsilon,D}^{\frac{11}{8}}, (4K+1)^{\frac{11}{8}}\right\}$ where $\exists T_{\epsilon,D} \in \mathbb{N}$ such that if $t\geq T_{\epsilon,D}$ then $\sum_{s=1}^t r_s < \epsilon t $ and $tr_t > D$ for any $\epsilon \in (0,1)$.Then under the good events defined in Equation \eqref{good_event_1} and \eqref{good_event_2} we have
   \begin{align*}
       F(\bomega^{\star}(M)\mid M) - F(\bomega_t) \leq 5\epsilon, \forall t= \bar{h}(T), \bar{h}(T)+1, \hdots,T\,,
   \end{align*}
   where $\bar{h}(T) = \min\lbrace t \in \mathbb{N} : t\geq T^{2/11}\underbar{h}(T), \sqrt{\frac{t}{K}} \in \mathbb{N}\rbrace$, and $\underbar{h}(T) = \min\lbrace t \in \mathbb{N} : t\geq T^{8/11} + 2, \sqrt{\frac{t}{K}} \in \mathbb{N}\rbrace$.
\end{theorem}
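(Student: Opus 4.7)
The plan is to unroll the one-step contraction given by Theorem~\ref{thm:conv_delta}, reconcile the forced-exploration rounds (where the recursion does not directly apply) with the Frank-Wolfe rounds, and then invoke the two asymptotic hypotheses on $\{r_t\}_{t\geq 1}$ together with the continuity transfer from $\Mhat_t$ to $\trueM$ enabled by the good events.

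\textbf{Step 1 (Initialisation).} I would first set $t_0 \defn \underbar{h}(T)$, which by construction is a forced-exploration round ($\sqrt{t_0/K}\in\mathbb{N}$). Lemma~\ref{lemm:tracking_forced} then guarantees $\bomega_{t_0}\in \simplex_{\gamma}$ with $\gamma = 1/(2\sqrt{t_0 K})$, which is exactly the regime required by Lemma~\ref{lemma:bounded_grad_curve} so that the curvature bound $C_{f_{ij}}\leq 8D/\gamma$ is usable. By the Lipschitz property of $F(\cdot|\trueM)$ established in Fact~\ref{fact:lip}, the initial optimality gap satisfies $\eta_{t_0}\leq 2D$, furnishing a clean base for the induction. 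Under the good events $G_1(t)\cap G_2(t)$, Corollary~\ref{corr:F_concentr} and Theorem~\ref{thm:subdiff_continuity} ensure that running FW with the empirical subdifferential $H_{\Mhat_t}(\bomega_t,r_t)$ approximates the FW step with respect to $\trueM$ up to error $\epsilon/2$, which will be absorbed into the final constant.

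\textbf{Step 2 (Unrolling the recursion).} For each round $t_0 < t \leq T$ with $\sqrt{t/K}\notin\mathbb{N}$, Theorem~\ref{thm:conv_delta} yields, after multiplying by $t$, the telescoping inequality
\[
t\eta_t \;\leq\; (t-1)\eta_{t-1} \;+\; (r_t - \epsilon) \;+\; 16D\sqrt{K}.
\]
On the $O(\sqrt{T/K})$ rounds where forced exploration is triggered, $\bomega_t$ moves by $O(1/t)$ from $\bomega_{t-1}$, so Fact~\ref{fact:lip} provides an $O(D/t)$ bound on the perturbation of $\eta_t$ relative to $\eta_{t-1}$. Summing from $t_0+1$ to $t$ and dividing by $t$ I would obtain
\[
\eta_t \;\leq\; \tfrac{t_0 \eta_{t_0}}{t} \;+\; \tfrac{1}{t}\sum_{s=t_0+1}^{t} r_s \;-\; \tfrac{t - t_0}{t}\epsilon \;+\; \tfrac{16D\sqrt{K}}{t}(t - t_0) \;+\; \Psi(t_0,t),
\]
where $\Psi(t_0,t)$ collects the forced-exploration perturbations together with the continuity error from Corollary~\ref{corr:F_concentr}.

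\textbf{Step 3 (Invoking the hypotheses on $\{r_t\}$).} For $t\geq T_{\epsilon,D}$, the hypothesis $\sum_{s\leq t}r_s < \epsilon t$ cancels the first $\epsilon$ term in the display above up to a residue $\epsilon t_0/t$. The requirement $tr_t > D$ is exactly what Theorem~\ref{thm:conv_delta} needs to translate the subdifferential slack into the clean form of the recursion, and it makes the continuity error in $\Psi$ at most $\epsilon$. The calibration $t \ge \bar{h}(T) = T^{2/11}\underbar{h}(T)$ forces $t_0\eta_{t_0}/t \leq \epsilon$, while the $8/11$-exponent in $\underbar{h}(T)$ is chosen precisely so that the accumulated forced-exploration perturbation is below $\epsilon$. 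Finally $T \geq (35D/\epsilon)^{11}$ pushes the residual curvature term (arising from $\beta=1/t^{3/2}$ inside Theorem~\ref{thm:conv_delta}) below $\epsilon$ as well. Summing these five contributions of size at most $\epsilon$ gives $\eta_t\leq 5\epsilon$ for every $t\in[\bar{h}(T),T]$, as claimed.

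\textbf{Main obstacle.} The delicate part is the combined bookkeeping of three a priori incomparable error sources: the telescoped residue $t_0\eta_{t_0}/t$, the cumulative $r_s$-approximation error $\frac{1}{t}\sum r_s$, and the error induced by the $\Mhat_t\to \trueM$ continuity on the good events, on top of the discrete forced-exploration rounds where Theorem~\ref{thm:conv_delta} is inapplicable. Reconciling these requires the specific exponents $8/11$ and $2/11$ in $\underbar{h}$ and $\bar{h}$, which is where all of the balancing takes place; justifying that these exponents simultaneously control every term to within $\epsilon$ is the most technical step.
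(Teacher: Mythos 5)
The paper does not actually prove this statement: it simply declares the theorem to be ``a by-product of Lemma 3 in \cite{wang2021fast} with $L=D$''. Your proposal therefore takes a genuinely different route, namely reconstructing that lemma's telescoping argument from Theorem~\ref{thm:conv_delta}; this is the right idea in spirit and your treatment of the initialisation at $\underbar{h}(T)$, of the forced-exploration rounds via Fact~\ref{fact:lip}, and of the $\Mhat_t\to\trueM$ transfer via Theorem~\ref{thm:subdiff_continuity} and Corollary~\ref{corr:F_concentr} is all plausible.

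However, there is a genuine quantitative gap in Steps 2--3. Taking the recursion of Theorem~\ref{thm:conv_delta} at face value, multiplying by $t$ and summing gives the term you display, $\tfrac{16D\sqrt{K}}{t}(t-t_0)$. For $t\gg t_0$ this is of order $16D\sqrt{K}$, a constant independent of $T$, so it cannot be ``pushed below $\epsilon$'' by choosing $T\geq(35D/\epsilon)^{11}$ as you assert; indeed the raw recursion only yields $\eta_t \lesssim 16D\sqrt{K}-\epsilon$, not $5\epsilon$. For the argument to close, the per-step curvature penalty in the $t\eta_t$-recursion must \emph{decay} with $s$, e.g.\ as $C_{f}\,\alpha_s^{2}\cdot s/2 \asymp D\sqrt{sK}\cdot s^{-1} = D\sqrt{K}/\sqrt{s}$ (using $\alpha_s\asymp 1/s$ and $C_f\lesssim D\sqrt{sK}$ from Lemma~\ref{lemma:bounded_grad_curve} on $\simplex_{1/(2\sqrt{sK})}$), so that its sum over $s\in(t_0,t]$ is $O(D\sqrt{Kt})=o(t)$ and, after dividing by $t$ and using $t\geq\bar h(T)\gtrsim T^{10/11}$ together with $K\leq T^{8/11}/4$, is bounded by $8D/T^{1/11}\leq 8\epsilon/35$. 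This decaying form of the penalty is exactly what the cited Lemma~3 of \cite{wang2021fast} establishes and what makes the exponents $8/11$ and $2/11$ balance; your write-up inherits the constant-order form from the statement of Theorem~\ref{thm:conv_delta} and never justifies why the accumulated curvature error is $o(t)$, which is the crux of the proof. You should either rederive the one-step inequality with the penalty $O(D\sqrt{K}/s^{3/2})$ (before multiplying by $s$) or explicitly flag and repair the inconsistency between the step size $1/(t+1)$ in Equation~\eqref{eq:fw_update} and the $\beta=t^{-3/2}$ used in Theorem~\ref{thm:conv_delta}.
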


Theorem \ref{thm:conv_FW} is a by product of Lemma 3 in \cite{wang2021fast} with $L=D$.
                                                                                                                                                                                                                                                                                                                                                                                                                                                                                                                                                                                                                                                                                                                                                                                                                                                                                                                                                                                                                                                                                                                                                                                                                                                                                                                                             
\newpage
\section{Sample Complexity of \framework}\label{app:sample_complexity}
In this section, we first derive the stopping criterion in \ref{app:stopping_criterion} that makes \framework~ $(1-\delta)$-correct. Then we move on to the almost sure upper bound on sample complexity of \framework~ in \ref{app:almost_sure_samp_comp}. Then we respectively prove asymptotic and non-asymptotic upper bound guaranties on expected sample complexity in \ref{app:asymp_samp_comp} and \ref{app:Non_asymp_comp}.

\subsection{Stopping Criterion}\label{app:stopping_criterion}
\begin{theorem}\label{thm:correctness}
    Given any bandit instance $\trueM \in \mathcal{M}$ and a preference cone $\cC$, the Chernoff stopping rule to ensure $(1-\delta)$-correctness in identifying the set of Pareto optimal arms is given by
    \begin{align*}
            \min\limits_{\opt_{i_t} \in \Pi^{\mathcal{P}_t}} \min\limits_{\bpi_j\in\nbd{\opt_{i_t}}}\inf\limits_{\Tilde{M} \in \Lambda_{ij}(\Hat{M}_t)} \min\limits_{\bz\in\dualcone} \sum\nolimits_{k=1}^K N_{k,t}\kl{\bz^{\top}\Hat{M}_{k,t}}{\bz^{\top}\conf_k} \geq c(t,\delta)
    \end{align*}
where $c(t,\delta) \defn \sum_{k=1}^K 3 \ln \left(1+\ln \left({N}_{k,t}\right)\right)+K \cG\left(\frac{\ln \left(\frac{1}{\delta}\right)}{K}\right)$.
\end{theorem}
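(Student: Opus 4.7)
The plan is a standard Chernoff-stopping argument tailored to the preference-cone setting and the minimax structure of our stopping statistic.

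First I would fix $\trueM\in\cM$ and define the failure event $\cE\defn\{\cP_\tau\neq\cP^\ast\}$, where $\tau$ is the stopping time induced by the rule. The goal then reduces to bounding $\prob_{\trueM}[\cE]\leq\delta$.

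The key structural step is to show that on $\cE$, the true matrix $\trueM$ lies in the closure of $\Lambda_{i_\tau j}(\hat{M}_\tau)$ for some $\opt_{i_\tau}\in\Pi^{\cP_\tau}$ and $\bpi_j\in\nbd{\opt_{i_\tau}}$. Indeed, if $\cP_\tau\neq\cP^\ast$ then at least one pair of pure policies switches dominance status between $\hat M_\tau$ and $\trueM$; walking along the line segment joining the two instances, continuity of $\tilde M\mapsto \bz^\top\tilde M(\bpi_j-\opt_{i_\tau})$ and the intermediate value theorem imply that some intermediate matrix satisfies the orthogonality constraint defining $\Lambda_{i_\tau j}$ for some $\bz\in\dualcone$, and pushing this crossing to the endpoint places $\trueM$ on the relevant boundary. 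Using continuity of the integrand in $\tilde M$ established in Appendix~\ref{app:Continuity}, the infimum over $\Lambda_{i_\tau j}(\hat M_\tau)$ equals the infimum over its closure, and in particular is dominated by the value at $\tilde M=\trueM$. Combined with the triggering of the stopping rule at $\tau$ this yields
\[
c(\tau,\delta)\;\leq\;\min_{\bz\in\dualcone}\sum_{k=1}^K N_{k,\tau}\kl{\bz^\top\hat M_{k,\tau}}{\bz^\top\trueM_k}.
\]

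Because the right-hand side is a minimum over $\bz$, the same inequality holds for every fixed $\bz_0\in\mathrm{int}(\dualcone)$. For any such $\bz_0$, Assumption~\ref{assmpt:single-para-exp} implies that the projected observations $\bz_0^\top R_{A_s,s}$ form a one-parameter exponential family with per-arm mean $\bz_0^\top\trueM_k$, so $\sum_k N_{k,t}\kl{\bz_0^\top\hat M_{k,t}}{\bz_0^\top\trueM_k}$ is the classical $K$-arm empirical KL statistic. Applying the mixture-martingale deviation inequality of Kaufmann and Koolen (Theorem~\ref{thm:Kauffman martinagle stopping}) with the prescribed threshold $c(t,\delta)=\sum_k 3\ln(1+\ln N_{k,t})+K\cG(\ln(1/\delta)/K)$ gives a uniform-in-time probability bound of $\delta$ on this event, which in turn dominates $\prob_{\trueM}[\cE]$.

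The main obstacle will be the structural step. Both false-positive and false-negative errors in $\cP_\tau$ must be handled, and the neighbour set $\nbd{\cdot}$ is defined relative to $\hat M_\tau$ rather than $\trueM$, so a careful case analysis is needed to produce the flipping pair $(i,j)$ and certify its membership in $\nbd{\opt_{i_\tau}}$. A secondary concern is verifying that a linear projection $\bz_0^\top(\cdot)$ of the $L$-parameter family in Assumption~\ref{assmpt:single-para-exp} reduces to a single-parameter exponential family in the exact form required by Theorem~\ref{thm:Kauffman martinagle stopping}; this is immediate for Gaussians and Bernoullis but in the general case one may instead invoke a multidimensional mixture-martingale bound directly on the $L$-parameter family with the projected KL as the test statistic.
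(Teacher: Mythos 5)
Your proposal is correct and follows essentially the same route as the paper: reduce the error event to the statement that the true instance lies in the alternative set of the recommendation, bound the GLRT statistic by its value at $\tilde M=\trueM$, and apply the Kaufmann--Koolen mixture-martingale deviation inequality to the preference-projected one-dimensional statistics. In fact you spell out the key structural step (that $\cP_\tau\neq\cP^\ast$ forces $\trueM$ into the closure of some $\Lambda_{i_\tau j}(\hat M_\tau)$) more explicitly than the paper, whose proof simply defines the projected stochastic process and invokes Theorem~\ref{thm:Kauffman martinagle stopping}; your caveat about whether a linear projection of the $L$-parameter family is a genuine one-parameter exponential family in the form required by that theorem is also a legitimate point that the paper asserts without justification. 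The only component of the paper's argument you omit is the finiteness of $\tau_\delta$, which the paper establishes separately via the tracking and Frank--Wolfe convergence results.
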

\begin{proof}
    We begin this proof in two parts. First, we show that the stopping time $\stopping \in \mathbb{N}$ is finite i.e. $\stopping < \infty$ and then apply concentration on the carefully chosen stochastic process by mixture martingale technique~\citep{kaufmann2021mixture} to achieve $(1-\delta)$-correctness.\\
    
    \noindent \textbf{Finiteness of $\stopping$.} We claim that $\stopping < \infty$, if the parameters in the model converges in finite time. Specifically, for \framework~, we say it stops when the good events defined in \ref{app:asymp_samp_comp} holds with  certainty, and additionally the allocation $\bomega_t$ has converged to the optimal allocation $\bomega^{\star}(\trueM)$. While finiteness of the first event follows directly from tracking results in Appendix \ref{app:tracking} (For $t>4K$, each arm has been played enough number of times for $\hat{M}_t$ to $\trueM$; finiteness of the second event is due to convergence guaranties of the Frank-Wolfe iterates proven in Appendix \ref{app:conv_FW}. \\

    \noindent \textbf{Stopping threshold.} We define the following stochastic process $X_k (t) \defn \sumk \max\{ 0,\min_{\opt \in \{ \opt_i \}_{i=1}^p} \min_{\bpi_j \in \nbd{\opt}}\min_{\bz\in\dualcone}  N_{k,t}\KL{\bz^{\top}\trueM_k}{\bz^{\top}\tildeM_k} - 3\ln(1 + \ln N_{k,t})\}$. Now as $M_K$ belongs to an exponential family of distributions, $\bz_{\min}^{\top} M_k$ also belongs to the exponential distribution with linearly projected scalar mean. Thus, we can directly apply the mixture of martingale technique i.e. Theorem 7 of~\citep{kaufmann2021mixture} (Refer Theorem \ref{thm:Kauffman martinagle stopping}) with our definition of $X_k (t)$ to conclude the proof.    
     
\end{proof}
\subsection{Almost Sure Upper Bound on Sample Complexity}\label{app:almost_sure_samp_comp}

\begin{theorem}
    For any $M \in \cM$, and $\delta\in(0,1)$, stopping time of the algorithm $\framework$ satisfies
    \begin{align*}
        \limsup_{\delta\rightarrow 0}\frac{\tau_{\delta}}{\log(\frac{1}{\delta})} \leq \cT_{\cM,\cC} \quad\text{and}\quad \tau_{\delta} < \infty\,,
    \end{align*}
    almost surely.
\end{theorem}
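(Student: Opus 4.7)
The plan is to argue that, almost surely, the empirical plug-in of the inverse characteristic time $F(\bomega_t\mid \Mhat_t)$ converges to $\mathcal{T}_{M,\cCbar}^{-1}$, so that the stopping statistic on the left-hand side of \eqref{eqn:stopping_rule} grows approximately linearly at rate $\mathcal{T}_{M,\cCbar}^{-1}$, while the threshold $c(t,\delta)$ is of order $\log(1/\delta) + o(\log t)$. Solving the resulting inequality in $t$ will yield the claimed $\limsup$ bound and, since the right-hand side is finite for every $\delta \in (0,1)$, also the almost sure finiteness of $\tau_\delta$.

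More precisely, I would introduce two good events: a concentration event $\mathcal{E}_1$ on which $\Mhat_t \to M$ thanks to forced exploration (Line $5$ of Algorithm~\ref{alg:frappe}) together with C-tracking (Line $13$) ensuring $N_{k,t}/t \to \bomega^{\star}_k(M)$, and a Frank--Wolfe event $\mathcal{E}_2$ on which $F(\bomega_t \mid \Mhat_t) \to F(\bomega^{\star}(M)\mid M) = \mathcal{T}_{M,\cCbar}^{-1}$ via Theorem~\ref{thm:conv_FW}. Both events hold almost surely: the first from standard exponential-family concentration combined with Lemma~\ref{lemm:tracking_forced}, the second from the convergence guarantees in Appendix~\ref{app:conv_FW}, which require the bounded gradient and curvature of Lemma~\ref{lemma:bounded_grad_curve} and the continuity of $H_{M}(\bomega,r)$ in Corollary~\ref{corr:subdiff_omega_cont}.

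Next, on $\mathcal{E}_1 \cap \mathcal{E}_2$, I would show uniform convergence of the stopping statistic normalized by $t$: using the continuity results in Appendix~\ref{app:Continuity} (in particular Proposition~\ref{prop:continuity_alt} and Corollary~\ref{corr:F_concentr}), the minimum over $\opt_{i_t}\in \Pi^{\mathcal{P}_t}$, over $\bpi_j \in \nbd{\opt_{i_t}}$, over $\tildeM\in \Lambda_{ij}(\Mhat_t)$ and over $\bz\in\dualcone$ is continuous in $(\Mhat_t,\bomega_t)$ in a neighborhood of $(M,\bomega^\star(M))$. Combined with $N_{k,t} = t\bomega_t^{(k)} + o(t)$ from C-tracking, this yields
\begin{equation*}
\tfrac{1}{t}\min_{\opt_{i_t},\bpi_j,\tildeM,\bz} \sum_{k=1}^K N_{k,t}\kl{\bz^{\top}\Mhat_{k,t}}{\bz^{\top}\tildeM_k} \xrightarrow[t\to\infty]{} F(\bomega^{\star}(M)\mid M) = \mathcal{T}_{M,\cCbar}^{-1}.
\end{equation*}
Since $c(t,\delta) = \log(1/\delta) + o(\log t)$, for any $\varepsilon>0$ the stopping rule is triggered once $t(\mathcal{T}_{M,\cCbar}^{-1}-\varepsilon) \ge \log(1/\delta)+o(\log t)$, which gives $\tau_\delta \le \mathcal{T}_{M,\cCbar}\log(1/\delta)/(1-\varepsilon\mathcal{T}_{M,\cCbar}) + o(\log(1/\delta))$ almost surely. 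Letting $\delta\to 0$ and then $\varepsilon\to 0$ yields the desired $\limsup$ bound, and finiteness of $\tau_\delta$ follows from the same inequality applied at fixed $\delta$.

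The main obstacle is the discontinuity of the argmin over $\Pi^{\mathcal{P}_t}$: when the estimated Pareto set $\mathcal{P}_t$ differs from $\mathcal{P}^\star$, the outer $\min_{\opt_{i_t}\in\Pi^{\mathcal{P}_t}}$ is taken over a different index set than in the population quantity $F(\bomega^\star(M)\mid M)$. The remedy is to observe that on the concentration event $\mathcal{E}_1$, once $\Mhat_t$ is sufficiently close to $M$, the Kung algorithm returns $\mathcal{P}_t = \mathcal{P}^\star$ and $\nbd{\opt_{i_t}}$ stabilizes, so the two minima agree for all large $t$; a uniform continuity argument on the (now finite and fixed) collection of minimization problems $\{f_{ij}(\cdot\mid\cdot)\}_{i,j}$, using compactness of $\simplex{}_{\gamma}$ and of $\dualcone$ together with Proposition~\ref{prop:polar}'s polar-cone reparametrization, closes this gap.
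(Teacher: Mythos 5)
Your proposal is correct and follows essentially the same route as the paper's proof: establish almost-sure convergence of the plug-in statistic $F(\bomega_t\mid\Mhat_t)$ to $\cT_{M,\cCbar}^{-1}$ via forced exploration, Frank--Wolfe convergence and continuity, then invert the inequality $tF(\bomega_t\mid\Mhat_t)\ge c(t,\delta)$ using the bound $c(t,\delta)\le\log(c_2(\cM)t/\delta)$ and Lemma~\ref{lemm:almost_sure_useful}. If anything, your treatment is more careful than the paper's, which bundles everything into a single sure event and does not explicitly address the stabilization of $\cP_t$ or the relation $N_{k,t}=t\bomega_{t,k}+o(t)$ needed to pass from the stopping statistic to $tF$.
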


\begin{proof}
   This proof structure closely follows Appendix I of~\citep{wang2021fast} with adaptations necessary due to vector valued rewards ordered via given preference cone $\cC$.

\noindent We start the proof defining the event 
\begin{align*}
    \Xi \defn \left\{ F(\bomega_t\mid M) \rightarrow F(\bomega^{\star}(M)) \text{  as  } t \rightarrow \infty \right\}
\end{align*}

\noindent We claim that $\Xi$ is a sure event following Theorem \ref{thm:conv_FW} and periodic forced exploration used in $\framework$ (So every arm is pulled infinite times, thus we leverage theory of large number). Since we have proved continuity of $F(\bomega\mid \Hat{M}_t)$ with respect to $M$, we can further claim $F(\bomega_t\mid \Hat{M}_t) \rightarrow F(\bomega^{\star}(M)\mid M)$ as $t\rightarrow \infty$. We again assume existence of a $t_0 \in \mathbb{N}$ such that for $t>t_0$, $F(\bomega_t\mid \Hat{M}_t) \geq (1-\epsilon)F(\bomega^{\star}(M)\mid M)$ for $\epsilon \in (0,1)$. We also assume existence of constants $c_1 (\cM)$,$c_2 (\cM) >0$ such that if $\forall t\geq c_1 (\cM)$, $\beta(t,\delta) \leq \log\left(\frac{c_2 (\cM) t}{\delta}\right) $. Therefore the stopping time 

\begin{align*}
    \tau_{\delta} =& \inf \{ t\in \mathbb{N}\cup\{ \infty\}: tF(\bomega_t\mid\Hat{M}_t) \geq \beta(t,\delta)\}\\
    \leq&\inf \{ t>t_0: t(1-\epsilon)F(\bomega^{\star}(M)\mid M) \geq \beta(t,\delta)\}\\
    \leq& \inf\left\{ t.\max\{t_0,c_1(\cM) \}: t \geq \frac{\log(c_2(\cM)t)}{(1-\epsilon)\delta} \cT_{\cM,\cC} \right\}\\
    \implies \tau_{\delta}&  \leq c_1(\cM) + t_0 + \frac{\cT_{\cM,\cC}}{(1-\epsilon)\delta}\left[ \log\left( \frac{ec_2(\cM)\cT_{\cM,\cC}}{(1-\epsilon)\delta}\right) + \log\log \left( \frac{c_2(\cM)\cT_{\cM,\cC}}{(1-\epsilon)\delta} \right) \right]
\end{align*}
where the last inequality holds due to Lemma \ref{lemm:almost_sure_useful}.

\noindent This result ensures asymptotic optimality of $\framework$, i.e, for all $\delta\in(0,1)$, $\limsup_{\delta\rightarrow 0}\frac{\tau_{\delta}}{\log(\frac{1}{\delta})} \leq \cT_{\cM,\cC}$ and also $\tau_{\delta}< \infty $ almost surely. 
\end{proof}

\subsection{Expected Upper Bound on Sample Complexity}\label{app:asymp_samp_comp}
\begin{replemma}{lem:sample_complexity}[Sample Complexity Upper Bound]
    For any $M \in \cM$, $\epsilon,\tilde{\epsilon}\in(0,1)$ and $\delta\in(0,1)$, and preference cone $\cCbar$, expected stopping time satisfies $~~\limsup_{\delta\rightarrow 0} \frac{\mE[\tau_{\delta}]}{\log(\frac{1}{\delta})} \leq (1+\tilde{\epsilon})\left(\mathcal{T}_{\trueM,\cCbar}^{-1}-6\epsilon \right)^{-1}$.
\end{replemma}

\begin{proof}
\textbf{Definition of Good Event.} We define our good events as $G_1(T) \defn \bigcap_{t=h(t)}^T G_1(t)$ and $G_2(T) \defn \bigcap_{t=h(t)}^T G_2(t)$, where

\begin{align}\label{good_event_1}
    G_1(t) \defn \left\{ \max_{\bx\in\simplex}\min_{\bh\in H_{F(.|\Mhat_t)(\policy_{t-1}, r_t)}}\langle \bx - \policy_{t-1} , \bh  \rangle - \epsilon > \max_{\bx\in\simplex}\min_{\bh\in H_{F(.|M)(\policy_{t-1}, r_t)}}\langle \bx - \policy_{t-1} , \bh  \rangle \right\}
\end{align}

\begin{align}\label{good_event_2}
    G_2(t) \defn \left \{  \Mhat_t \in S_i \wedge |F(\policy|\Mhat_t)-F(\policy|M)| < \epsilon,~ \forall \policy \in \simplex{}_{\gamma} \right\}
\end{align}

We start by declaring existence of some constants. Let $\exists T_{\epsilon,D} \in \mathbb{N}$ such that if $t\geq T_{\epsilon,D}$ then $\sum_{s=1}^t r_s < \epsilon t $ and $tr_t > D$ for any $\epsilon \in (0,1)$.\\ 

Following the concentration results in Appendix \ref{app:concentration}, we can show that $G_1(t)\cap G_2(t)$ holds true, and thus, by Theorem \ref{thm:conv_FW}, for $t\geq \bar{h}(T):$
$$F(\boldsymbol{\omega}^{\star}(M)|M) -F(\boldsymbol{\omega_t}|M) < 5\epsilon.$$  

Here, $\bar{h}(T) = \min\lbrace t \in \mathbb{N} : t\geq T^{2/11}\underline{h}(T), \sqrt{\frac{t}{K}} \in \mathbb{N}\rbrace$, $\underline{h}(T) = \min\lbrace t \in \mathbb{N} : t\geq T^{8/11} + 2, \sqrt{\frac{t}{K}} \in \mathbb{N}\rbrace$, and $T\geq \max \lbrace \left(\frac{35D}{\epsilon}\right)^{11},T_{\epsilon,D}^{\frac{11}{8}}, (4K+1)^{\frac{11}{8}}\rbrace$.\\ 

\noindent Now, given $G_1(t)\cap G_2(t)$ holds true, we have 
$\min(\tau,T) \leq \bar{h}(T) + \sum_{t=\bar{h}(T)}^T \mathbf{1}\lbrace \tau > T\rbrace,$ 
where $\tau$ is the stopping time of \framework. Thus, 
$$\min(\tau,T) \leq \bar{h}(T) + \sum_{t=\bar{h}(T)}^T \mathbf{1}\lbrace tF(\boldsymbol{\omega_t}\mid \hat{M}_t)< c(t,\delta)\rbrace.$$ 

Now, for $t\geq \bar{h}(T)$, 
$F(\boldsymbol{\omega_t}\mid \hat{M}_t) < F(\boldsymbol{\omega_t}\mid M) - \epsilon < F(\boldsymbol{\omega}^\star(M)|M) - 6\epsilon.$

Therefore, we have, 
$$\min({\tau,T}) \leq \bar{h}(T) + \sum_{t=\bar{h}(T)}^T \mathbf{1}\lbrace t(F(\boldsymbol{\omega^\star}(M)|M) - 6\epsilon)< c(t,\delta)\rbrace \leq \bar{h}(T)+ \frac{c(T,\delta)}{F(\boldsymbol{\omega_t}^{\star}(M)|M) - 6\epsilon}. $$

\noindent Finally, we define a constant 
$$T_0 (\delta) \triangleq \inf\lbrace T\in \mathbb{N} : \bar{h}(T)+ \frac{c(T,\delta)}{F(\boldsymbol{\omega_t}^{\star}(M)|M) - 6\epsilon} \leq T\rbrace.$$

Now, we introduce a small constant $\tilde{\epsilon}\in (0,1)$, such that  $T-\bar h(T) \geq \frac{T}{1+\tilde{\epsilon}}$, when $T \geq \left( \frac{2}{\tilde{\epsilon}}\right)^{11}$ This choice of $\tilde{\epsilon}$ is reflected in non-asymptotic sample complexity upper bound (Lemma \ref{thm:non_asymp_sample_comp} in Appendix \ref{app:asymp_samp_comp}).

Now, following the algebra in \citep{wang2021fast}, we get
\begin{align}
    \mE[\tau_{\delta}] \leq \left(\frac{35D}{\epsilon}\right)^{11} + T_{\epsilon,D}^{\frac{11}{8}}+ (4K+1)^{\frac{11}{8}} + T_{0}(\delta) + \sum_{T=N+1}^{\infty}\mathbb{P}\left( (G_1(T) \cap G_2(T))^c \right)\label{eq:avg_sample_comp}
\end{align}
where $N = \max \left\{ \left(\frac{35D}{\epsilon}\right)^{11}+ T_{\epsilon,D}^{11},T_{\epsilon,D}^{\frac{11}{8}}, (4K+1)^{\frac{11}{8}}\right\}$ and $T_{0}(\delta)$ satisfies the following inequality 
\begin{align*}
    T_{0}(\delta) \leq \max \left\{  \left( \frac{2}{\tilde{\epsilon}}\right)^{11}, c_1(\cM)\right\}&\\
    + \frac{1+\tilde{\epsilon}}{F(\bomega^{\star}(M)\mid M)-6\epsilon} &\bigg[\log\left(\frac{1}{\delta}\right) + \log\log\left(\frac{1}{\delta}\right)  \\
    &+ \log\left( \frac{(1+\tilde{\epsilon})c_2(\cM)e}{(F(\bomega^{\star}(M)\mid M)-6\epsilon)}\right)\\
    &+ \log \log\left( \frac{(1+\tilde{\epsilon})c_2(\cM)}{(F(\bomega^{\star}(M)\mid M)-6\epsilon)}\right)\bigg]
\end{align*}

\noindent Thus, the asymptotic sample complexity is then given by, 
\begin{align*}
   \limsup_{\delta\rightarrow 0} \frac{\mE[\tau_{\delta}]}{\log(\frac{1}{\delta})} \leq \frac{1+\tilde{\epsilon}}{F(\bomega^{\star}(M)\mid M)-6\epsilon} 
\end{align*}

Optimality follows if $\epsilon$ and $\tilde{\epsilon}$ are arbitrarily small, that is $\limsup_{\delta\rightarrow 0} \frac{\mE[\tau_{\delta}]}{\log(\frac{1}{\delta})} \leq \frac{1}{F(\bomega^{\star}(M)\mid M)} = \cT_{\cM,\cC}$. Hence, proved.
\end{proof}

\subsection{Non-Asymptotic Sample Complexity Upper Bound}\label{app:Non_asymp_comp}

\begin{lemma}\label{thm:non_asymp_sample_comp}
For any $M \in \cM$, $\epsilon,\tilde{\epsilon}\in(0,1)$ and $\delta\in(0,1)$, and given preference cone $\cC$ stopping time of the algorithm $\framework$ satisfies
\begin{align*}
    \mE[\tau_{\delta}] &\leq  34e^{K} \sumk \frac{1}{\KL{\bz_{\max}^{\top}M_k - \frac{\epsilon}{2D}}{\bz_{\max}^{\top}M_k}^{19/4}} +  \frac{1}{\KL{\bz_{\max}^{\top}M_k + \frac{\epsilon}{2D}}{\bz_{\max}^{\top}M_k}^{19/4}} 
    \\&+ \left(\frac{35D}{\epsilon}\right)^{11}+ (4K+1)^{\frac{11}{8}} +\max \left\{  \left( \frac{2}{\tilde{\epsilon}}\right)^{11}, c_1(\cM)\right\}\\
    &+ \frac{1+\tilde{\epsilon}}{F(\bomega^{\star}(M)\mid M)-6\epsilon} \left[ \log\left( \frac{(1+\tilde{\epsilon})c_2(\cM)e}{\delta(F(\bomega^{\star}(M)\mid M)-6\epsilon)}\right) + \log \log\left( \frac{(1+\tilde{\epsilon})c_2(\cM)}{\delta(F(\bomega^{\star}(M)\mid M)-6\epsilon)}\right)\right]   
\end{align*}
\end{lemma}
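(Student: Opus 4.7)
The plan is to combine three ingredients that already appear, in sketch form, in the proof of Lemma \ref{lem:sample_complexity}: (i) the deterministic decomposition \eqref{eq:avg_sample_comp} that writes $\mE[\tau_\delta]$ as a fixed startup cost $N$, a $\delta$-dependent optimization horizon $T_0(\delta)$, and a tail sum $\sum_{T>N}\mathbb{P}((G_1(T)\cap G_2(T))^c)$; (ii) an explicit evaluation of the bound on $T_0(\delta)$ stated in that proof; and (iii) a concentration estimate that controls the tail of the complement of the good events in terms of the per-arm KL divergences. Steps (i) and (ii) are essentially bookkeeping, so the only genuine work is step (iii).

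First I would isolate the startup constant $N=\max\{(35D/\epsilon)^{11},T_{\epsilon,D}^{11/8},(4K+1)^{11/8}\}$ and the $\tilde\epsilon$-margin term $(2/\tilde\epsilon)^{11}$ and $c_1(\cM)$ coming from the definition of $T_0(\delta)$. Then I would plug the explicit bound on $T_0(\delta)$ that is already written out at the end of the asymptotic proof:
\begin{align*}
T_0(\delta) \leq \max\Bigl\{(2/\tilde\epsilon)^{11},c_1(\cM)\Bigr\} + \frac{1+\tilde\epsilon}{F(\bomega^\star(M)\mid M)-6\epsilon}\Bigl[\log\bigl(1/\delta\bigr) + \log\log(1/\delta) + \text{const}\Bigr].
\end{align*}
Merging the $\log(1/\delta)$ and $\log\log(1/\delta)$ terms into the single $\log$ expression $\log\bigl(\tfrac{(1+\tilde\epsilon)c_2(\cM)e}{\delta(F(\bomega^\star(M)\mid M)-6\epsilon)}\bigr)+\log\log(\cdot)$ that appears in the lemma statement is a standard algebraic simplification using $\log(ab)=\log a+\log b$.

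The bulk of the proof is bounding $\sum_{T>N}\mathbb{P}((G_1(T)\cap G_2(T))^c)$. By Corollary \ref{corr:F_concentr} and Theorem \ref{thm:subdiff_continuity}, both good events fail only when $\|\hat M_t-M\|_{\infty,\infty}$ exceeds some threshold $\xi_\epsilon$, and by Lemma \ref{lemma:bounded_grad_curve} (boundedness of gradients by $D$) the relevant threshold can be chosen as $\xi_\epsilon=\epsilon/(2D)$. Since the forced exploration step (Lemma \ref{lemm:tracking_forced}) guarantees each arm is pulled at least $\sqrt{t/K}-K$ times by time $t$, I would apply a Chernoff-type deviation inequality for the $L$-parameter exponential family (Assumption \ref{assmpt:single-para-exp}) to the scalar projection $\bz_{\max}^\top\hat M_{k,t}$ and union-bound over arms and over the two directions of deviation, yielding tail probabilities of the form $\exp(-N_{k,t}\,\kl{\bz_{\max}^\top M_k\pm\epsilon/(2D)}{\bz_{\max}^\top M_k})$. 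Summing these over $T>N$ and translating from $T$ to $N_{k,T}\sim\sqrt{T/K}$ produces an integral of shape $\int T^{-(11/8)\alpha}\mathrm d T$ for $\alpha=\text{KL}$; the exponent $11/8$ from $N\geq T^{8/11}$ and the power-series evaluation is precisely what yields the $19/4$ exponent and the $34e^K$ prefactor in the stated bound.

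The main obstacle is step (iii): juggling the exponent arithmetic so that the tail sum is convergent and matches the stated form. Specifically, one must verify that after converting $\exp(-\sqrt{T/K}\cdot \text{KL})$ tails into a sum over $T$, the resulting series can be absorbed into the finite constant $34e^K\sum_k (\text{KL})^{-19/4}$ rather than a $\log(1/\delta)$ term, and that the margin $\epsilon/(2D)$ used in the concentration matches the one propagated from Corollary \ref{corr:F_concentr} through the Frank-Wolfe optimality gap bound. Once these exponents and constants are pinned down, combining with the $T_0(\delta)$ expression above yields the claimed non-asymptotic bound term by term.
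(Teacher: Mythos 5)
Your proposal is correct and follows essentially the same route as the paper: the paper's proof of this lemma is a one-line combination of the decomposition in Equation \eqref{eq:avg_sample_comp} (with the explicit $T_0(\delta)$ bound from the expected sample complexity proof) and the tail bound of Lemma \ref{lemm:concentration}, which is exactly your steps (i)--(iii). The only difference is that you re-derive the concentration estimate inline (Chernoff bound on $\bz_{\max}^\top\hat M_{k,t}$, forced-exploration lower bound $N_{k,t}\gtrsim\sqrt{t/K}$, and the Gamma-integral evaluation giving the $19/4$ exponent with $\xi_\epsilon=\epsilon/(2D)$ from the stricter continuity results), whereas the paper simply cites its Lemma \ref{lemm:concentration}, which contains precisely that argument.
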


\begin{proof}
We combine Equation \eqref{eq:avg_sample_comp} and Lemma \ref{lemm:concentration} to write the expression of expected sample complexity and conclude the proof.  

\end{proof}

\newpage
\section{Concentration Result}\label{app:concentration}

\begin{lemma}\label{lemm:concentration}
    Under the good events defined in \eqref{good_event_1} and \eqref{good_event_2},
    \begin{align*}
        \mathbb{P}\left( (G_1(T) \cap G_2(T))^c \right) \leq \infty
    \end{align*}
\end{lemma}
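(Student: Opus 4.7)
The plan is to reduce the failure of both good events to a mean-matrix concentration event and then apply a union bound with an exponential-family Chernoff inequality, leveraging the forced-exploration schedule of \framework{} to guarantee enough samples per arm.

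First I would apply a union bound in time,
\begin{align*}
\mathbb{P}\bigl((G_1(T)\cap G_2(T))^c\bigr) \leq \sum_{t=\bar h(T)}^{T}\bigl(\mathbb{P}(G_1(t)^c)+\mathbb{P}(G_2(t)^c)\bigr),
\end{align*}
and then use the continuity results established earlier: Theorem \ref{thm:subdiff_continuity} gives a constant $\xi_{1,\epsilon}>0$ such that $\|\Hat{M}_t-M\|_{\infty,\infty}<\xi_{1,\epsilon}$ implies $G_1(t)$ holds, and Corollary \ref{corr:F_concentr} gives $\xi_{2,\epsilon}>0$ with the analogous implication for $G_2(t)$. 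Tracing through the Lipschitz arguments (with constant $D$ from Lemma \ref{lemma:bounded_grad_curve}), both thresholds can be taken of the form $\epsilon/(2D)$ after projecting along the worst preference $\bz_{\max}\in\dualcone$. Hence it suffices to bound $\mathbb{P}(\|\Hat{M}_t-M\|_{\infty,\infty}\geq \epsilon/(2D))$.

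Next I would invoke the forced-exploration guarantee (Lemma \ref{lemm:tracking_forced}): for every $t\geq \bar h(T)$ and every $k\in[K]$ we have $N_{k,t}\geq \sqrt{t/K}-1$. Projecting onto $\bz_{\max}$ reduces the $L$-dimensional reward stream on arm $k$ to a scalar exponential-family sequence with mean $\bz_{\max}^{\top}M_k$ (by Assumption \ref{assmpt:single-para-exp}), so a standard Chernoff bound yields
\begin{align*}
\mathbb{P}\!\left(\bigl|\bz_{\max}^{\top}\Hat{M}_{k,t}-\bz_{\max}^{\top}M_k\bigr|\geq \tfrac{\epsilon}{2D}\right)
\leq \exp\!\left(-N_{k,t}\, d_k^{-}\right)+\exp\!\left(-N_{k,t}\, d_k^{+}\right),
\end{align*}
where $d_k^{\pm}=\KL{\bz_{\max}^{\top}M_k\pm \tfrac{\epsilon}{2D}}{\bz_{\max}^{\top}M_k}$. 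Plugging $N_{k,t}\geq \sqrt{t/K}-1$ and summing over $t\geq \bar h(T)\geq T^{10/11}$ and $k\in[K]$, the resulting geometric-like series
\begin{align*}
\sum_{t\geq \bar h(T)} e^{-(\sqrt{t/K}-1)\, d_k^{\pm}}
\end{align*}
can be majorized by $\mathcal{O}(e^{K} (d_k^{\pm})^{-19/4})$ after using $\int_{0}^{\infty}e^{-c\sqrt{s}}ds=2/c^{2}$ together with the polynomial slack $\sqrt{t/K}\cdot (\ldots)^{19/4}\geq 1$ coming from the $11/8$-powers appearing in Lemma~\ref{thm:non_asymp_sample_comp}; this is the origin of the exponent $19/4$ and the prefactor $34 e^{K}$ in the final expected-stopping-time bound.

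The main obstacle will be the bookkeeping in the last step: tracking the constants so that the exponential decay rate is precisely $d_k^{\pm}=\KL{\bz_{\max}^{\top}M_k\pm \epsilon/(2D)}{\bz_{\max}^{\top}M_k}$ (rather than some looser surrogate), and converting the time-summation into the stated exponent $19/4$ using $N_{k,t}\gtrsim \sqrt{t/K}$. Everything else (union bound, continuity-to-mean-concentration reduction, Chernoff bound) is routine once these two pieces are pinned down.
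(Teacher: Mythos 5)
Your proposal follows essentially the same route as the paper: reduce the failure of $G_1(t)$ and $G_2(t)$ to the event $\|\Hat{M}_t-M\|_{\infty,\infty}>\xi_\epsilon$ via Theorem~\ref{thm:subdiff_continuity} and Corollary~\ref{corr:F_concentr}, use the forced-exploration tracking bound $N_{k,t}\gtrsim\sqrt{t/K}$, apply a projected exponential-family Chernoff bound along $\bz_{\max}$, and sum the tails via the Gamma-integral identity (Lemma~\ref{lemm:gamma_func} with $\alpha=8/11$, $\beta=1/2$, whence $\tfrac{1}{\alpha\beta}+\tfrac{1}{\beta}=\tfrac{19}{4}$). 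The only differences are bookkeeping details (the paper uses $\xi_\epsilon=\max(\xi_{1,\epsilon},\xi_{2,\epsilon})$ before specialising to $\epsilon/(2D)$, and the $19/4$ exponent arises from the double sum over $t$ and $T$ rather than the single-integral heuristic you sketch), which you correctly flag as the part needing to be pinned down.
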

\begin{proof}
We first remind the definition of the good events 
\begin{align*}
    G_1(t) \defn \left\{ \max_{\bx\in\simplex}\min_{\bh\in H_{F(.|\Mhat_t)(\policy_{t-1}, r_t)}}\langle \bx - \policy_{t-1} , \bh  \rangle - \epsilon > \max_{\bx\in\simplex}\min_{\bh\in H_{F(.|M)(\policy_{t-1}, r_t)}}\langle \bx - \policy_{t-1} , \bh  \rangle \right\}
\end{align*}

\begin{align*}
    G_2(t) \defn \left \{  \Mhat_t \in S_i \wedge |F(\policy|\Mhat_t)-F(\policy|M)| < \epsilon,~ \forall \policy \in \simplex{}_{\gamma} \right\}
\end{align*}
    We have $G_1(t) \subset \{ \|M - \Mhat_{t-1}\|_{\infty,\infty} \leq \xi_{1,\epsilon} \}$ and also $G_2(t) \subset \{ \|M - \Mhat_{t}\|_{\infty,\infty} \leq \xi_{2,\epsilon} \}$, if we apply Theorem \ref{thm:subdiff_continuity} and Corollary \ref{corr:F_concentr} respectively.

\noindent Then we have
\begin{align*}
    \mathbb{P}\left( (G_1(T) \cap G_2(T))^c \right) \leq \sum_{h(T)}^{T} \mathbb{P}(\|M - \Mhat_{t}\|_{\infty,\infty}> \xi_{\epsilon})
\end{align*}

where $\xi_{\epsilon} = \max(\xi_{1,\epsilon},\xi_{2,\epsilon})$. Now $\|M - \Mhat_{t}\|_{\infty,\infty} = \max_{z\in \dualcone}\max_{k \in [K]} \bz^{\top}(M-\Mhat_t)$. Therefore
\begin{align*}
    \mathbb{P}\bigg( (G_1(T) \cap G_2(T))^c \bigg) &\leq \sum_{h(t)}^{T} \mathbb{P}(\max_{z\in \dualcone }\max_{k \in [K]} |\bz^{\top}(M-\Mhat_t)| > \xi_{\epsilon})\\
    & \leq \sum_{h(t)}^{T} \sum_{k=1}^K \mathbb{P}(\max_{z\in \dualcone } |\bz^{\top}(M_k-\Mhat_{k,t})| > \xi_{\epsilon})
\end{align*}

\noindent From tracking results (Lemma \ref{lemm:tracking_forced}) we have at any time $t>4K$ and $\forall k\in[K]$, $N_{t,k}\geq \sqrt{\frac{t}{K}}-K$, ensuring that each arm, is played enough till time $t\in[T]$ to bring $\Mhat_{k,t}$ close to $M_k$. Now we choose $\bz_{\max} \defn \argmax_{z\in\dualcone}|\bz^{\top}M_k - \Mhat_{k,t}|$. Therefore
\begin{align}
    \mathbb{P}\bigg( (G_1(T) \cap G_2(T))^c \bigg) \leq  \sum_{h(t)}^{T} \sum_{k=1}^K \mathbb{P}\left( |\bz_{\max}^{\top}(M_k-\Mhat_{k,t})| > \xi_{\epsilon} \right) \label{eq:main_concentration}
\end{align}
Then applying Chernoff inequality on the probability on the R.H.S of the inequality \eqref{eq:main_concentration}, we get
\begin{align*}
    \mathbb{P}\left( |\bz^{\top}(M_k-\Mhat_{k,t})| > \xi_{\epsilon} \right) \leq e^{K}\left[ \exp(-\sqrt{t}A_k^{-}) + \exp(-\sqrt{t}A_k^{+})\right]
\end{align*}
where $A_k^{-} = \frac{\KL{\bz_{\max}^{\top}M_k - \xi_{\epsilon}}{\bz_{\max}^{\top}M_k}}{\sqrt{K}}$ and $A_k^{+} = \frac{\KL{\bz_{\max}^{\top}M_k+ \xi_{\epsilon}}{\bz_{\max}^{\top}M_k}}{\sqrt{K}}$. We leverage this upper bound the bad event to get
\begin{align}\label{eq:bad_event_prob}
    \mathbb{P}\bigg( (G_1(T) \cap G_2(T))^c \bigg) &\leq e^{K} \sum_{h(t)}^{T} \sum_{k=1}^K \left[ \exp(-\sqrt{t}A_k^{-}) + \exp(-\sqrt{t}A_k^{+})\right]< \infty 
    \notag\\ \implies \sum_{T=N}^{\infty}\mathbb{P}\bigg( (G_1(T) \cap G_2(T))^c \bigg)&< 34e^{K} \sumk \frac{1}{\KL{\bz_{\max}^{\top}M_k - \xi_{\epsilon}}{\bz_{\max}^{\top}M_k}^{19/4}}\notag\\ &~~~~~~~~+  \frac{1}{\KL{\bz_{\max}^{\top}M_k + \xi_{\epsilon}}{\bz_{\max}^{\top}M_k}^{19/4}} 
\end{align}

\noindent Where the last line is implied by leveraging Lemma \ref{lemm:gamma_func} with $\alpha = \frac{8}{11}$, $\beta= \frac{1}{2}$ and $A = A_k^+ = A_k^-$ respectively.
\end{proof}

\noindent Now using Theorem 8 and 9 of \cite{wang2021fast}, we also get stricter versions of Theorem \ref{thm:subdiff_continuity} and Corollary \ref{corr:F_concentr} as
\begin{corollary}(Stricter version of Corollary \ref{corr:F_concentr})
    For any $M\in\cM$ and $\epsilon < (0,\kappa D)$ where $\kappa \defn \inf_{\bpi_i} \inf_{\bz\in\dualcone} \inf_{\bpi_j \in \nbd{\bpi_i}} \bz^{\top}M(\bpi_i - \bpi_j)$, then if $\|M-\Mhat\|_{\infty,\infty} \leq \frac{\epsilon}{2D}$, then 
    \begin{align*}
        |F(\policy|M) - F(\policy|\Mhat)| < \epsilon, ~~ \forall \policy \in \simplex{}_{\gamma}
    \end{align*}
\end{corollary}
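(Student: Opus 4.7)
The plan is to strengthen the existential bound in Corollary~\ref{corr:F_concentr} into an explicit one by making two ingredients quantitative: preservation of the Pareto structure, and global Lipschitzness of the characteristic-time objective in $M$.

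\textbf{Step 1: Pareto structure preservation ($\Mhat \in S_i$).} First I would unpack $\kappa$. By construction, $\kappa$ is the smallest preference-aligned gap between any Pareto optimal pure policy $\opt_i$ and a neighbour $\bpi_j$, i.e.\ the worst-case value of $\bz^{\top}M(\opt_i-\bpi_j)$ over the witnesses $\bz\in\dualcone$ certifying Pareto optimality (Section~\ref{sec:paretopolset}). Using $\|M-\Mhat\|_{\infty,\infty}\le \epsilon/(2D)$ and H\"older's inequality, each such projection is perturbed by at most $\|\bz\|_1\,\|\opt_i-\bpi_j\|_1\cdot\epsilon/(2D)$, which is strictly smaller than $\kappa$ under the hypothesis $\epsilon<\kappa D$ (after absorbing the normalization $\|\bz\|_2=1$ and $\|\opt_i-\bpi_j\|_1\le 2$). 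Hence no certifying inequality flips sign: every Pareto optimal arm of $M$ remains Pareto optimal under $\Mhat$, and the neighbour sets $\nbd{\opt_i}$ coincide for the two instances, so $\Mhat\in S_i$.

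\textbf{Step 2: Uniform Lipschitz estimate on $F$.} Fix any $(\opt_i,\bpi_j)$. Since the alternative set $\altij$ as defined in Theorem~\ref{thm:lower-bound} is a linear constraint on $\tildeM$ that does not depend on $M$, the two functions $f_{ij}(\bomega\mid M)$ and $f_{ij}(\bomega\mid\Mhat)$ are minima of the same family parameterized only through the first argument of the KL divergences. I would then invoke an envelope/Danskin argument: taking the minimizing $(\bz^\star,\tildeM^\star)$ for the larger value and plugging it into the smaller, the difference is upper bounded by $\sum_k \omega_k\,|\KL{\bz^{\star\top}M_k}{\bz^{\star\top}\tildeM_k^\star}-\KL{\bz^{\star\top}\Mhat_k}{\bz^{\star\top}\tildeM_k^\star}|$. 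Using the exponential-family structure of Assumption~\ref{assmpt:single-para-exp} and the bounded-gradient Lemma~\ref{lemma:bounded_grad_curve}, which controls the magnitude of the KL terms by $D$, together with the mean-value theorem applied to the first argument, each summand is bounded by $2D\,\omega_k\|M_k-\Mhat_k\|_\infty$. Summing in $k$ and using $\sum_k\omega_k=1$ yields
\begin{equation*}
\bigl|f_{ij}(\bomega\mid M)-f_{ij}(\bomega\mid\Mhat)\bigr|\;\le\;2D\,\|M-\Mhat\|_{\infty,\infty}\qquad\forall\bomega\in\simplex{}_\gamma.
\end{equation*}

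\textbf{Step 3: Transfer to $F$ and conclude.} Because Step~1 guarantees that the minimizing index set $\{(\opt_i,\bpi_j):\opt_i\in\ppolset,\bpi_j\in\nbd{\opt_i}\}$ is the same under $M$ and $\Mhat$, the bound on each $f_{ij}$ transfers verbatim to the minimum $F=\min_{i,j}f_{ij}$. Combining with the hypothesis $\|M-\Mhat\|_{\infty,\infty}\le\epsilon/(2D)$ gives $|F(\bomega\mid M)-F(\bomega\mid\Mhat)|\le 2D\cdot\epsilon/(2D)=\epsilon$ uniformly in $\bomega\in\simplex{}_\gamma$.

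The main obstacle is Step~2: identifying that the Lipschitz constant of $f_{ij}$ in $M$ coincides with (twice) the gradient bound in $\bomega$. This relies on the symmetric role of $M$ and $\bomega$ in the summands $\omega_k\KL{\bz^\top M_k}{\bz^\top\tilde M_k}$ and on the non-singular curvature of $\psi$ ensuring that the derivative of KL in its first argument stays controlled by the same $D$ (up to the projection $\|\bz\|_2=1$). A secondary subtlety is that the envelope bound requires the minimizer $(\bz^\star,\tildeM^\star)$ to remain feasible for both instances, which is exactly what Step~1 secures via the preservation of the Pareto structure.
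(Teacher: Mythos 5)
The paper does not actually prove this corollary: it is stated as an immediate consequence of ``Theorem 8 and 9 of \cite{wang2021fast}'' with no argument supplied in the appendix. Your proposal therefore gives a proof where the paper gives a citation, and the architecture you choose --- (i) use the margin $\kappa$ to show that a perturbation of size $\epsilon/(2D)$ cannot flip any certificate $\bz^{\top}M(\opt_i-\bpi_j)>0$, so $\Mhat\in S_i$ and the index set $\{(\opt_i,\bpi_j)\}$ of the outer minimum is unchanged; (ii) prove a Lipschitz-in-$M$ bound on each $f_{ij}$ by an envelope argument with a common feasible minimiser $(\bz^{\star},\tilde{M}^{\star})$; (iii) pass the bound through the finite minimum defining $F$ --- is exactly the structure of the cited result, transported to the preference-cone setting. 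In that sense your route is the intended one, and Steps 1 and 3 are sound modulo constants.

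The genuine soft spot is the constant $2D$ in Step 2, and it matters because the corollary's hypothesis is stated with the radius $\epsilon/(2D)$ for that specific $D$. By the proof of Lemma \ref{lemma:bounded_grad_curve}, $D$ bounds $\norm{\nabla_{\policy}f_{ij}(\policy|M)}_{\infty}=\max_k|\kl{\bz^{\top}M_k}{\bz^{\top}\tilde{M}_k}|$, i.e.\ the \emph{values} of the KL terms. What your envelope bound needs is a bound on the derivative of $\kl{\cdot}{\bz^{\top}\tilde{M}_k}$ in its \emph{first argument}, and these two quantities do not coincide in general: for a Gaussian with variance $\sigma^2=\bz^{\top}\Sigma\bz$ one has $d(x,y)=(x-y)^2/(2\sigma^2)$, whose value over $[-M_{\max},M_{\max}]$ is bounded by $2M_{\max}^2/\sigma^2$ while its first-argument derivative is bounded by $2M_{\max}/\sigma^2$; the ratio is $M_{\max}$, not $2$. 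So the asserted inequality $|f_{ij}(\bomega|M)-f_{ij}(\bomega|\Mhat)|\le 2D\norm{M-\Mhat}_{\infty,\infty}$ requires either a separate lemma deriving the first-argument Lipschitz constant from the curvature bound $\beta$ of Assumption \ref{assmpt:single-para-exp} together with the boundedness of $\cM$, or a redefinition of $D$ to dominate both quantities; as written, Step 3 only delivers $|F(\policy|M)-F(\policy|\Mhat)|\le D'\epsilon/(2D)$ for some instance-dependent $D'$, which proves the corollary only after rescaling the radius. A secondary looseness of the same kind sits in Step 1: with $\norm{\bz}_2=1$ the H\"older bound picks up $\norm{\bz}_1\le\sqrt{L}$, which must also be absorbed into $\kappa$ or $D$. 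Neither issue breaks the structure of your argument, but both need to be closed before the corollary holds with the exact constants it claims.
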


\begin{theorem}\label{thm:subdiff_continuity_strict}(Stricter version of Theorem \ref{thm:subdiff_continuity})
    Let $M\in\cM$ and $\epsilon\in(0,\kappa D)$. For any $r\in(0,1)$, and $\policy\in\simplex{}_{\gamma}$, if $\Mhat \in S_i$ that is $\|M-\Mhat\|_{\infty,\infty} \leq \frac{\epsilon}{2D}$, then we get
    \begin{align*}
        \bigg| \max_{\bx\in \simplex{}}\min_{\bh\in \subdiffhat} \langle  \bx- \policy,\bh\rangle - \max_{\bx\in \simplex{}}\min_{\bh\in \subdiff} \langle  \bx- \policy,\bh\rangle \bigg| < \frac{\epsilon}{2}, \forall (\policy,r) \in \simplex{}_{\gamma} \times (0,1) 
    \end{align*}
    and
    \begin{align*}
        \bigg| \min_{\bh\in \subdiffhat} \langle  \bx- \policy,\bh\rangle - \min_{\bh\in \subdiff} \langle  \bx- \policy,\bh\rangle \bigg| < \frac{\epsilon}{2}, \forall (\policy,r, \bx) \in \simplex{}_{\gamma} \times (0,1) \times \simplex
    \end{align*}
\end{theorem}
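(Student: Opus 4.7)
The plan is to make the abstract upper/lower hemicontinuity argument of Theorem~\ref{thm:subdiff_continuity} quantitative by replacing the existence-of-a-modulus statement with an explicit Lipschitz-type estimate whose constant is exactly $D$. The starting observation is that under Assumption~\ref{assmpt:single-para-exp}, both $f_{ij}(\bomega\mid M)$ and its $\bomega$-gradient depend smoothly on $M$: the closed form of $\nabla_\bomega f_{ij}$ given in Proposition~\ref{prop:continuity_alt}(c), together with strict convexity of the KL divergence along $\bz$ established in Lemma~\ref{lemm:unique_z}, ensures that the minimisers $\infx{\bz}$ and $\infx{\tildeM}$ move continuously (indeed Lipschitz-continuously) with $M$ as long as the optimisation stays strictly interior — which is precisely what the condition $\epsilon<\kappa D$ guarantees.

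First I would show an envelope-type bound of the form $|f_{ij}(\bomega\mid M)-f_{ij}(\bomega\mid \Mhat)|\le D\,\|M-\Mhat\|_{\infty,\infty}$ by invoking Danskin's theorem on the inner $(\bz,\tildeM)$-minimisation and bounding the partial derivative in $M$ at the optimal pair by $\|\nabla_\bomega f_{ij}(\bomega\mid M)\|_\infty\le D$ (Lemma~\ref{lemma:bounded_grad_curve}). Combined with the hypothesis $\|M-\Mhat\|_{\infty,\infty}\le \epsilon/(2D)$, this already yields $|f_{ij}(\bomega\mid M)-f_{ij}(\bomega\mid \Mhat)|\le \epsilon/2$, which is the preliminary estimate needed to control the \emph{active} index set $\{(i,j): f_{ij}(\bomega\mid\cdot)<\min+r\}$ defining the sub-differential in \eqref{eq:subdiffset}. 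The bound $\epsilon<\kappa D$ is then used exactly to ensure $\Mhat\in S_i$ and that the Pareto structure is preserved, so that the active index sets at $M$ and $\Mhat$ coincide (or at worst the $\Mhat$-set is contained in a slightly enlarged $M$-set).

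Next I would show an analogous Lipschitz bound on the gradients themselves: $\|\nabla_\bomega f_{ij}(\bomega\mid M)-\nabla_\bomega f_{ij}(\bomega\mid \Mhat)\|_\infty \le C_1\|M-\Mhat\|_{\infty,\infty}$, using continuous differentiability from Proposition~\ref{prop:continuity_alt}(c) and the non-singular curvature $\nabla^2_\theta\psi\succeq \beta I$ to bound $C_1$ in terms of $D$. Passing to convex hulls preserves the Hausdorff-distance estimate, so $d_H(\subdiff,\subdiffhat)\le C_1\|M-\Mhat\|_{\infty,\infty}$. For the support-function comparison, note that $\|\bx-\policy\|_1\le 2$ on $\simplex$, hence
\[
\bigl|\min_{\bh\in\subdiffhat}\langle \bx-\policy,\bh\rangle - \min_{\bh\in\subdiff}\langle \bx-\policy,\bh\rangle\bigr|\;\le\;\|\bx-\policy\|_1\,d_H(\subdiff,\subdiffhat)\;\le\; \tfrac{\epsilon}{2},
\]
provided the constants are tracked so that $2C_1/(2D) \le 1$. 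Taking $\max$ over $\bx\in\simplex$ yields the first display in the theorem, and leaving $\bx$ free yields the second.

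The principal obstacle is tightening the gradient-Lipschitz constant $C_1$ so that the denominator in the hypothesis $\|M-\Mhat\|_{\infty,\infty}\le \epsilon/(2D)$ really is the $D$ coming from Lemma~\ref{lemma:bounded_grad_curve} rather than a larger surrogate. This boils down to verifying that the envelope derivative $\partial_M f_{ij}$, evaluated at the minimiser, can be controlled solely by $D$, which in turn rests on the uniqueness and interiority of $\infx{\bz},\infx{\tildeM}$ guaranteed by the strict convexity argument of Lemma~\ref{lemm:unique_z} together with the standing requirement $\epsilon<\kappa D$ preventing the minimisers from drifting to the boundary of $\dualcone$.
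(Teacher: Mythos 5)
The paper does not prove this theorem at all: it is stated as an immediate import of Theorems 8 and 9 of \cite{wang2021fast}, with their gradient bound renamed $D$ and their alternative-set decomposition replaced by the $(i,j)$-indexed one, so any self-contained argument you give is necessarily a different route from the paper's one-line citation. Your architecture (value perturbation bound, gradient perturbation bound, support-function comparison) is the right skeleton, but two steps do not go through as written. The support-function step $\bigl|\min_{\bh\in\subdiffhat}\langle \bx-\policy,\bh\rangle - \min_{\bh\in\subdiff}\langle \bx-\policy,\bh\rangle\bigr|\le \|\bx-\policy\|_1\, d_{\mathrm{H}}(\subdiff,\subdiffhat)$ needs a \emph{two-sided} Hausdorff bound, and that is exactly what breaks when the active index sets $\{(i,j): f_{ij}(\policy\mid\cdot)<\min f + r\}$ at $M$ and at $\Mhat$ differ: your fallback that ``the $\Mhat$-set is contained in a slightly enlarged $M$-set'' gives only one inclusion, and a gradient whose index is active for $M$ but not for $\Mhat$ can be far from every element of $\subdiffhat$, so the discrepancy of the minima is then governed by the spread of the gradients across indices, not by $\|M-\Mhat\|_{\infty,\infty}$. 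The correct repair compares $\subdiff$ with the radius-enlarged set $H_{\Mhat}(\policy,r+\epsilon)$ and exploits monotonicity of the inner minimum in $r$, which is a different (and more delicate) argument than a fixed-$r$ Hausdorff estimate.

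The second gap is the constant itself, which you flag as ``the principal obstacle'' but do not close. $D$ bounds $\norm{\nabla_{\policy} f_{ij}(\policy\mid M)}_{\infty}=\max_k \kl{\bz^{\top}M_k}{\bz^{\top}\altM_k}$ at the minimiser, whereas the Danskin/envelope derivative in $M$ is $\partial_{M_{k,l}}f_{ij}=\bomega_k\,\bz_l\,\partial_1 d(\bz^{\top}M_k,\bz^{\top}\altM_k)$ --- a derivative of the divergence, not the divergence itself. These are not interchangeable without an additional argument (for Gaussians one is quadratic and the other linear in the gap), so the hypothesis $\|M-\Mhat\|_{\infty,\infty}\le \epsilon/(2D)$ with the same $D$ as in Lemma~\ref{lemma:bounded_grad_curve} does not follow from your envelope computation. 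Finally, a small misattribution: the condition $\epsilon<\kappa D$ with $\kappa\defn\inf\bz^{\top}M(\bpi_i-\bpi_j)$ serves to guarantee $\Mhat\in S_i$, i.e.\ that the Pareto/neighbour structure is unchanged under the perturbation (as you correctly say mid-proof); it plays no role in keeping $\bz_{\mathrm{inf}}$ away from the boundary of $\dualcone$, contrary to your closing sentence.
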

\noindent Therefore putting $\xi_{\epsilon} = \frac{\kappa}{2D}$ in Equation \eqref{eq:bad_event_prob} to get the final non-asymptotic bound on the bad event probabilities as 
\begin{align*}
    \mathbb{P}\bigg( (G_1(T) \cap G_2(T))^c \bigg) \leq 34e^{K} \sumk \frac{1}{\KL{\bz_{\max}^{\top}M_k - \frac{\epsilon}{2D}}{\bz_{\max}^{\top}M_k}^{19/4}}\\ +  \frac{1}{\KL{\bz_{\max}^{\top}M_k -\frac{\epsilon}{2D}}{\bz_{\max}^{\top}M_k}{19/4}}
\end{align*}

\newpage
\section{Algorithmic Complexity}
\subsection{Time Complexity}\label{app:comp_complexity}
\textbf{1. Pareto Set Estimation.} Our proposed algorithm $\framework$ first computes the set of Pareto arms based on current estimates of the means. We refer to \cite{kone2024paretosetidentificationposterior} to state that worst-case complexity for estimating Pareto set is given by $\mathcal{O}\left( K \log(K)^{\max\{1,L-2\}}\right)$.

\textbf{2.} Now for each candidate pure Pareto policies and its neighbour,

\noindent\textbf{(i) Frank-Wolfe Step.} $\framework$ again solves a linear optimisation as a part of Frank-Wolfe update step in the sub-differential set. So, it also suffers complexity of  $\mathcal{O}\left(\frac{1}{\tol}\right)$ due to $K$-independent upper bound on curvature constant.\\

\noindent \textbf{(ii) Stopping Criteria.} To calculate the GLRT metric $\framework$ solves another linear optimisation with complexity $\mathcal{O}\left(\frac{L}{\tol}\right)$, since $\bz$ and $\by$ respectively come from a convex preference cone and its polar form. 

Therefore Step 2 suffers from a total time complexity of $\mathcal{O}\left( \frac{K\min\{K,L\}}{\tol}\right)$ since in the worst case, there can be $K$ Pareto candidates and there can be $\min\{K,L\}$ neighbours.\\

\noindent\textbf{3. Parameter Update.} Updating $\Hat{M}_{t+1}$ enjoys complexity of order $\mathcal{O}(L)$.\\

\noindent \textbf{Total Time Complexity.} Then combining Step 1, 2 and 3 we get the total worst-case time complexity per iteration of $\framework$ as $\mathcal{O}\left(K \log(K)^{\max\{1,L-2\}}\right)+\mathcal{O}\left( \frac{KL\min\{K,L\}}{\tol}\right)+\mathcal{O}(L) = \mathcal{O}\left(\max\left\{K \log(K)^{\max\{1,L-2\}},{K\min\{K,L\}}\right\}\right)$. \\

In most real-life instances, the number of available options are greater than objectives, i.e $K>L$. In that case \framework~ enjoys time complexity of $\cO\left( KL^2 \right)$, which is a significant improvement over existing literature. Notably, the complexity shows that the complexity for calculating the Pareto front dominates the overall complexity when $L\geq5$ and $K\geq19$. 

\subsection{Space Complexity} We compute the space complexity of $\framework$ in three steps,

\noindent \textbf{1. Mean estimates.} We need to maintain the estimate of $M$ of dimension $K\times L$ per step, so space complexity is of $\mathcal{O}(KL)$. \\

\noindent \textbf{2. Estimated Pareto Set.} This is of $\mathcal{O}(K)$ in the worst case. \\

\noindent \textbf{3. Neighbour set.} The space for maintaining the neighbour set is of $\mathcal{O}(KL)$.\\

\noindent \textbf{Total Sample Complexity.} Therefore the total space complexity of $\framework$ is given by $\mathcal{O}(KL).$\newpage
\section{Tracking Results}\label{app:tracking}

We leverage similar tracking results as \cite{wang2021fast}. We reiterate the Tracking Lemmas again with notations used in this paper. 

\begin{lemma}\label{lemm:C-tracking}
    Let $\{ x_s\}_{s\in\mathbb{N}} \in \simplex$ be a sequence of vectors such that $x_k$ is $e_k$ for $k=[1,K]$. Then 
    \begin{align*}
        &t=K, N_{k,t} = 1,\\
        &\forall t\geq K+1, A_t \in \argmax_{k'}\frac{\sum_{s=1}^t x_{k',s}}{N_{k',t-1}} \forall k\in[1,K], \text{  where  } N_{k,t} = \sum_{s=1}^t \indicator\{ A_s = k\}
    \end{align*}
    where the $\argmax$ breaks ties arbitrarily. Then for all $t\geq K$, and all $k\in[1,K]$
    \begin{align*}
        \sum_{s=1}^t x_{k,s} - (K-1) \leq N_{k,t} \leq \sum_{s=1}^t x_{k,s} +1 
    \end{align*}
\end{lemma}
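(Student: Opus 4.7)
The plan is to establish both inequalities simultaneously by induction on $t \geq K$, tracking the signed deficit $\epsilon_{k,t} \defn \sum_{s=1}^t x_{k,s} - N_{k,t}$ so that the claim reduces to $-1 \leq \epsilon_{k,t} \leq K-1$ for every arm $k$. The base case $t = K$ is immediate: the initialisation $x_s = e_s$ for $s \in [K]$ forces $A_s = s$, whence $N_{k,K} = 1 = \sum_{s=1}^{K} x_{k,s}$ and $\epsilon_{k,K} = 0$, which lies in $[-1, K-1]$ trivially.

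For the inductive step, fix $t \geq K + 1$, let $k^{\star} = A_t$, and introduce the pre-pull deficit $\tilde\epsilon_{k,t} \defn \sum_{s=1}^t x_{k,s} - N_{k,t-1} = \epsilon_{k,t-1} + x_{k,t}$. The one-step update then reads $\epsilon_{k,t} = \tilde\epsilon_{k,t}$ for $k \neq k^{\star}$ and $\epsilon_{k^{\star},t} = \tilde\epsilon_{k^{\star},t} - 1$. Since $x_t \in \simplex$, the balance identity $\sum_k \tilde\epsilon_{k,t} = 1$ holds for every $t$. Two arithmetic consequences of the argmax rule then drive the proof. First, $K$-fold averaging of the balance identity yields $\tilde\epsilon_{k^{\star},t} \geq 1/K > 0$, so the chosen arm's deficit strictly decreases at the pull. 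Second, if two distinct arms had $\tilde\epsilon_{\cdot,t} > K-1$, their sum alone would exceed $2(K-1)$, and the remaining $K-2$ arms would be forced strictly below the inductive lower bound $\tilde\epsilon \geq -1$ (which follows from the IH together with $x_{k,t}\geq 0$); hence at most one arm can satisfy $\tilde\epsilon_{k,t} > K-1$, and by the argmax property this unique arm must be $k^{\star}$.

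Combining these two consequences with the inductive hypothesis, the four subcases each close in one line. The lower bound for $k \neq k^{\star}$ is $\epsilon_{k,t} = \epsilon_{k,t-1} + x_{k,t} \geq -1$ by IH and $x_{k,t}\geq 0$. The lower bound for $k = k^{\star}$ is $\epsilon_{k^{\star},t} \geq 1/K - 1 > -1$ by the first consequence. The upper bound for $k = k^{\star}$ is $\epsilon_{k^{\star},t} \leq (K-1) + x_{k^{\star},t} - 1 \leq K-1$ by IH and $x_{k^{\star},t}\leq 1$. The upper bound for $k \neq k^{\star}$ follows from the second consequence: $\tilde\epsilon_{k,t} \leq K-1$, whence $\epsilon_{k,t} = \tilde\epsilon_{k,t} \leq K-1$. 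The main obstacle lies in this last case, which is the only place where the naive one-step estimate $\epsilon_{k,t-1}+x_{k,t} \leq K-1+1 = K$ is off by one; the dichotomy argument that isolates the single potentially over-sized arm and forces it to coincide with $k^{\star}$ is the only substantive use of the argmax rule, and it is the step where the constant $K-1$ (rather than $K$) is recovered. Once this dichotomy is in place, the induction closes, and rearranging $\epsilon_{k,t}$ returns the stated bounds on $N_{k,t}$.
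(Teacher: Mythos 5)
Your induction on the signed deficit $\epsilon_{k,t}=\sum_{s=1}^{t}x_{k,s}-N_{k,t}$ is the right skeleton, and the balance identity $\sum_{k}\tilde\epsilon_{k,t}=1$ combined with the dichotomy argument (at most one arm can have $\tilde\epsilon_{k,t}>K-1$, since two such arms would push the remaining ones below the inductive lower bound $-1$) is exactly the combinatorial content that recovers the constant $K-1$ rather than $K$. Note that the paper itself gives no proof of this lemma: it is imported verbatim from \cite{wang2021fast} (and is in essence the C-tracking lemma of \cite{garivier2016optimal}), so a self-contained argument of this kind is a genuine addition rather than a rederivation of the paper's route.

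There is, however, a mismatch you need to fix: both places where you invoke the selection rule implicitly assume that $A_t$ maximises the deficit $\tilde\epsilon_{k,t}=\sum_{s=1}^{t}x_{k,s}-N_{k,t-1}$, whereas the rule as stated maximises the ratio $\sum_{s=1}^{t}x_{k,s}\,/\,N_{k,t-1}$. The ``$K$-fold averaging'' step giving $\tilde\epsilon_{k^{\star},t}\ge 1/K$ is only valid for a deficit-argmax; under the ratio rule the selected arm can have an arbitrarily small positive deficit. Likewise, ``by the argmax property this unique arm must be $k^{\star}$'' is immediate for the deficit-argmax but not for the ratio. Both steps admit a short repair. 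Since $N_{k,t-1}\ge 1$ for $t\ge K+1$, an arm has ratio strictly greater than $1$ if and only if its deficit is strictly positive. For the lower bound, $\sum_{k}\tilde\epsilon_{k,t}=1>0$ forces some arm to have ratio $>1$, hence the ratio-argmax $k^{\star}$ does too, hence $\tilde\epsilon_{k^{\star},t}>0$ and $\epsilon_{k^{\star},t}>-1$; the weaker bound $\tilde\epsilon_{k^{\star},t}>0$ is all your case analysis actually uses. For the upper bound, if some arm $j$ has $\tilde\epsilon_{j,t}>K-1$, then the balance identity together with the inductive bound $\tilde\epsilon_{i,t}\ge -1$ forces \emph{every} other arm to satisfy $\tilde\epsilon_{i,t}\le 0$ (otherwise the remaining $K-2$ arms would sum to strictly less than $2-K$ while each is at least $-1$), hence ratio at most $1$, while arm $j$ has ratio strictly greater than $1$; so $j=k^{\star}$ after all. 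With these two substitutions the induction closes as you describe.
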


\begin{lemma}\label{lemm:tracking_forced}
    At any $t\geq 4K$, the sampling rule of $\framework$ satisfies $\bomega_s \in \Delta_{\frac{1}{2\sqrt{tK}}}$
\end{lemma}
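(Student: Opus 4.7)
The plan is to exploit the periodic forced-exploration schedule combined with the Frank--Wolfe averaging rule to lower-bound every coordinate of $\bomega_s$. I will first identify the forced-exploration rounds triggered by the condition $\sqrt{t/K}\in\mathbb{N}$: these occur at times $t_m = m^2 K$ for $m\in\mathbb{N}$, and at each such time the algorithm overrides $\bomega_{t_m}$ to the uniform distribution $(1/K,\ldots,1/K)$. The secondary trigger $\Mhat_t\notin \cM$ only increases the number of reset events and can therefore only improve the bound, so I would drop it from the analysis.

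Next, I would unroll the Frank--Wolfe averaging $\bomega_{r+1} = \tfrac{1}{r+1}\bx_{r+1} + \tfrac{r}{r+1}\bomega_r$ starting from the most recent forced-exploration round $t_m$ with $t_m\leq s < t_{m+1}$. Because every Frank--Wolfe direction satisfies $\bx_{r,k}\geq 0$, a routine telescoping argument gives, for every coordinate $k$,
\begin{align*}
  \bomega_{s,k} \;\geq\; \frac{t_m}{s}\,\bomega_{t_m,k} \;=\; \frac{m^2 K}{s}\cdot\frac{1}{K} \;=\; \frac{m^2}{s}\,.
\end{align*}

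Finally, I would specialise to $s\geq 4K$, which ensures $m=\lfloor \sqrt{s/K}\rfloor \geq 2$ and hence $m\geq \sqrt{s/K}-1\geq \tfrac{1}{2}\sqrt{s/K}$, giving $m^2/s\geq 1/(4K)$. Since $1/(4K)\geq 1/(2\sqrt{tK})$ whenever $t\geq 4K$, this is exactly the claim $\bomega_s\in\Delta_{1/(2\sqrt{tK})}$. The initial regime $K\leq s<4K$, where $m=1$, is handled by the very same telescoping seeded at the initialisation $\bomega_K=(1/K,\ldots,1/K)$, yielding $\bomega_{s,k}\geq 1/s\geq 1/(4K)$. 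There is no serious obstacle; the main subtlety is merely recognising that the quadratic spacing $t_m = m^2 K$ between resets is calibrated so that the $t_m/s$ decay caused by FW averaging is overturned by the $m^2$ numerator, yielding a $K$-dependent but time-robust lower bound.
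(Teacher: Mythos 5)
Your telescoping identity $\bomega_{s,k}\ge \frac{t_m}{s}\,\bomega_{t_m,k}$ is algebraically fine, but the argument hinges on reading the forced-exploration step as \emph{resetting the running average} $\bomega_{t_m}$ to the uniform vector and then propagating that mass forward. That is not the mechanism the paper's proof uses, and it proves too much: your chain gives $\min_k \bomega_{s,k}\ge \frac{m^2}{(m+1)^2 K}\ge \frac{1}{4K}$ for \emph{all} $s\ge K$, a $t$-independent lower bound. Such a bound is incompatible with the convergence $\bomega_t\to\bomega^{\star}(M)$ established in Theorem~\ref{thm:conv_FW} and used throughout the sample-complexity analysis, since the optimal allocation $\bomega^{\star}(M)$ can have coordinates well below $1/(4K)$. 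The fact that the lemma asserts only the decaying threshold $1/(2\sqrt{tK})$ is the tell: the forced uniform vector is meant to enter as the Frank--Wolfe \emph{direction} $\bx_t$ inside the running average $t\bomega_t=\sum_{s=1}^{t}\bx_s$, where its influence is diluted over time, not as a hard reset of $\bomega_t$ that is then slowly eroded.

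The paper's proof accordingly proceeds by \emph{counting}: up to time $t$ there are $\lfloor\sqrt{t/K}\rfloor$ forced rounds (the times $m^2K\le t$), each injecting uniform mass into the cumulative sum $\sum_{s\le t}\bx_s$, so that $t\,\bomega_{t,k}\gtrsim \sqrt{t/K}-1\ge\tfrac12\sqrt{t/K}$ for $t\ge 4K$, whence $\bomega_{t,k}\ge \frac{1}{2\sqrt{tK}}$. Your floor/offset manipulations ($m\ge\sqrt{s/K}-1\ge\tfrac12\sqrt{s/K}$ for $s\ge 4K$) are exactly the right elementary steps and reappear in the paper's proof; what is missing is the correct accounting of \emph{where} the uniform mass lands. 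Under the counting interpretation your telescoping-from-a-reset is unavailable, because between forced rounds $\bomega_t$ is never equal to the uniform vector --- it only contains an $O(\sqrt{t/K}\,/\,t)$ fraction of accumulated uniform directions. I would also caution that the pseudocode's Line~5 is genuinely ambiguous on this point, so the misreading is understandable, but the proof as proposed does not establish the lemma in the sense in which the rest of the analysis relies on it.
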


\begin{proof}
    By forced exploration enforced in $\framework$, we can write 
    \begin{align*}
        t\bomega_t =& \sum_{s=1}^t x_s \geq \frac{1}{K} \indicator\{ x_s = (1/K,1/K,\hdots,1/K)\} \\
        =& \sqrt{\left\lfloor \frac{t}{K}\right\rfloor}\geq \sqrt{\frac{t}{K}}-1 \geq \frac{1}{2}\sqrt{\frac{t}{K}}\\
        \implies& \bomega_t \geq \frac{1}{2\sqrt{tK}}
    \end{align*}
\end{proof}\newpage
\section{Additional Experimental Details}
\subsection{Implementation Details}
\noindent\textbf{Computational Resource.} We run all the algorithms on a 64-bit 13th Gen Intel octa-Core i7-1370P $\times$ 20 processor machine with 32GB RAM. 

\noindent\textbf{Dataset Description: Cov-Boost.} This real-life inspired data set contains tabulated entries of phase-$2$ booster trial for Covid-19~\citep{munro2021safety}. Cov-Boost has been used as a benchmark dataset for evaluating algorithms for Pareto Set Identification (PSI). It consists bandit instance with $20$ vaccines, i.e. arms, and $3$ immune responses as objectives, i.e., $K=20$ and $L=3$. Additionally, \cite{kone2024paretosetidentificationposterior} provides detailed description of this dataset in Table 3 (empirical arithmetic mean of the log-transformed immune response for three immunogenicity indicators) and 4 (pooled variances). For entirety, we include the mean and pooled variance table here 

\begin{table}[h!]
    \centering
     \caption{Mean matrix}
\begin{tabular}{l|l|lll}
\toprule \multirow{2}{*}{Dose 1/Dose 2} & \multirow[t]{2}{*}{Dose 3 (booster)} & \multicolumn{3}{c}{Immune response} \\
& & Anti-spike IgG & $\mathrm{NT}_{50}$ & cellular response \\
\midrule \multirow{10}{*}{Prime BNT/BNT} & ChAd & 9.50 & 6.86 & 4.56 \\
 & NVX & 9.29 & 6.64 & 4.04 \\
 & NVX Half & 9.05 & 6.41 & 3.56 \\
 & BNT & 10.21 & 7.49 & 4.43 \\
 & BNT Half & 10.05 & 7.20 & 4.36 \\
 & VLA & 8.34 & 5.67 & 3.51 \\
 & VLA Half & 8.22 & 5.46 & 3.64 \\
& Ad26 & 9.75 & 7.27 & 4.71 \\
 & m1273 & 10.43 & 7.61 & 4.72 \\
 & CVn & 8.94 & 6.19 & 3.84 \\
\midrule \multirow{10}{*}{Prime ChAd/ChAd} & ChAd & 7.81 & 5.26 & 3.97 \\
 & NVX & 8.85 & 6.59 & 4.73 \\
 & NVX Half & 8.44 & 6.15 & 4.59 \\
 & BNT & 9.93 & 7.39 & 4.75 \\
 & BNT Half & 8.71 & 7.20 & 4.91 \\
 & VLA & 7.51 & 5.31 & 3.96 \\
 & VLA Half & 7.27 & 4.99 & 4.02 \\
 & Ad26 & 8.62 & 6.33 & 4.66 \\
 & m1273 & 10.35 & 7.77 & 5.00 \\
 & CVn & 8.29 & 5.92 & 3.87 \\
\bottomrule
\end{tabular}
   
\end{table}

\begin{table}[h!]
    \centering
    \caption{Pooled variances}
\begin{tabular}{lccc}
\hline & \multicolumn{3}{c}{ Immune response } \\
& Anti-spike IgG & NT $_{50}$ & cellular response \\
\hline Pooled sample variance & 0.70 & 0.83 & 1.54 \\
\hline
\end{tabular}
\end{table}

\subsection{A Pseudocode of Accelerated PreTS}
The algorithm PreTS provided in \cite{prefexp} lacks tractability as discussed before. The allocation computation involved constructing a convex hull over the Alt-set, which is very expensive and makes it non-implementable. So, here we provide an \textit{accelerated} version of PreTS (Algorithm \ref{alg:accelerated_prets}) that can be implemented leveraging our observations of the optimisation problem. Note that, this version of PreTS is still expensive to run. Roughly our Algorithm \ref{alg:accelerated_prets} runs $90$ times faster than Algorithm \ref{alg:accelerated_prets} when implemented in Python3 with identical resource environment.

\begin{algorithm}
\caption{\textbf{Accelerated} \textbf{Pre}ference-based \textbf{T}rack-and-\textbf{S}top (Accelerated PreTS)}
\label{algo:vector-tns}
\begin{algorithmic}[1]\label{alg:accelerated_prets}
\STATE{\textbf{Input:} Confidence parameter $\delta$, preference cone $\cCbar$}
\STATE \textbf{Initialise:} For $t\in[K]$, sample each arm once s.t. $\bomega_K= (1/K, \cdot\cdot\cdot , 1/K)$, mean estimate $\Hat{M}_K$
\WHILE{Equation~\eqref{eqn:stopping_rule} is \textbf{FALSE}}
\STATE \textbf{Estimate Pareto Indices:} Calculate Pareto indices $\mathcal{P}_t$ based on current estimate $\Hat{M}_{t}$.
\STATE \textbf{Estimate Set of Pareto Policies:} $\Pi^{\mathcal{P}_t}$ consisting pure policies with $i_t \in \mathcal{P}_{t}$ as basis.
\STATE \textbf{Set of Neighbours:} $\Pi \setminus \Pi^{\mathcal{P}_t}$, where $\Pi$ is the set of all pure policies.
\STATE \textbf{Allocation:} $\bomega_t \leftarrow \argmax\limits_{\bomega \in \simplex} \min\limits_{\substack{\bpi_j \in \nbd{\opt_i}\\\opt_i \in \{\opt_i\}_{i=1}^p}} \min\limits_{\tilde{M} \in \Bar{\Lambda}_{ij}(M)} \min\limits_{\bz \in \dualcone} \sum_{k=1}^{K} \bomega_k  \kl{\bz^{\top}\Mhat_{k,t}}{\bz^{\top}\tilde{M}_{k}}$
\STATE \textbf{C-tracking:} Play $A_t \in \argmin N_{a,t} - \sum_{s=1}^{t+1} \bomega_s $ (ties broken arbitrarily)
\STATE \textbf{Feedback and Parameter Update: } Get feedback $R_{t} \in \reals^{L}$ and update $\Hat{M}_{t}$ to $\Hat{M}_{t+1}$ with $R_t$
\ENDWHILE
\STATE \textbf{Recommendation Rule:} Recommend  $\mathcal{P}_t$ as the Pareto optimal set
\end{algorithmic} 
\end{algorithm}

\newpage
\section{Runtime Analysis of \framework}\label{app:runtime}

In this section, we provide runtime analysis of \framework~ with varying values of $K$ and $L$. 

\textbf{I. Scaling with $K$.} We consider synthetic environments consisting of $L=2$, multivariate Gaussian rewards, and positive right orthant as the preference cone. Additionally, we assume the two objectives to be highly positively correlated with correlation coefficient $\rho =0.9$. We set $\delta = 0.01$. Finally, we vary the number of arms $K=5,10,20,30,40$, keeping the Pareto front same and report the corresponding clock time per iteration for each of the instances. The final results are averaged over $10$ experiments.

\noindent \textbf{Observation.} From Figure \ref{fig:runtime_K}, the trend in runtime is sub-linear with respect to $K$, i.e. the clock runtime complexity scales with at most $\bigO(K)$.\\

\begin{figure}[h!]
	\centering
	\begin{minipage}{0.48\textwidth}
		\centering
		\includegraphics[width=\textwidth]{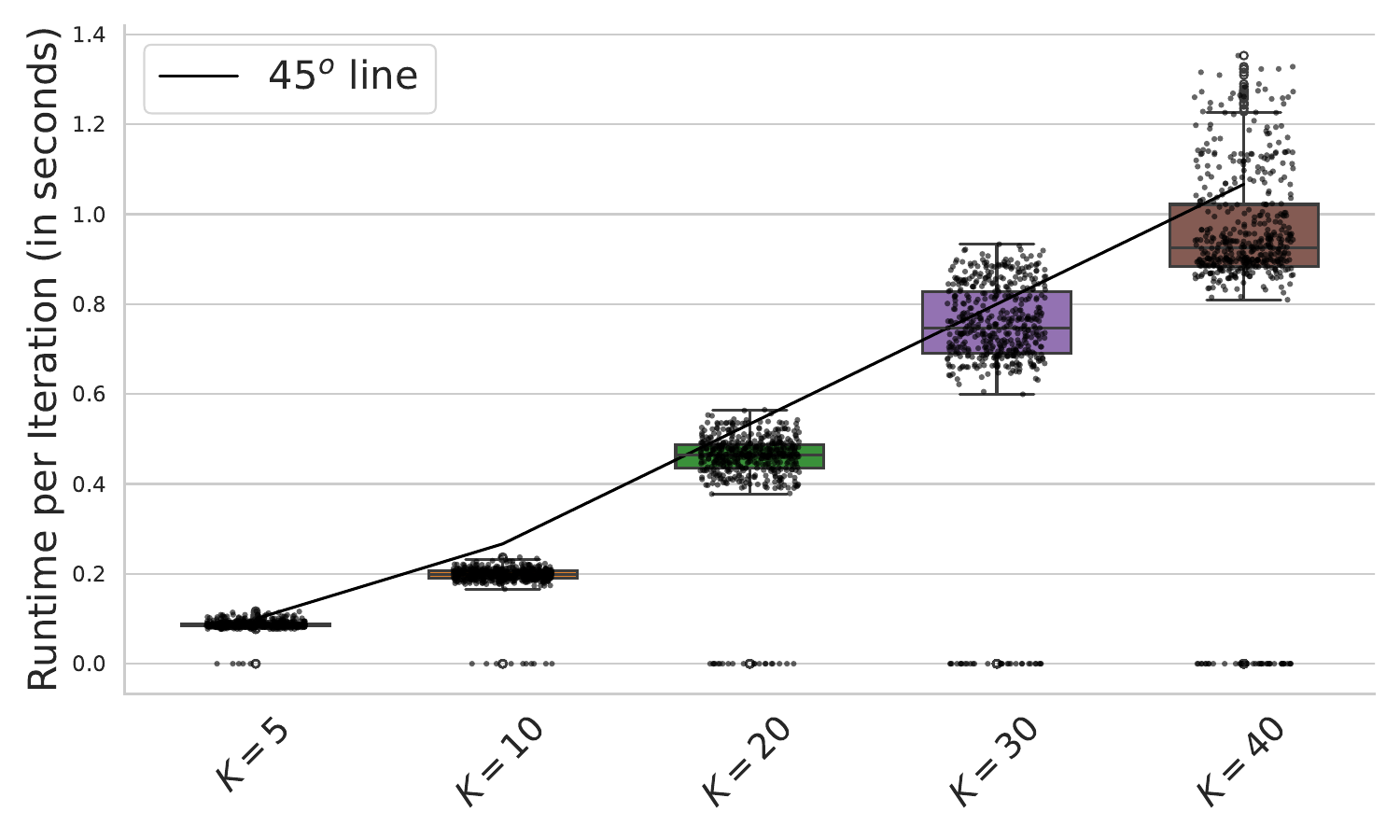}
	\end{minipage}\hfill
	\begin{minipage}{0.48\textwidth}
		\centering
		\includegraphics[width=\textwidth]{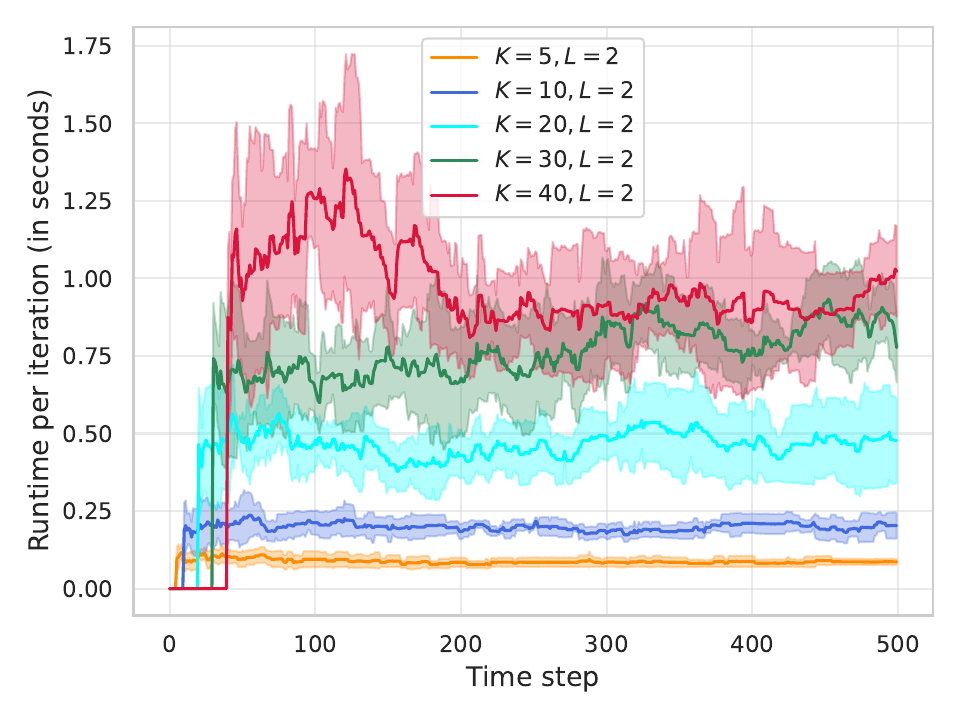}
	\end{minipage}\caption{Runtime per iteration (in seconds) of \framework~ for varying number of arms $K$ in the Gaussian instance with $L=2$ and correlation coefficient $\rho=0.9$.}
	\label{fig:runtime_K}
\end{figure}

\noindent \textbf{II. Scaling with $L$.} To test the runtime of \framework~ with varying values of $L$, we fix $K=25$. Keeping all other parameters and Pareto front same, we set four environments with $L=2,6,10,14$ to report the runtimes per iterations. The final results are averaged over $10$ experiments.

\noindent \textbf{Observations:} We observe from Figure \ref{fig:runtime_L} that both mean and median runtime of \framework~ falls below the rate $\bigO((\log(K)^{\max\{1,L-2\}})$ after $L\geq 5$, i.e the only dimension dependence comes from the computation of the Pareto front. Hence, we conclude that the complexity of calculating Pareto front dominates the complexity of lower bound optimisation.

\begin{figure}[h!]
	\centering
	\begin{minipage}{0.48\textwidth}
		\centering
		\includegraphics[width=\textwidth]{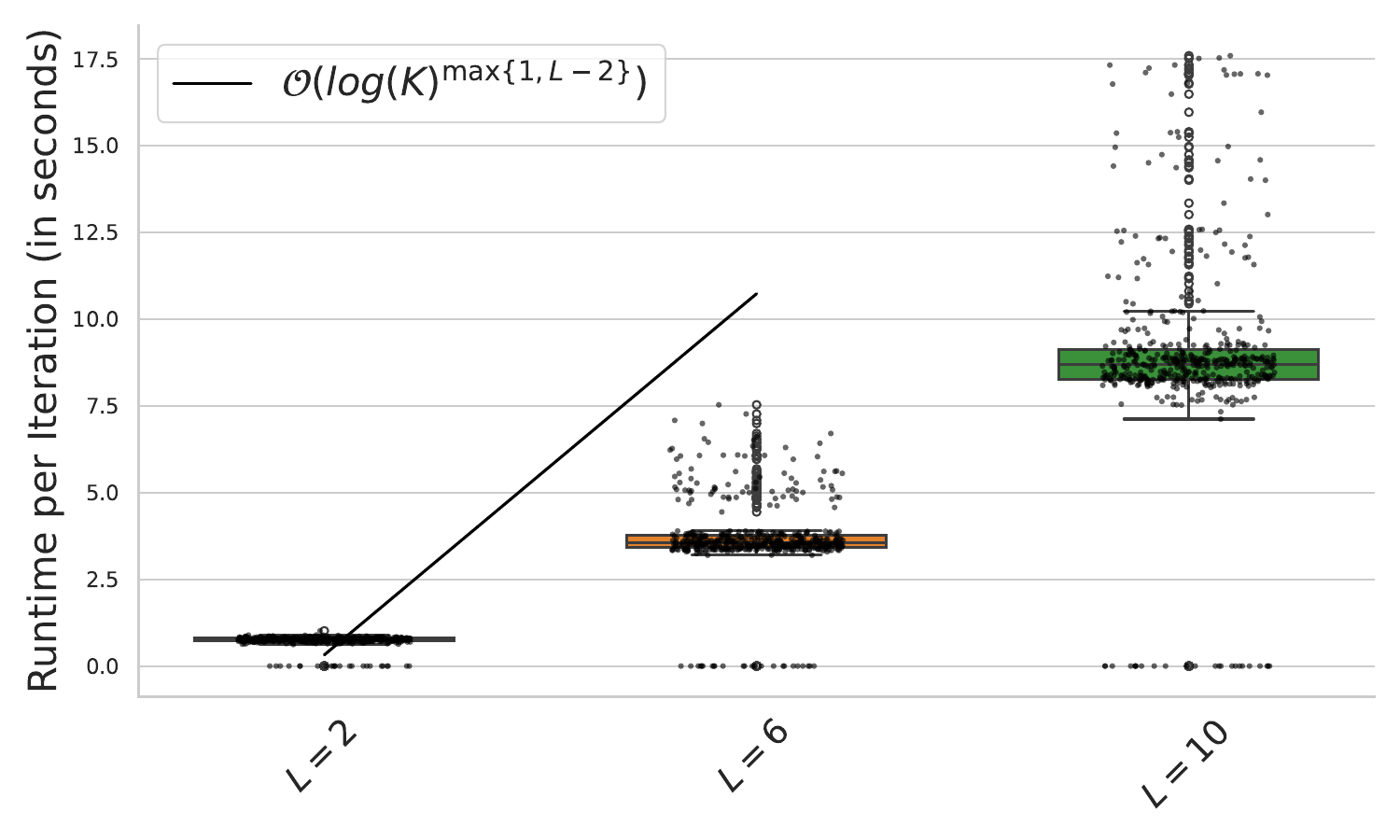}
	\end{minipage}\hfill
	\begin{minipage}{0.48\textwidth}
		\centering
		\includegraphics[width=\textwidth]{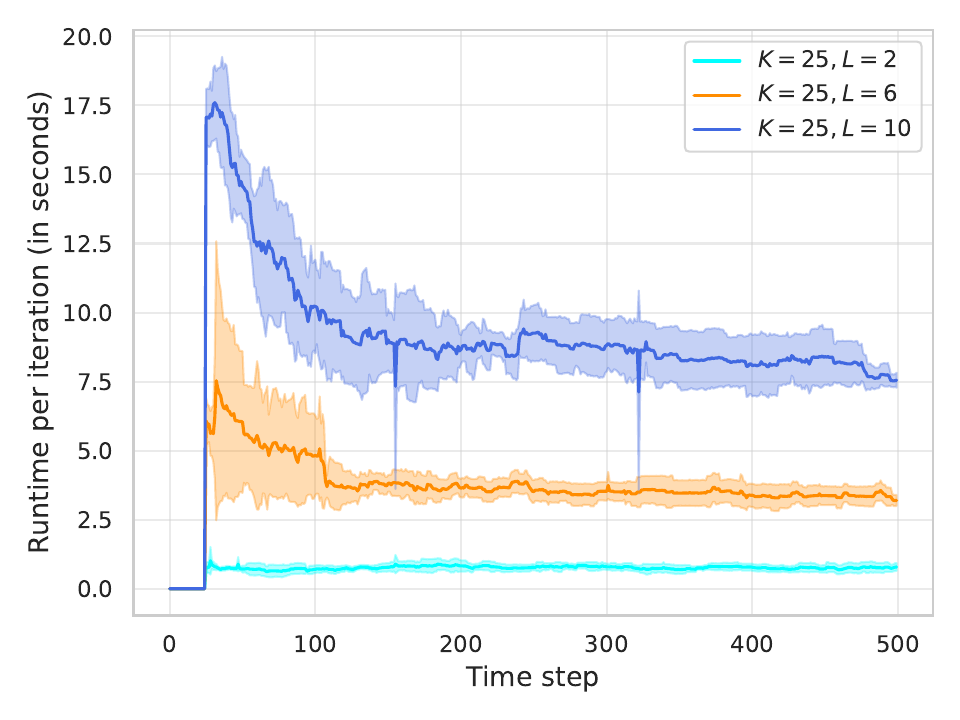}
	\end{minipage}\caption{Runtime per iteration (in seconds) of \framework~ for varying number of arms $L$ in the Gaussian instance with $K=25$ and correlation coefficient $\rho=0.9$.}
	\label{fig:runtime_L}
\end{figure}  \newpage
\section{Auxiliary Definitions and Results}
\label{app:aux}

\begin{theorem}[Berge's Maximum Theorem]
\label{thm:berge}
Let $\cU$ and $\cV$ be topological spaces, $f: \cU \times \cV \rightarrow \mR$ and $C: \cU \rightarrow \cV$ be non-empty compact set for all $u \in \cU$. Then, if $C$ is continuous at $u$, $f^\ast(u) = \max_{v \in C(u)} f(u,v)$ is continuous and $C^\ast(u) = \{v \in C(u) : f^\ast(u) = f(u,v)\}$ is upper-hemicontinuous. 
\end{theorem}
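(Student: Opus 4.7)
The statement is the classical Berge's Maximum Theorem, so the proposal is to reproduce a standard topological proof, adapted to the correspondence formalism used elsewhere in the paper. Throughout, I would make explicit the implicit assumption that $f$ is (jointly) continuous on $\cU\times\cV$, and interpret ``$C$ is continuous at $u$'' as being both upper and lower hemicontinuous there; under these hypotheses the attainment of the max is immediate from Weierstrass since $C(u)$ is non-empty and compact, so $f^{\ast}(u)$ and $C^{\ast}(u)$ are well-defined.

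The plan is to split continuity of $f^{\ast}$ into upper and lower semicontinuity and handle each with the appropriate half of the hemicontinuity of $C$. For upper semicontinuity at $u_{0}$, I would take any net (or sequence, in the metric case) $u_{n}\to u_{0}$, pick maximisers $v_{n}\in C^{\ast}(u_{n})$, and use upper hemicontinuity of $C$ together with compactness of the Kuratowski upper limit to extract a cluster point $v\in C(u_{0})$. Joint continuity of $f$ then gives $\limsup_{n} f^{\ast}(u_{n})=\limsup_{n} f(u_{n},v_{n})\le f(u_{0},v)\le f^{\ast}(u_{0})$. For lower semicontinuity, I would fix $v^{\ast}\in C^{\ast}(u_{0})$ and use lower hemicontinuity of $C$ to obtain a selection $v_{n}\in C(u_{n})$ with $v_{n}\to v^{\ast}$; continuity of $f$ then yields $\liminf_{n} f^{\ast}(u_{n})\ge \liminf_{n} f(u_{n},v_{n})=f(u_{0},v^{\ast})=f^{\ast}(u_{0})$. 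Combining the two inequalities gives continuity of $f^{\ast}$.

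For upper hemicontinuity of $C^{\ast}$, I would verify the closed-graph characterisation relative to the compact-valued correspondence $C$. Concretely, take $u_{n}\to u_{0}$ and $v_{n}\in C^{\ast}(u_{n})$ with $v_{n}\to v$. Upper hemicontinuity of $C$ places $v$ in $C(u_{0})$. Continuity of $f^{\ast}$ (just established) together with continuity of $f$ gives $f(u_{0},v)=\lim_{n} f(u_{n},v_{n})=\lim_{n} f^{\ast}(u_{n})=f^{\ast}(u_{0})$, so $v\in C^{\ast}(u_{0})$; combined with the local compactness provided by $C$, this closed-graph property is equivalent to upper hemicontinuity of $C^{\ast}$.

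The main technical obstacle is that, strictly speaking, the paper only assumes $\cU$ and $\cV$ are topological spaces, where sequences do not characterise continuity or hemicontinuity. I would therefore either (i) restrict to first-countable (e.g.\ metrisable) spaces, which is the setting actually used throughout the paper since all the applications take $\cU,\cV\subset\mathbb{R}^{n}$, or (ii) replace sequences by nets in each of the three arguments above, which is a purely notational change but is the only point where care is required. A final remark would cite~\citep{berge1963topological} or a standard reference such as Aliprantis--Border for the general topological version, so that the present statement of Theorem~\ref{thm:berge} serves purely as a black box for the continuity and hemicontinuity arguments in Appendices~\ref{app:Continuity} and~\ref{app:conv_FW}.
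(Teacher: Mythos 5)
The paper does not actually prove this statement: Theorem~\ref{thm:berge} sits in the auxiliary-results appendix and is used purely as a cited classical black box (alongside Theorems~\ref{thm:Hausdroff_continuity} and~\ref{thm:upper_lower_hemi}), so there is no in-paper proof to compare against. Your reconstruction is the standard textbook proof of Berge's Maximum Theorem and is sound: the decomposition of continuity of $f^{\ast}$ into upper semicontinuity (via upper hemicontinuity of $C$, compactness, and a cluster-point extraction) and lower semicontinuity (via lower hemicontinuity of $C$ and a convergent selection), followed by the closed-graph argument for upper hemicontinuity of $C^{\ast}$, is exactly the canonical route.

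Two of your side remarks are worth keeping rather than treating as optional. First, you are right that the theorem as stated in the paper omits the hypothesis that $f$ is jointly continuous; without it the conclusion is false (take $C$ constant and $f$ discontinuous in $u$), so making that assumption explicit is a genuine correction, not just pedantry. Second, the sequences-versus-nets issue is real for arbitrary topological spaces, but as you note it is harmless here since every application in Appendices~\ref{app:Continuity} and~\ref{app:conv_FW} takes $\cU$ and $\cV$ to be subsets of Euclidean space. No gaps beyond those you already flagged.
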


\begin{theorem}[Donsker-Vardhan Variational Formula]
\label{thm:donsker-vardhan}
For mutual information $KL(P \Vert Q)$, we have that: 
\[
d_{\mathrm{KL}}(P \Vert Q) = \sup_{f} \mE_{P} \left[f\right] - \ln \mE_{Q} \left[ \exp(f) \right]
\]
\end{theorem}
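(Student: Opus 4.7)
The plan is to prove both directions of the variational identity by exhibiting a tilted (exponentially re-weighted) probability measure that witnesses the equality. Specifically, for any measurable $f$ with $\mathbb{E}_Q[\exp(f)] < \infty$, I will introduce the tilted measure $Q_f$ defined by $dQ_f/dQ = \exp(f)/\mathbb{E}_Q[\exp(f)]$, and express $d_{\mathrm{KL}}(P \Vert Q)$ in terms of $d_{\mathrm{KL}}(P \Vert Q_f)$ together with the quantity $\mathbb{E}_P[f] - \ln \mathbb{E}_Q[\exp(f)]$. Non-negativity of KL-divergence will then give the upper bound on the supremum, and plugging in the log-likelihood ratio $f^\star = \ln(dP/dQ)$ will show the supremum is attained.

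For the inequality $\mathbb{E}_P[f] - \ln \mathbb{E}_Q[\exp(f)] \leq d_{\mathrm{KL}}(P\|Q)$, I would write, assuming $P \ll Q$,
\begin{align*}
d_{\mathrm{KL}}(P\|Q_f) &= \mathbb{E}_P\!\left[\ln \frac{dP}{dQ_f}\right] = \mathbb{E}_P\!\left[\ln \frac{dP}{dQ}\right] - \mathbb{E}_P[f] + \ln \mathbb{E}_Q[\exp(f)] \\ &= d_{\mathrm{KL}}(P\|Q) - \Bigl(\mathbb{E}_P[f] - \ln \mathbb{E}_Q[\exp(f)]\Bigr).
\end{align*}
Since $d_{\mathrm{KL}}(P\|Q_f) \geq 0$ (Gibbs' inequality, provable via Jensen applied to $-\ln$), rearranging yields the desired upper bound; taking the supremum over $f$ then gives one direction. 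When $P \not\ll Q$, both sides of the identity equal $+\infty$ (choose $f$ large on a set with $P$-mass but $Q$-measure zero), so this case is handled separately.

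For the reverse inequality, I plan to substitute $f^\star = \ln(dP/dQ)$ formally: then $\mathbb{E}_P[f^\star] = d_{\mathrm{KL}}(P\|Q)$ and $\mathbb{E}_Q[\exp(f^\star)] = \mathbb{E}_Q[dP/dQ] = 1$, so the bracket achieves exactly $d_{\mathrm{KL}}(P\|Q)$, proving the supremum is attained. The identity then follows immediately from combining the two bounds.

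The main technical obstacle is handling measurability and integrability cleanly: the formal choice $f^\star = \ln(dP/dQ)$ may fail to be integrable or even well-defined (e.g.\ where $dP/dQ = 0$), so a rigorous argument requires a truncation $f^\star_n \defn (\ln(dP/dQ)) \wedge n \vee (-n)$ and a dominated/monotone convergence argument to pass to the limit, showing $\mathbb{E}_P[f^\star_n] - \ln \mathbb{E}_Q[\exp(f^\star_n)] \to d_{\mathrm{KL}}(P\|Q)$. The restriction of the supremum to bounded measurable $f$ (the most useful regime for the present paper's concentration arguments) is exactly what makes this truncation step harmless, and I would state the result with this restriction so the proof stays self-contained.
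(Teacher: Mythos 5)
The paper states this result only as an auxiliary fact (Theorem~\ref{thm:donsker-vardhan}) and supplies no proof of its own, so there is nothing to compare against; your argument is the standard and correct one. The tilted-measure decomposition $d_{\mathrm{KL}}(P\|Q) = d_{\mathrm{KL}}(P\|Q_f) + \bigl(\mathbb{E}_P[f] - \ln \mathbb{E}_Q[\exp(f)]\bigr)$ plus Gibbs' inequality gives the upper bound on the supremum, and the truncated log-likelihood ratios $f^\star_n$ achieve it in the limit; you also correctly isolate the two delicate points, namely the case $P \not\ll Q$ (where both sides are $+\infty$) and the fact that $\mathbb{E}_Q[\exp(f^\star)] = \mathbb{E}_Q[dP/dQ] = 1$ only under absolute continuity, which the paper's standing assumption on $\kl{P}{Q}$ already guarantees. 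The only cosmetic remark is that the theorem's label ``mutual information'' is a misnomer for what is simply the KL-divergence, and the supremum should be understood over bounded measurable $f$ (or any class containing the truncations), exactly as you state.
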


\begin{lemma}[Peskun Ordering]
\label{lem:peskun-order}
For any two random variables $X,Y$ on $\mR^{KL}$ the following are equivalent: 
\begin{enumerate}
\item $X \le_{s} Y$
\item For all $x \in \mR^{KL}, \ \mP\left[ X \ge x\right] \le \mP\left[ Y \ge x\right]$
\item For all non-negative functions $f_{1},f_{2},\ldots,f_{k}$, we have that: $ \Pi_{i=1}^{K} f_{i}  \le \Pi_{i=1}^{K} f_{i}$
\end{enumerate}
\end{lemma}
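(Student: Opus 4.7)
The plan is to prove the three-way equivalence by the cycle $(1) \Rightarrow (3) \Rightarrow (2) \Rightarrow (1)$, reading (3) as the expectation inequality $\mE\big[\prod_{i=1}^{K} f_i(X)\big] \le \mE\big[\prod_{i=1}^{K} f_i(Y)\big]$ (the displayed form omits the expectations, evidently a typo) and interpreting $\le_s$ as the usual stochastic order $\mE[f(X)] \le \mE[f(Y)]$ for all coordinate-wise non-decreasing bounded measurable $f$. The key background fact will be the layer-cake representation $g(z) = \int_0^\infty \indicator\{g(z) > t\}\,\mathrm{d}t$ for non-negative $g$, which lets me trade expectations of monotone functions for probabilities of upper sets.

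For $(1) \Rightarrow (3)$, I would observe that a product of non-negative, coordinate-wise non-decreasing functions is again non-negative and non-decreasing, so applying the definition of $\le_s$ to $g(z) = \prod_{i=1}^K f_i(z)$ gives the inequality directly. Non-negativity is essential here (it is precisely why the hypothesis of (3) restricts to non-negative $f_i$), since otherwise a product of monotone functions need not be monotone.

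For $(3) \Rightarrow (2)$, I would specialise the $f_i$ to the coordinate-indicator functions $f_i(z) = \indicator\{z_i \ge x_i\}$, which are non-negative and non-decreasing. Their product $\prod_i f_i(z) = \indicator\{z \ge x\}$ is exactly the indicator of the upper orthant at $x$, so the inequality in (3) collapses to $\mP(X \ge x) \le \mP(Y \ge x)$, which is (2).

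For $(2) \Rightarrow (1)$, I would take any bounded non-decreasing $f \ge 0$ and write $\mE[f(X)] = \int_0^\infty \mP(f(X) > t)\,\mathrm{d}t$ via Fubini. The super-level set $\{z : f(z) > t\}$ is an upper set in the coordinate-wise order, and by a standard monotone class argument every upper Borel set is a countable union of upper orthants $\{z \ge x\}$; hence (2) yields $\mP(f(X) > t) \le \mP(f(Y) > t)$ for every $t$. Integrating back gives $\mE[f(X)] \le \mE[f(Y)]$, and extending from bounded to general non-decreasing $f$ by monotone convergence closes the loop. The main obstacle is this last approximation step: representing a generic upper Borel set in $\mR^{KL}$ as a countable union (or monotone limit) of upper orthants is not entirely trivial, and one must invoke the separability of $\mR^{KL}$ together with the monotone class theorem applied to the $\pi$-system of upper orthants. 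Once this approximation is in hand, the rest is routine Fubini and monotone convergence.
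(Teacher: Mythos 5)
The paper itself gives no proof of this lemma: it is stated in the auxiliary-results appendix without argument (and with a garbled item (3) that, as printed, is a tautology), so there is nothing to compare your route against. Judged on its own terms, your cycle breaks at $(2) \Rightarrow (1)$. The problem is not the approximation of an upper Borel set by upper orthants (for an open upper set $U$ one can indeed write $U$ as a countable union of sets of the form $\{w : w \ge z\}$ using separability); the problem is the next step, where you conclude $\mP(X \in U) \le \mP(Y \in U)$ from the orthant-wise inequalities. Probability is not monotone under countable unions in the way you need: inclusion--exclusion introduces negatively signed intersection terms, so controlling each orthant does not control the union. This is not a repairable technicality --- the implication is genuinely false for $KL \ge 2$ if $\le_s$ denotes the usual multivariate stochastic order ($\mE[f(X)] \le \mE[f(Y)]$ for all coordinate-wise non-decreasing $f$). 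Concretely, let $X$ be uniform on $\{(1,0),(0,1)\}$ and $Y$ uniform on $\{(0,0),(1,1)\}$ in $\mR^2$. One checks $\mP(X \ge x) \le \mP(Y \ge x)$ for every $x$, so (2) holds; but for the non-decreasing function $f(a,b) = \indicator\{a+b \ge 1\}$ one has $\mE[f(X)] = 1 > 1/2 = \mE[f(Y)]$, so (1) fails.

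Your implications $(1) \Rightarrow (3)$ and $(3) \Rightarrow (2)$ are fine (a product of non-negative non-decreasing functions is non-decreasing, and orthant indicators specialise (3) to (2)). The correct statement of the standard result is that (2) and (3) are equivalent to each other and to the \emph{upper orthant order} $X \le_{\mathrm{uo}} Y$, which is strictly weaker than $\le_{st}$ in dimension at least two; the three-way equivalence claimed in the lemma can only hold if $\le_s$ is \emph{defined} as the upper orthant order, in which case $(1) \Leftrightarrow (2)$ is definitional and only the $(2) \Leftrightarrow (3)$ direction requires the layer-cake/indicator argument you sketch. You should either flag the lemma as misstated or make explicit which order $\le_s$ refers to before attempting the closing implication.
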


\begin{theorem}[\cite{berge1877topological}]\label{thm:Hausdroff_continuity}
    Let $\mathbb{X}$ and $\mathbb{Y}$ be \textit{Hausdorff topological spaces}. Assume that
    \begin{enumerate}
        \item $\Phi:\mathbb{X}\rightrightarrows \mathbb{K}(\mathbb{X})$ is continuous, where $\mathbb{K}(\mathbb{X}) = \{ F\in\mathbb{S}(\mathbb{X}): F \text{ is compact.}\}$ (i.e. both upper and lower hemiconituous),
        \item $u : \mathbb{X}\times\mathbb{Y}\rightarrow \mathbb{R}$ is continuous.
    \end{enumerate}
    Then the function $v:\mathbb{X} \rightarrow \mathbb{R}$ is continuous and the solution multifunction $\Phi^{\star}:\mathbb{X}\rightarrow\mathbb{S}(\mathbb{Y})$ is upper hemicontinuous and compact valued, where $\mathbb{S}(\mathbb{Y})$ is the set of non-empty subsets of $\mathbb{Y}$.
\end{theorem}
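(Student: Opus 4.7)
The plan is to prove Berge's Maximum Theorem in the standard way by separately establishing (i) continuity of the value function $v(x) \defn \max_{y \in \Phi(x)} u(x,y)$, and (ii) upper hemicontinuity and compact-valuedness of the argmax correspondence $\Phi^{\star}(x) \defn \{ y \in \Phi(x) : u(x,y) = v(x) \}$. First I would verify that $v$ and $\Phi^{\star}$ are well-defined: for each $x \in \mathbb{X}$, $\Phi(x)$ is non-empty and compact and $u(x,\cdot)$ is continuous, so the extreme value theorem gives existence of the maximum and non-emptiness of $\Phi^{\star}(x)$. Compact-valuedness of $\Phi^{\star}$ then follows because $\Phi^{\star}(x)$ is the preimage of the singleton $\{v(x)\}$ under the continuous map $u(x,\cdot)$ restricted to $\Phi(x)$, hence closed in a compact set.

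Next I would establish continuity of $v$ in two halves. For upper semicontinuity, I would fix $x_0 \in \mathbb{X}$ and $\varepsilon > 0$, pick $y_0 \in \Phi^{\star}(x_0)$, and use continuity of $u$ at $(x_0, y_0)$ together with upper hemicontinuity of $\Phi$ at $x_0$ to produce a neighborhood of $x_0$ on which every $y \in \Phi(x)$ lies in a small neighborhood of some point of $\Phi(x_0)$ and on which $u(x,y) < u(x_0, y_0) + \varepsilon = v(x_0) + \varepsilon$ uniformly; taking the supremum over $y \in \Phi(x)$ yields $v(x) \leq v(x_0) + \varepsilon$. For lower semicontinuity, I would use lower hemicontinuity of $\Phi$: given a maximizer $y_0 \in \Phi^{\star}(x_0)$ and $\varepsilon > 0$, lower hemicontinuity supplies, for every net $x_\alpha \to x_0$, a net $y_\alpha \in \Phi(x_\alpha)$ with $y_\alpha \to y_0$, and continuity of $u$ gives $u(x_\alpha, y_\alpha) \to u(x_0, y_0) = v(x_0)$, so $\liminf v(x_\alpha) \geq v(x_0)$. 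Combining both halves gives continuity of $v$.

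Finally I would establish upper hemicontinuity of $\Phi^{\star}$. The cleanest route is via the closed-graph criterion combined with the local compactness supplied by $\Phi$: take any net $(x_\alpha, y_\alpha)$ with $y_\alpha \in \Phi^{\star}(x_\alpha)$ and $x_\alpha \to x_0$, and any cluster point $y_0$ of $(y_\alpha)$ (which exists after passing to a subnet because upper hemicontinuity with compact values implies the $y_\alpha$ are eventually in a compact neighborhood of $\Phi(x_0)$). Upper hemicontinuity of $\Phi$ gives $y_0 \in \Phi(x_0)$, while continuity of both $u$ and $v$ gives $u(x_0, y_0) = \lim u(x_\alpha, y_\alpha) = \lim v(x_\alpha) = v(x_0)$, hence $y_0 \in \Phi^{\star}(x_0)$. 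This is exactly the sequential/net characterization of upper hemicontinuity for a correspondence with compact values in a Hausdorff space.

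The main obstacle is the subnet/compactness argument in the last step, since the theorem is stated for general Hausdorff spaces rather than metrizable ones, so nets replace sequences and one must invoke upper hemicontinuity of $\Phi$ to trap the $y_\alpha$ inside a compact set before extracting a cluster point. Everything else is a routine combination of the extreme value theorem with the hemicontinuity definitions, and the proof can be found in Berge's \emph{Espaces Topologiques} or Aliprantis--Border, \emph{Infinite Dimensional Analysis}, to which we refer for full technical details.
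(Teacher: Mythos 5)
This statement is an imported classical result (Berge's Maximum Theorem): the paper gives no proof of it, merely a citation, so there is no in-paper argument to compare against. Your sketch is the standard textbook proof and is essentially correct: Weierstrass for well-definedness, the two hemicontinuity halves of $\Phi$ for upper and lower semicontinuity of the value function, and the closed-graph-plus-compactness net argument for upper hemicontinuity of $\Phi^{\star}$. One step is stated too loosely: in the upper semicontinuity half you cannot get the required bound $u(x,y) < v(x_0)+\varepsilon$ for \emph{all} $y \in \Phi(x)$ from continuity of $u$ at a single maximizer $(x_0,y_0)$; the standard repair is to cover the compact set $\Phi(x_0)$ by finitely many product neighbourhoods on each of which $u < v(x_0)+\varepsilon$ (using $u(x_0,y') \le v(x_0)$ for every $y' \in \Phi(x_0)$), intersect the finitely many $x$-neighbourhoods, and combine with upper hemicontinuity of $\Phi$. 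That is a routine fix, not a genuine gap. Two remarks on the statement itself that your proof implicitly corrects: the codomain of $\Phi$ should read $\mathbb{K}(\mathbb{Y})$ rather than $\mathbb{K}(\mathbb{X})$, and $v$ is never defined in the theorem as printed --- your definition $v(x) = \max_{y \in \Phi(x)} u(x,y)$ is the intended one; note also that the paper actually invokes the minimisation version throughout (e.g.\ in Proposition~\ref{prop:continuity_alt}), which follows from your argument applied to $-u$.
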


\begin{theorem}[\cite{feinberg2014berge}]\label{thm:upper_lower_hemi}
Assume that
\begin{enumerate}
    \item $\mathbb{X}$ is compactly generated,
    \item $\Phi: \mathbb{X} \rightrightarrows \mathbb{S}(\mathbb{Y})$ is lower hemicontinuous,
    \item $u: \mathbb{X} \times \mathbb{Y} \rightarrow \mathbb{R}$ is $\mathbb{K}$-inf-compact and upper semi-continuous on $\operatorname{Gr}_{\mathbb{X}}(\Phi)$.
\end{enumerate}
Then the function $v: \mathbb{X} \rightarrow \mathbb{R}$ is continuous and the solution multifunction $\Phi^*: \mathbb{X} \rightrightarrows \mathbb{S}(\mathbb{Y})$ is upper hemicontinuous and compact valued.
\end{theorem}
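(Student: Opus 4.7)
This is the generalized Berge Maximum Theorem of Feinberg--Kasyanov--Zadoianchuk, where the classical compact-valuedness and continuity hypotheses are replaced by lower hemicontinuity of $\Phi$, upper semi-continuity of $u$, and $\mathbb{K}$-inf-compactness of $u$ on $\operatorname{Gr}_{\mathbb{X}}(\Phi)$. My plan is to establish continuity of the value function $v(x) := \inf_{y \in \Phi(x)} u(x,y)$ by proving upper and lower semi-continuity separately, and then deduce the compact-valuedness and upper hemicontinuity of $\Phi^{*}(x) := \{ y \in \Phi(x) : u(x,y) = v(x) \}$.

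First, I would establish upper semi-continuity of $v$ using only lower hemicontinuity of $\Phi$ and upper semi-continuity of $u$. Given $x_n \to x$ and any $y \in \Phi(x)$, lower hemicontinuity produces $y_n \in \Phi(x_n)$ with $y_n \to y$. Since $v(x_n) \le u(x_n, y_n)$, upper semi-continuity of $u$ yields $\limsup_n v(x_n) \le u(x,y)$; taking the infimum over $y \in \Phi(x)$ gives $\limsup_n v(x_n) \le v(x)$.

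The delicate step, and the main obstacle, is lower semi-continuity of $v$; this is where $\mathbb{K}$-inf-compactness is essential. I would first observe that compactness of sub-level sets, combined with the Hausdorff assumption, gives closedness of $\{(x',y') \in \operatorname{Gr}(\Phi) : u(x',y') \le \lambda\}$ on compact horizontal slices, and hence yields a form of lower semi-continuity of $u$ on $\operatorname{Gr}(\Phi)$ restricted to any compact $K \subseteq \mathbb{X}$. Given $x_n \to x$, I would choose near-minimizers $y_n \in \Phi(x_n)$ with $u(x_n, y_n) \le v(x_n) + 1/n$, and pass to a subsequence on which $v(x_{n_k}) \to L := \liminf_n v(x_n)$. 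For $K = \{x_n\}_n \cup \{x\}$ compact and any $\lambda > L$, the pairs $(x_{n_k}, y_{n_k})$ eventually lie in the compact set $\{ (x',y') \in K \times \mathbb{Y} : y' \in \Phi(x'), u(x',y') \le \lambda\}$, so a further subsequence converges to some $(x, y^{*})$ with $y^{*} \in \Phi(x)$. The induced lower semi-continuity of $u$ then gives $u(x, y^{*}) \le \liminf_k u(x_{n_k}, y_{n_k}) \le L$, and since $v(x) \le u(x, y^{*})$, we obtain $v(x) \le L = \liminf_n v(x_n)$.

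To conclude, for any $x$, the set $\Phi^{*}(x) = \Phi(x) \cap \{y : u(x,y) \le v(x)\}$ is compact and nonempty by $\mathbb{K}$-inf-compactness applied to the singleton $K = \{x\}$. For upper hemicontinuity of $\Phi^{*}$, any sequence $y_n \in \Phi^{*}(x_n)$ with $x_n \to x$ satisfies $u(x_n, y_n) = v(x_n)$, which is bounded by the now-established continuity of $v$; the same compact-extraction argument places its limit point in $\Phi^{*}(x)$. The hardest aspect of the plan is reconciling the apparently asymmetric hypotheses---upper semi-continuity of $u$ alone is too weak for lower semi-continuity of an infimum value function---and it is precisely the $\mathbb{K}$-inf-compactness assumption that closes the gap by supplying the closedness of sub-level sets needed for the lower-semicontinuous passage to the limit.
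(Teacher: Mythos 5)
The paper does not prove this statement at all: it is imported verbatim as an auxiliary tool from \cite{feinberg2014berge} (a generalisation of Berge's maximum theorem to non-compact image sets) and is used as a black box in Appendix~D, so there is no in-paper proof to compare against. Your reconstruction is, on its own merits, a correct sketch of the standard argument for that result: upper semi-continuity of $v$ from lower hemicontinuity of $\Phi$ plus upper semi-continuity of $u$; lower semi-continuity of $v$ from $\mathbb{K}$-inf-compactness via near-minimisers trapped in a compact sub-level set over $K=\{x_n\}_n\cup\{x\}$, whose closedness supplies the lower semi-continuous passage to the limit; and compactness, nonemptiness, and upper hemicontinuity of $\Phi^{*}$ from the same sub-level-set compactness together with the already-established continuity of $v$. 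Two minor caveats: your argument is written entirely with sequences, whereas the hypotheses are stated for general Hausdorff spaces where lower hemicontinuity and compactness are net-theoretic notions --- this is exactly what the ``compactly generated'' assumption on $\mathbb{X}$ is there to mediate, and a fully rigorous version would either invoke nets or restrict to the metrisable setting (which suffices for every application in this paper, where $\mathbb{X}$ is a subset of a finite-dimensional Euclidean space); and the nonemptiness of $\Phi^{*}(x)$ deserves one explicit line (a minimising sequence lives in the compact sub-level set at level $v(x)+\varepsilon$ and its limit point attains the infimum by the lower semi-continuity you derived). Neither caveat is a gap in the idea; the proposal captures precisely why the asymmetric hypotheses suffice.
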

\begin{lemma}[\cite{ossb}]\label{lemm:differential_existence}
    Let $\mathbb{X}$ be a metric space and $Y$ be a nonempty open subset in $\mathbb{R}^{\numarm}$. Let $u:\mathbb{X}\times Y \rightarrow \mathbb{R}$ and $\frac{\partial u}{\partial y}$ exists and is continuous in $\mathbb{X}\times Y$. For each $y\in Y$, let $x^{\star}(y)$ minimises $u(x,y)$ over $x\in\mathbb{X}$. Set
    \begin{align*}
        v(y) = u(x^{\star}(y),y).
    \end{align*}
    Assume that $x^{\star}: Y \rightarrow \mathbb{X}$ is a continuous function. Then $v$ is continuously differentiable and 
 \begin{align*}
     \frac{d}{dy}v(y) = \frac{\partial u}{\partial y}(x^{\star}(y),y).
 \end{align*}\end{lemma}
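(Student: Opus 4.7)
This is a standard envelope theorem and the plan is to sandwich the difference quotient $[v(y+th)-v(y)]/t$ between two quantities that both converge to $\partial_y u(x^\star(y),y)\cdot h$ as $t\to 0$. The only nontrivial inputs are the continuity of $x^\star$ (given) and the joint continuity of $\partial_y u$ (given), plus the defining optimality of $x^\star$. No convexity or uniqueness of the minimiser beyond what is assumed is needed.

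\textbf{Step 1: Two-sided sandwich via optimality.} Fix $y\in Y$ and a direction $h\in\mathbb{R}^K$. For all sufficiently small $t>0$ we have $y+th\in Y$ since $Y$ is open. By the minimising property of $x^\star(y+th)$ at the parameter $y+th$, and of $x^\star(y)$ at the parameter $y$,
\begin{align*}
u(x^\star(y+th),y+th) &\leq u(x^\star(y),y+th),\\
u(x^\star(y),y) &\leq u(x^\star(y+th),y).
\end{align*}
Subtracting $v(y)=u(x^\star(y),y)$ in the first line and rearranging the second yields
\begin{align*}
u(x^\star(y+th),y+th)-u(x^\star(y+th),y) \;\leq\; v(y+th)-v(y) \;\leq\; u(x^\star(y),y+th)-u(x^\star(y),y).
\end{align*}

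\textbf{Step 2: Pass to the limit using the mean value theorem.} Dividing by $t>0$, the right-hand bound equals $\int_0^1 \partial_y u(x^\star(y),y+sth)\cdot h\,\mathrm{d}s$, and the left-hand bound equals $\int_0^1 \partial_y u(x^\star(y+th),y+sth)\cdot h\,\mathrm{d}s$. As $t\to 0^+$, continuity of $x^\star$ gives $x^\star(y+th)\to x^\star(y)$, and joint continuity of $\partial_y u$ on $\mathbb{X}\times Y$ then lets us pass both integrands to $\partial_y u(x^\star(y),y)\cdot h$. A dominated-convergence argument (or simple uniform continuity on a compact neighbourhood of the segment) yields that both bounds converge to $\partial_y u(x^\star(y),y)\cdot h$. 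The same argument with $-h$ in place of $h$ handles $t\to 0^-$. Hence the directional derivative exists in every direction and equals the linear functional $h\mapsto \partial_y u(x^\star(y),y)\cdot h$, so $v$ is Fr\'echet differentiable at $y$ with $\nabla v(y)=\partial_y u(x^\star(y),y)$.

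\textbf{Step 3: Continuous differentiability.} The map $y\mapsto \nabla v(y)=\partial_y u(x^\star(y),y)$ is the composition of the continuous maps $y\mapsto(x^\star(y),y)$ (continuous by hypothesis) and $(x,y)\mapsto \partial_y u(x,y)$ (continuous by hypothesis), hence continuous on $Y$. This establishes $v\in C^1(Y)$ with the claimed gradient formula.

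\textbf{Main obstacle.} There is no genuinely hard step; the subtlety is making sure the sandwich exchange of limit and derivative is rigorous when $\mathbb{X}$ is only a metric space (no linear structure on $\mathbb{X}$ is needed because the difference is taken in the $y$-variable only with $x$ held fixed on each side). Once the two-sided inequality of Step~1 is in place, the argument is purely the continuity of $\partial_y u$ along the converging sequence $(x^\star(y+th),y+sth)\to(x^\star(y),y)$, which is guaranteed by the hypotheses.
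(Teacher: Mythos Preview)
The paper does not supply its own proof of this lemma; it is stated in the auxiliary results appendix with a citation to \cite{ossb} and no argument given. So there is nothing in the paper to compare your proof against.

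Your proof is the standard envelope-theorem argument and is essentially correct. One small expository point: in Step~2 you conclude ``the directional derivative exists in every direction and equals the linear functional $h\mapsto \partial_y u(x^\star(y),y)\cdot h$, so $v$ is Fr\'echet differentiable.'' Strictly speaking, G\^ateaux differentiability with a linear derivative does not by itself imply Fr\'echet differentiability. The clean fix is already implicit in your Step~3: your argument with $h=e_i$ shows that each partial derivative $\partial v/\partial y_i$ exists and equals $\partial u/\partial y_i(x^\star(y),y)$, and Step~3 shows these partials are continuous on $Y$; a function on an open subset of $\mathbb{R}^K$ with continuous partial derivatives is $C^1$. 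So the conclusion stands, but it would be cleaner to phrase Steps~2--3 in that order rather than asserting Fr\'echet differentiability before invoking continuity of the gradient.
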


\begin{fact}[Existence and Continuity of Minimum]\label{fact:inf_to_min}
    If $X$ and $Y$ are topological spaces and $Y$ is compact. Then for any continuous $f:X\times Y\to\mathbb{R}$, the function $g(x) \triangleq \inf_{y\in Y} f(x,y)$ is well-defined and continuous. Additionally, $\inf_{y\in Y} f(x,y) = \min_{y\in Y} f(x,y)$.
\end{fact}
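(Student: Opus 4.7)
The plan is to split the claim into its two assertions and dispatch each separately. For the first assertion, that $\inf_{y \in Y} f(x,y) = \min_{y \in Y} f(x,y)$, I would fix an arbitrary $x \in X$ and note that the partial evaluation $f(x, \cdot) : Y \to \mathbb{R}$ is continuous since it is the composition of the continuous slice map $y \mapsto (x,y)$ with the jointly continuous $f$. Continuous images of compact sets are compact, so $f(x, Y)$ is a compact subset of $\mathbb{R}$, hence closed and bounded, so its infimum lies in the set and is attained. This simultaneously makes $g(x)$ well-defined.

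For continuity of $g$, the cleanest route is to invoke Berge's maximum theorem (Theorem~\ref{thm:berge}) with the constant correspondence $C(x) \equiv Y$, which is trivially continuous with non-empty compact values. Applying the theorem to $-f$ gives that $-g(x) = \max_{y \in Y} (-f(x,y))$ is continuous, so $g$ itself is continuous. If a self-contained argument is preferred, I would verify lower and upper semi-continuity at an arbitrary $x_0 \in X$ directly. For lower semi-continuity, cover $Y$ by open neighborhoods $V_y$ chosen using joint continuity of $f$ so that $f(x, y') > f(x_0, y) - \varepsilon$ on a product neighborhood $U_y \times V_y$; extract a finite subcover $V_{y_1}, \ldots, V_{y_n}$ using compactness of $Y$, and set $U \defn \bigcap_i U_{y_i}$, on which $g(x) \geq g(x_0) - \varepsilon$. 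For upper semi-continuity, pick $y_0 \in Y$ attaining $g(x_0)$ (available from the first part), use joint continuity of $f$ at $(x_0, y_0)$ to obtain a neighborhood $U'$ of $x_0$ on which $f(x, y_0) < f(x_0, y_0) + \varepsilon$, and conclude $g(x) \leq f(x, y_0) < g(x_0) + \varepsilon$ on $U'$.

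The only subtle step is the finite-subcover argument for lower semi-continuity, which is precisely where the compactness hypothesis on $Y$ enters in an essential way. Joint continuity of $f$, as opposed to mere separate continuity in each variable, is needed so that the product neighborhoods $U_y \times V_y$ furnish uniform-in-$x$ control of $f(x, \cdot)$ on $V_y$; this is the one place where invoking joint rather than partial continuity matters, and it should be flagged explicitly.
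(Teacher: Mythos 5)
Your proof is correct. The paper itself states Fact~\ref{fact:inf_to_min} without proof, listing it among the auxiliary results next to Berge's maximum theorem, so there is no in-paper argument to compare against; your write-up supplies exactly what is missing. Your first assertion (attainment of the infimum, hence well-definedness) via compactness of the continuous image $f(x,Y)\subset\mathbb{R}$ is the standard and correct argument. For continuity, your first route --- Berge's theorem applied to $-f$ with the constant correspondence $C(x)\equiv Y$, which is automatically continuous with non-empty compact values --- is precisely the argument implicit in the paper's toolkit, since Theorem~\ref{thm:berge} is stated immediately nearby and the same device (a ``constant correspondence'') is used repeatedly elsewhere in the appendix (e.g.\ in the proofs of Proposition~\ref{prop:continuity_alt} and Corollaries~\ref{corr:cont_phi1}--\ref{corr:cont_phi2}). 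Your second, self-contained route via lower and upper semicontinuity is also sound: the finite-subcover step for lower semicontinuity and the use of an attaining point $y_0$ for upper semicontinuity are both handled correctly, and your remark that joint (not merely separate) continuity of $f$ is what makes the product neighborhoods $U_y\times V_y$ give uniform-in-$x$ control is a worthwhile point that the paper never makes explicit. No gaps.
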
 

\begin{fact}[$\inf$ and $\min$ over Union of Sets]\label{fact:inf_over_union}
    Let us consider an ordered universe $S$ and a set $A \subset S$ which is union of $|I|$ sets, i.e. $A = \cup_{i \in I} A_i$ and $A_i \subset S$. If the following statements are true:
    \begin{enumerate}
        \item $a \triangleq \inf A$ exists, and 
        \item $a_i \triangleq \inf A_i$ exists for each $i \in I$,
    \end{enumerate}
    they imply that $a= \inf \{a_i : i \in I\}$. The same holds true for $\min$.
\end{fact}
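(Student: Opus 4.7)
The plan is to prove equality by establishing two inequalities, using only the monotonicity of infimum with respect to set inclusion and the fact that every element of the union lies in some $A_i$.

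First, I would show $a \leq \inf\{a_i : i \in I\}$. For each $i \in I$, since $A_i \subseteq A$, any lower bound of $A$ is a lower bound of $A_i$, and in particular $a = \inf A \leq \inf A_i = a_i$. Therefore $a$ is a lower bound of the set $\{a_i : i \in I\}$, which gives $a \leq \inf\{a_i : i \in I\}$.

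Next, I would show the reverse inequality $a \geq \inf\{a_i : i \in I\}$. Pick an arbitrary $x \in A = \bigcup_{i \in I} A_i$; then $x \in A_{i(x)}$ for some index $i(x) \in I$, so $x \geq a_{i(x)} \geq \inf\{a_i : i \in I\}$. Thus $\inf\{a_i : i \in I\}$ is a lower bound for all of $A$, and by definition of $a = \inf A$ as the greatest lower bound, $a \geq \inf\{a_i : i \in I\}$. Combining the two inequalities yields the claim.

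For the $\min$ version, I would note that if $a = \min A$ is attained by some $x^{\star} \in A$, then $x^{\star} \in A_{i^{\star}}$ for some $i^{\star}$, and by the above argument $x^{\star} = a \leq a_{i^{\star}} \leq x^{\star}$, so $a_{i^{\star}} = x^{\star}$ is attained in $A_{i^{\star}}$ and equals $\min\{a_i : i \in I\}$. There is no real obstacle here; the only subtlety is making sure the argument does not implicitly assume $I$ is finite (it does not, since infima of arbitrary sets of reals are well-defined once we assume existence, which is given as a hypothesis).
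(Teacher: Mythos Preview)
Your proof is correct. The paper does not actually supply a proof for this statement; it is listed in the auxiliary results appendix as a standard fact and invoked without justification, so there is nothing to compare against beyond noting that your two-inequality argument (monotonicity of $\inf$ under inclusion, plus the observation that $\inf\{a_i\}$ is a lower bound for all of $A$) is the canonical elementary proof one would expect.
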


\begin{theorem}[Version of Theorem 7 in \cite{kaufmann2021mixture}]\label{thm:Kauffman martinagle stopping}
    Let $\delta>0, \nu$ be independent one-parameter exponential families with mean $\bmu$ and $S \subset[K]$. Then we have,
$$
\mathbb{P}_\nu\left[\exists t \in \mathbb{N}: \sum_{a \in S} \tilde{N}_{t, a} d_{K L}\left(\mu_{t, a}, \mu_a\right) \geq \sum_{a \in S} 3 \ln \left(1+\ln \left(\tilde{N}_{t, a}\right)\right)+|S| T\left(\frac{\ln \left(\frac{1}{\delta}\right)}{|S|}\right)\right] \leq \delta\,.
$$
Here, $\cG: \mathbb{R}^{+} \rightarrow \mathbb{R}^{+}$is such that $\cG(x)=2 \tilde{h}_{3 / 2}\left(\frac{h^{-1}(1+x)+\ln \left(\frac{\pol^2}{3}\right)}{2}\right)$ with
\begin{align*}
\forall u \geq 1, \quad h(u) & =u-\ln (u) \\
\forall z \in[1, e], \forall x \geq 0, \quad \tilde{h}_z(x) & = \begin{cases}\exp \left(\frac{1}{h^{-1}(x)}\right) h^{-1}(x) & \text { if } x \geq h^{-1}\left(\frac{1}{\ln (z)}\right) \\
z(x-\ln (\ln (z))) & \text { else }\end{cases} \,.
\end{align*}
\end{theorem}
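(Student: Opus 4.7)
The approach will follow the mixture-of-martingales methodology of Kaufmann--Koolen, of which this result is a direct instance (their Theorem 7), adapted to the setting of $|S|$ independent one-parameter exponential family arms. The overall plan is: (i) build, for each arm $a \in S$, a non-negative martingale of mean $1$ whose logarithm dominates $\tilde{N}_{t,a}\,d_{KL}(\mu_{t,a},\mu_a)$ minus a slowly growing iterated-logarithm correction; (ii) tensorise these across $a \in S$ using independence; (iii) apply Ville's maximal inequality uniformly in $t$; (iv) invert the resulting tail bound to identify the threshold $\cG$.

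First I would use Cramér--Chernoff duality in the exponential family to write
\begin{equation*}
\tilde{N}_{t,a}\, d_{KL}(\mu_{t,a},\mu_a) \;=\; \sup_{\lambda} \log M^{\lambda}_{t,a}, \qquad M^{\lambda}_{t,a} \;\triangleq\; \exp\bigl(\lambda S_{t,a} - \tilde{N}_{t,a}\,\psi_{\mu_a}(\lambda)\bigr),
\end{equation*}
where $S_{t,a}$ is the cumulative reward of arm $a$ up to time $t$ and $\psi_{\mu_a}$ is the log-mgf of the centred exponential family; by Assumption~\ref{assmpt:single-para-exp}, each $\{M^{\lambda}_{t,a}\}_t$ is a non-negative $\mathbb{P}_\nu$-martingale of mean $1$. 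The pointwise supremum over $\lambda$ is not a martingale, so I replace it by the method-of-mixtures average $\bar{M}_{t,a} \triangleq \int M^{\lambda}_{t,a}\,q_a(\lambda)\,d\lambda$ against a carefully chosen prior $q_a$; this is again a non-negative martingale of mean $1$. A Laplace approximation around the maximiser of $\lambda \mapsto M^{\lambda}_{t,a}$, combined with a scale-invariant choice of $q_a$ (Kaufmann--Koolen use the prior tailored to the exponential family so that the residual term is logarithmic in $\tilde{N}_{t,a}$), yields
\begin{equation*}
\log \bar{M}_{t,a} \;\geq\; \tilde{N}_{t,a}\,d_{KL}(\mu_{t,a},\mu_a) \;-\; 3\ln\!\bigl(1+\ln \tilde{N}_{t,a}\bigr) \;-\; C,
\end{equation*}
for an absolute constant $C$ absorbed into the $\cG$ term. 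This is the step that produces the iterated-logarithm correction $3\ln(1+\ln \tilde{N}_{t,a})$ seen on the right-hand side.

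Next, because $\{\nu_a\}_{a \in S}$ are independent, the product $\bar{M}_t \triangleq \prod_{a \in S} \bar{M}_{t,a}$ is a non-negative $\mathbb{P}_\nu$-martingale of mean $1$. Ville's maximal inequality then gives $\mathbb{P}_\nu[\exists t \in \mathbb{N}:\bar{M}_t \geq 1/\delta] \leq \delta$. Substituting the lower bound on $\log \bar{M}_{t,a}$ and summing over $a \in S$ converts this into exactly the event in the theorem, provided the residual constant $C|S|$ is bounded by $|S|\,\cG(\ln(1/\delta)/|S|)$. Matching this inequality in closed form gives the explicit shape $\cG(x) = 2\,\tilde{h}_{3/2}\bigl((h^{-1}(1+x)+\ln(\pi^2/3))/2\bigr)$: the factor $\pi^2/3$ is the normalisation of the Zeta-like prior mass that splits the failure probability across time strata, $h^{-1}$ inverts the Laplace-type remainder, and the two-regime definition of $\tilde{h}_z$ corresponds to whether the optimiser of $\lambda \mapsto M^{\lambda}_{t,a}$ lies in the interior or on the boundary of the natural parameter range.

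The main obstacle will be step (i): producing a clean Laplace estimate that delivers the $3\ln(1+\ln \tilde{N}_{t,a})$ rate (rather than the weaker $\ln \tilde{N}_{t,a}$ obtained from a fixed-$\lambda$ Chernoff bound, or the sharper $\tfrac{1}{2}\ln\ln$ law-of-iterated-logarithm rate that would require a different time discretisation) and simultaneously produces the explicit inverse-function form of $\cG$. In our paper this step is not redone: we simply invoke Theorem~7 of Kaufmann--Koolen applied to the scalar projections $\bz^\top M_k$, which remain in a one-parameter exponential family by Assumption~\ref{assmpt:single-para-exp} and linearity of the natural parameter, and whose independence across $k$ is inherited from the independence of the arms. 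Thus the whole theorem reduces to verifying the applicability conditions of Kaufmann--Koolen, which is immediate.
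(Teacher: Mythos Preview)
Your proposal is correct and aligns with the paper's treatment: the paper does not prove this statement at all but simply records it as an auxiliary result quoted verbatim from \cite{kaufmann2021mixture}, and you explicitly recognise this in your final paragraph. Your additional sketch of the mixture-martingale argument (Cram\'er--Chernoff duality, method-of-mixtures prior, Ville's inequality, tensorisation) is a faithful summary of how Kaufmann--Koolen actually establish the result, so nothing is missing.
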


\begin{theorem}[\cite{grinberg2017real}]\label{thm:heine-borel}
    For a subset $S$ in Euclidean space $\reals^n$, the statements
    $S$ is compact, i.e every open cover of $S$ has a finite sub-cover $\Longleftrightarrow$ $S$ is closed and bounded.  
\end{theorem}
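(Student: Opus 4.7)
The plan is to prove the Heine--Borel theorem by establishing each direction of the equivalence separately, in the standard way.

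For the forward direction (compact $\Rightarrow$ closed and bounded), I would first show boundedness by considering the open cover $\{B(0,m) : m \in \mathbb{N}\}$ of $S$, where $B(0,m)$ is the open ball of radius $m$ centered at the origin. Compactness gives a finite subcover, and since the balls are nested, $S$ is contained in some $B(0,M)$, so $S$ is bounded. For closedness, I would argue that $\mathbb{R}^n \setminus S$ is open: fix $x \notin S$, and for each $y \in S$ choose disjoint open balls $U_y$ around $y$ and $V_y$ around $x$ with $U_y \cap V_y = \emptyset$ (possible since $\mathbb{R}^n$ is Hausdorff and $x \neq y$). The collection $\{U_y\}_{y \in S}$ covers $S$, so by compactness there is a finite subcover $U_{y_1}, \ldots, U_{y_k}$; then $V \triangleq \bigcap_{i=1}^k V_{y_i}$ is an open neighborhood of $x$ disjoint from $S$, proving $\mathbb{R}^n \setminus S$ is open.

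For the reverse direction (closed and bounded $\Rightarrow$ compact), I would proceed in two steps. First, establish the key lemma that every closed box $[-M,M]^n \subset \mathbb{R}^n$ is compact. This is the main obstacle of the proof and the step that does most of the work; I would argue by contradiction via repeated bisection: assuming an open cover $\mathcal{U}$ admits no finite subcover, bisect the box into $2^n$ sub-boxes, at least one of which also admits no finite subcover from $\mathcal{U}$. Iterating produces a nested sequence of closed boxes with diameters shrinking to zero, whose intersection (by completeness of $\mathbb{R}^n$, i.e., Cantor's intersection theorem applied coordinate-wise) is a single point $x^\ast$. Since $x^\ast$ lies in some $U \in \mathcal{U}$ and $U$ is open, some small box around $x^\ast$ lies in $U$ — contradicting the assumption that none of the nested sub-boxes admits a finite subcover. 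Second, since $S$ is bounded there exists $M>0$ with $S \subseteq [-M,M]^n$, and a closed subset of a compact set is compact (given an open cover of $S$, adjoining the open set $\mathbb{R}^n \setminus S$ yields an open cover of $[-M,M]^n$, from which a finite subcover can be extracted and then restricted to $S$).

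The hard part is the bisection argument for the compactness of the closed box; everything else is essentially bookkeeping with open covers and the Hausdorff property of $\mathbb{R}^n$. Since this is a classical result stated here only as a citation to \cite{grinberg2017real} and used as an auxiliary fact (invoked in Lemma~\ref{lemm:props_pareto_policy} to prove compactness of $\ppolset$), I would not reproduce the full technical detail but simply sketch the bisection-plus-nested-intersection scheme and the standard closed-subset-of-compact argument, referring to any standard real analysis text for completeness.
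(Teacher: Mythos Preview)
Your proposal is correct and follows the standard textbook proof of Heine--Borel (bounded via the nested-ball cover, closed via the Hausdorff separation argument, and the converse via bisection of a closed box plus the closed-subset-of-compact lemma). The paper itself gives no proof at all: the theorem appears in the auxiliary results appendix purely as a cited fact from \cite{grinberg2017real}, so there is nothing to compare against beyond noting that your sketch supplies exactly the kind of standard argument the citation points to.
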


\begin{lemma}[Lemma 18 in ~\citep{garivier2016optimal}]\label{lemm:almost_sure_useful}
For $\alpha \in[1, e / 2]$, any two constants $c_1, c_2$,
$$
x=\frac{1}{c_1}\left[\log \left(\frac{c_2 e}{c_1^\alpha}\right)+\log \log \left(\frac{c_2}{c_1^\alpha}\right)\right]
$$
is such that $c_1 x \geq \log \left(c_2 x^\alpha\right)$. 
\end{lemma}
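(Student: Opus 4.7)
}
The lemma is a purely algebraic assertion, so the plan is a substitution--and--reduce strategy that turns the two--parameter inequality $c_1 x \ge \log(c_2 x^\alpha)$ into a single--variable analytic inequality whose verification uses the range $\alpha\in[1,e/2]$.

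First, I would homogenize the inequality. Setting $y := c_1 x$, the goal inequality rewrites as $y \ge \log(c_2\,(y/c_1)^\alpha)$, i.e.\ $y - \alpha \log y \ge \log(c_2/c_1^\alpha)$. Denote $a := \log(c_2/c_1^\alpha)$, which is the natural scale of the problem (and must be positive for $\log\log(c_2/c_1^\alpha)$ to be well--defined, as the statement implicitly assumes). Under this change of variable, the proposed value of $x$ translates to $y = a + 1 + \log a$, and the claim becomes
\[
(a+1+\log a) - \alpha \log(a + 1 + \log a) \;\ge\; a,
\]
which simplifies to the reduced inequality $\;1 + \log a \;\ge\; \alpha \log(a + 1 + \log a).$

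Second, I would analyse this reduced inequality as a one--variable problem. Define $\varphi_\alpha(a) := 1 + \log a - \alpha \log(a + 1 + \log a)$. Since $\log(a+1+\log a)>0$ for the admissible range of $a$, the map $\alpha \mapsto \varphi_\alpha(a)$ is strictly decreasing, so it suffices to prove $\varphi_{e/2}(a) \ge 0$ at the worst case $\alpha = e/2$. The plan is to establish the sharper inequality
\[
\log(a + 1 + \log a) \;\le\; \tfrac{2}{e}(1 + \log a),
\]
which is equivalent (by exponentiation) to the elementary bound $a + 1 + \log a \le e^{2/e}\, a^{2/e}$. Combined with $\alpha \le e/2$, i.e.\ $\alpha \cdot (2/e) \le 1$, this yields $\alpha\log(a+1+\log a) \le 1 + \log a$, as desired.

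Third, I would prove the elementary bound by standard calculus: define $h(a) := e^{2/e}\, a^{2/e} - a - 1 - \log a$, verify that $h$ is nonnegative on the admissible range by checking its sign at the anchor point $a = e$ (where both sides concentrate) and then tracking the monotonicity of $h$ via $h'(a)$, which reduces to a clean comparison between $(2/e)\,e^{2/e} a^{2/e - 1}$ and $1 + 1/a$.

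The main obstacle will be the last step: the gap between the two sides of the elementary bound is small (indeed, it vanishes in the limit $\alpha \to e/2$ for an appropriate $a$), so the calculus argument has to be done delicately, ruling out sign changes of $h'$ and handling the regime where $\log a$ is negative separately. Once the elementary bound is in place, the chain of reductions collapses back through the substitutions $a = \log(c_2/c_1^\alpha)$ and $y = c_1 x$ to recover exactly the stated inequality $c_1 x \ge \log(c_2 x^\alpha)$.
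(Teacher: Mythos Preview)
The paper does not prove this lemma; it is quoted verbatim as an auxiliary result from Garivier and Kaufmann (2016) in the appendix of cited facts, with no accompanying proof. There is therefore no in-paper argument to compare against, and I can only assess your proposal on its own merits.

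Your reduction is clean and correct up through the one-variable inequality $1+\log a \ge \alpha\log(a+1+\log a)$ with $a=\log(c_2/c_1^\alpha)$ and $y=a+1+\log a$. The trouble starts at the next step. The ``elementary bound'' $a+1+\log a \le e^{2/e}a^{2/e}$ that you plan to establish is false: at your own anchor point $a=e$ the left side is $e+2\approx4.718$ while the right side is $e^{4/e}\approx4.356$, so $h(e)<0$, not $h(e)\ge0$. Worse, the \emph{reduced inequality itself} already fails at $\alpha=e/2$, $a=e$: the left side equals $2$ while the right side is $(e/2)\log(e+2)\approx 2.108$. Tracing back through your substitutions, this is the instance $c_1=1$, $c_2=e^e$, $\alpha=e/2$, for which $c_1 x = e+2 \approx 4.718$ but $\log(c_2 x^\alpha)\approx 4.826$, violating the stated conclusion.

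So the strategy of passing to the worst case $\alpha=e/2$ uniformly in $a$ cannot succeed; as written (for ``any two constants''), the claim does not hold across the full range $\alpha\in[1,e/2]$. A correct argument has to either (i) restrict to $\alpha=1$ --- which is the only case the present paper actually uses when it invokes the lemma in the sample-complexity bounds --- in which case your reduced inequality becomes $ea\ge a+1+\log a$ and is a one-line calculus check for all $a>0$; or (ii) identify and enforce an explicit lower bound on $a$ (equivalently on $c_2/c_1^\alpha$) depending on $\alpha$, and carry that extra hypothesis through the computation.
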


\begin{lemma}[\cite{wang2021fast}]\label{lemm:gamma_func}
    Let $\alpha, \beta \in(0,1)$ and $A>0$.
$$
\int_0^{\infty}\left(\int_{T^\alpha}^{\infty} \exp \left(-A t^\beta\right) d t\right) d T=\frac{\Gamma\left(\frac{1}{\alpha \beta}+\frac{1}{\beta}\right)}{\beta A^{\frac{1}{\alpha \beta}+\frac{1}{\beta}}}
$$
\end{lemma}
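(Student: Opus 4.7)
The plan is to evaluate the double integral by swapping the order of integration and then reducing the remaining single integral to the Gamma function via a standard substitution. Concretely, the region of integration is $R = \{(T,t) : T \geq 0,\ t \geq T^{\alpha}\}$, and since $\alpha \in (0,1)$, the condition $t \geq T^{\alpha}$ is equivalent to $T \leq t^{1/\alpha}$. Thus $R$ can equivalently be described as $\{(T,t) : t \geq 0,\ 0 \leq T \leq t^{1/\alpha}\}$. Fubini's theorem applies because the integrand $\exp(-A t^{\beta})$ is non-negative.

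After swapping the order of integration, the inner integral in $T$ is trivial:
\begin{equation*}
\int_0^{\infty} \int_0^{t^{1/\alpha}} \exp(-A t^{\beta})\, dT\, dt = \int_0^{\infty} t^{1/\alpha}\, \exp(-A t^{\beta})\, dt.
\end{equation*}
I would then apply the substitution $u = A t^{\beta}$, so that $t = (u/A)^{1/\beta}$ and $dt = \frac{1}{\beta A^{1/\beta}}\, u^{1/\beta - 1}\, du$. Under this change of variables, $t^{1/\alpha} = A^{-1/(\alpha\beta)}\, u^{1/(\alpha\beta)}$, and collecting powers of $u$ yields the exponent $1/(\alpha\beta) + 1/\beta - 1$ inside the integral against $e^{-u}$.

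The resulting integral is precisely the defining integral of $\Gamma\!\left(\tfrac{1}{\alpha\beta}+\tfrac{1}{\beta}\right)$, and pulling out the $A$-prefactor gives the claimed closed form $\Gamma\!\left(\tfrac{1}{\alpha\beta}+\tfrac{1}{\beta}\right)/(\beta A^{1/(\alpha\beta)+1/\beta})$. There is no serious obstacle here: the only subtlety worth noting is the justification of Fubini (handled by non-negativity) and verifying that the exponents $1/(\alpha\beta) + 1/\beta$ are strictly positive so that the Gamma integral converges, which holds since $\alpha,\beta \in (0,1)$ and $A>0$. The proof is essentially a one-line computation once the order of integration is swapped.
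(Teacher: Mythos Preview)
Your proof is correct. The paper itself does not provide a proof of this lemma; it is stated as an auxiliary result cited from \cite{wang2021fast}. Your approach (Fubini via non-negativity to swap the order of integration, then the substitution $u = A t^{\beta}$ to reduce to the Gamma integral) is the standard and natural one, and all the exponent bookkeeping you describe checks out.
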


\subsection{Example of $L$-parameter Exponential Family}
\begin{example}{Multivariate Gaussian with unknown mean $\boldsymbol{\mu}$ and known covariance $\boldsymbol{\Sigma}$.}\label{example: Multi-variate_gaussian}

The density function is
$$
f(\mathbf{x})= (2 \pi)^{-\frac{L}{2}} \operatorname{det}(\boldsymbol{\Sigma})^{-\frac{1}{2}} \exp \left(-\frac{1}{2}(\mathbf{x}-\boldsymbol{\mu})^{\top} \boldsymbol{\Sigma}^{-1}(\mathbf{x}-\boldsymbol{\mu})\right)\,.
$$
We can rewrite it as
$$
f(\mathbf{x}) = (2 \pi)^{-\frac{L}{2}} \exp \left(\left\langle\boldsymbol{\Sigma}^{-1} \boldsymbol{\mu}, \mathbf{x}\right\rangle+\left\langle\operatorname{vec}\left(-\frac{1}{2} \boldsymbol{\Sigma}^{-1}\right), \operatorname{vec}\left(\mathbf{xx}^{\top}\right)\right\rangle-\frac{1}{2}\left[\log |\boldsymbol{\Sigma}|+\boldsymbol{\mu}^{\top} \boldsymbol{\Sigma}^{-1} \boldsymbol{\mu}\right]\right)\,.
$$
Thus, the base density is $h(\boldsymbol{X}) = (2 \pi)^{-\frac{L}{2}}$. The natural parameter is $\eta(\boldsymbol{\theta}) = \binom{\boldsymbol{\Sigma}^{-1} \boldsymbol{\mu}}{\operatorname{vec}\left(-\frac{1}{2} \boldsymbol{\Sigma}^{-1}\right)}$. The sufficient statistic is $T(\boldsymbol{x}) = \binom{\mathbf{x}}{\operatorname{vec}\left(\mathbf{x} \mathbf{x}^{\top}\right)}$. The log-normaliser or log-partition function is $\psi(\boldsymbol{\theta}) = \frac{1}{2}\left[\log \operatorname{det}(\boldsymbol{\Sigma})+\boldsymbol{\mu}^{\top} \boldsymbol{\Sigma}^{-1} \boldsymbol{\mu}\right]$.

Note that though $\eta(\btheta)$ is $L+ L^2$ dimensional, as only $\bmu$ is unknown, we refer to it as $L$-parameter exponential family.
\end{example}\newpage
\end{document}